\documentclass{article} 
\usepackage{iclr2026_conference,times}


\usepackage{amsmath,amsfonts,bm}









\def\eqref#1{equation~\ref{#1}}









\def\1{\bm{1}}










\DeclareMathAlphabet{\mathsfit}{\encodingdefault}{\sfdefault}{m}{sl}
\SetMathAlphabet{\mathsfit}{bold}{\encodingdefault}{\sfdefault}{bx}{n}











\newcommand{\R}{\mathbb{R}}



\DeclareMathOperator{\Tr}{Tr}










\DeclareMathAlphabet{\mathsfit}{\encodingdefault}{\sfdefault}{m}{sl}
\SetMathAlphabet{\mathsfit}{bold}{\encodingdefault}{\sfdefault}{bx}{n}






\newcommand{\N}{\mathcal{N}}

\renewcommand{\S}{\mathcal{S}}

\newcommand{\norm}[1]{\left\| #1 \right\|}

\usepackage{amsthm}

\theoremstyle{plain}
\newtheorem{theorem}{Theorem}[section]

\newtheorem{lemma}[theorem]{Lemma}
\newtheorem{corollary}[theorem]{Corollary}
\theoremstyle{definition}
\newtheorem{definition}[theorem]{Definition}

\theoremstyle{remark}

\usepackage{amssymb}
\usepackage[colorlinks, linkcolor=myblue, citecolor=gray]{hyperref}
\usepackage{url}
\usepackage{graphicx}
\usepackage{tkz-graph}
\usepackage{xcolor}
\usepackage{subcaption}
\usepackage{comment}
\usepackage{booktabs}
\usepackage{tcolorbox}
\usepackage{colortbl}
\usepackage{wrapfig}
\usepackage{bbm}

\definecolor{myblue}{RGB}{0, 102, 204}

\title{How Do Transformers Learn to Associate\\Tokens: Gradient Leading Terms Bring\\Mechanistic Interpretability}


\author{Shawn Im$^{1}$, Changdae Oh$^{1}$, Zhen Fang$^{2}$, Sharon Li$^{1}$ \\
$^1$University of Wisconsin--Madison\quad 
$^2$University of Technology Sydney\quad \\
\texttt{\{shawnim,changdae,sharonli\}@cs.wisc.edu},~~~\texttt{zhen.fang@uts.edu.au}\\
}
%

\iclrfinalcopy 
\begin{document}

\maketitle

\begin{abstract} 
Semantic associations such as the link between ``bird'' and ``flew'' are foundational for language modeling as they enable models to go beyond memorization and instead generalize and generate coherent text. Understanding how these associations are learned and represented in language models is essential for connecting deep learning with linguistic theory and developing a mechanistic foundation for large language models. In this work, we analyze how these associations emerge from natural language data in attention-based language models through the lens of training dynamics. By leveraging a leading-term approximation of the gradients, we develop closed-form expressions for the weights at early stages of training that explain how semantic associations first take shape. Through our analysis, we reveal that each set of weights of the transformer has closed-form expressions as simple compositions of three basis functions--bigram, token-interchangeability, and context mappings--reflecting the statistics of the text corpus and uncovering how each component of the transformer captures semantic associations based on these compositions. Experiments on real-world LLMs demonstrate that our theoretical weight characterizations closely match the learned weights, and qualitative analyses further show how our theorem shines light on interpreting the learned associations in transformers.
\end{abstract}

\section{Introduction} \label{sec:introduction}

Large language models (LLMs) based on self-attention have shown strong capabilities in capturing both factual knowledge and qualitative aspects of the human world \citep{grattafiori2024llama, yang2025qwen3, team2024gemma, achiam2023gpt}. This progress has sparked growing interest in understanding why these models work so well and, in particular, what kinds of internal structures emerge during training~\citep{engels2024not, li2023inference, meng2022mass, cunningham2023sparse}. Among these structures, semantic associations are especially foundational to language modeling~\citep{harris1954distributional, firth1957,miller1991contextual}, as they enable models to connect words and concepts in ways that support generalization and coherent text generation. While recent studies have identified specific mechanisms such as induction heads~\citep{olsson2022context}, linear semantic relations~\citep{nanda2023progress}, and topic clustering~\citep{li2023how}, we still lack a principled account of \emph{{how} semantic associations arise during the training of attention-based transformers}.

By semantic associations, we mean the statistical and functional relationships between tokens that encode meaning—for example, the link between ``bird'' and ``flew'', the interchangeability of ``car'' and ``truck'' in adjectival contexts, or the coupling of ``country'' and ``capital''. These associations have long been recognized in linguistics under the lens of distributional semantics~\citep{harris1954distributional}. In modern transformers, such associations are not explicitly programmed but instead emerge through gradient-based optimization over large corpora. Understanding \emph{how} these structures crystallize during training is therefore essential not only for connecting deep learning with linguistic theory but also for developing a mechanistic foundation of representation learning in large language models.

In this work, we develop a theory for the emergence of semantic associations in attention-based language models trained on natural language data, through the lens of training dynamics. A formal analysis of training dynamics is attractive as it allows us to rigorously discuss how modern language models learn features and capabilities. Unfortunately, the training dynamics of transformers are highly complex, which has led prior work to adopt unrealistic assumptions that diverge from practice: (1) synthetic structured language~\citep{li2023how,yang2024training}, (2) simplified model architectures without, \emph{e.g.}, positional encoding or residual connections~\citep{tian2023scan,huang2025non}, and (3) non-standard training, such as sequential component-wise training or partially frozen weights~\citep{bietti2023birth,li2023how}. While these prior works provide valuable theoretical insights, their departures from realistic conditions raise concerns about the generalizability of their insights to LLMs used in practice. In contrast, we ground our study in a more realistic setting by focusing on naturalistic text distributions and attention-based transformers with positional encodings, optimized with a standard training procedure~\citep{brown2020language}. This is essential to minimize the gap between our theory
and practical use.

Our key technical innovation is to analyze training dynamics of the transformer at an early stage, through the leading term of an expansion of the gradients for each set of weights. In particular, transformers are known to acquire many core behaviors early in training--including semantic relations--and persist through convergence~\citep{olsson2022context,elhage2021mathematical,nanda2023progress}. This makes the early phase not only empirically important but also analytically tractable. During this stage, gradient updates admit a closed-form approximation: the leading term dominates parameter updates before higher-order corrections accumulate. Leveraging this, we show that the learned weight matrices (including the output matrix, value matrices, query-key matrices) can be expressed as simple compositions of three basis functions: a \emph{bigram mapping}, which captures next token dependencies; an \emph{interchangeability mapping}, which reflects functional similarity across tokens (e.g., synonyms or shared grammatical roles); and a \emph{context mapping}, which encodes longer-range prefix–suffix co-occurrence.

Through experiments on a natural language dataset, we verify that the learned weights in an attention-based transformer model closely match our theoretical closed-form expressions, and further demonstrate that this holds even beyond the early stage. We also show rich qualitative examples of how each weight component of the transformer captures the actual word-wise semantic associations characterized by our theorem. Furthermore, we verify that our theoretically characterized features are correlated with the behavior of real-world language model. Figure \ref{fig:teasure} depicts an overview of our analysis, and we summarize our \textbf{contributions} as follows:

\begin{figure}
    \centering
    \vspace{-0.4em}
    \includegraphics[width=\linewidth]{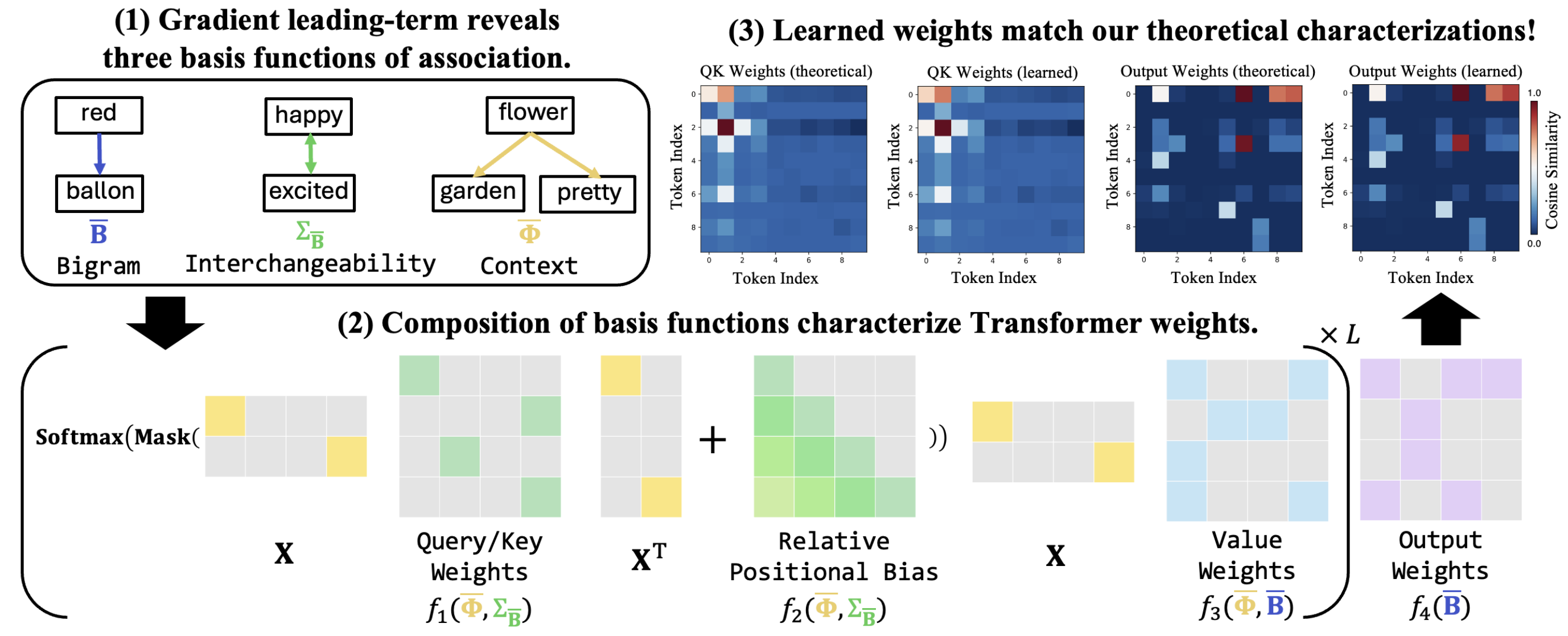}
    \vspace{-1.6em}
    \caption{To understand the emergence of associative features, we analyze the training dynamics of Transformers by focusing on the gradient leading terms for weights, which allows us to identify interpretable basis functions that characterize each weight by their compositions. Empirical validation confirms that our weight characterizations match the actual ones learned in practical transformers.}
    \label{fig:teasure}
    \vspace{-1.3em}
\end{figure}
\begin{enumerate}
\item We present the first explicit characterization of weights in attention-based transformers trained on real-world text corpora under the next-token prediction loss; 
\item We interpret the features learned in weights as compositions of bi-gram, interchangeability, and context mappings, and then show how these basis functions capture semantic association across words; 
\item We finally validate our theoretical interpretation on both self-attention models and practical LLM, demonstrating the generality and relevance of our theorems.
\end{enumerate}

\section{Related Works} \label{sec:related_work}

\paragraph{Understanding emergence of features in Transformers.} Many works have considered the training dynamics of transformers under controlled settings to interpret their feature learning~\citep{tian2023scan,bietti2023birth,nichani2024transformers,kim2024transformers}. A line of them investigates how low-level associative features, such as bigram structure~\citep{bietti2023birth}, cyclic structure~\citep{huang2025non}, and co-occurrence~\citep{tian2023scan,yang2024training}, are learned from data. There are also multiple works that analyze how high-level capabilities, such as chain-of-thought~\citep{kimtransformers}, topic clustering~\citep{li2023how, jiang2024origins}, reasoning or memorization~\citep{yao2025analysis}, and in-context learning capability~\citep{nichani2024transformers,bietti2023birth, wang2024transformers,kim2024transformers,edelman2024evolution}, are obtained during training. Although insightful, they often assume structured or abstract language data~\citep{li2023how,nichani2024transformers,yang2024training}, unrealistic model architecture~\citep{tian2023scan,cui2024phase,troiani2025fundamental}, and adjusted training strategies far from practice~\citep{bietti2023birth,kim2024transformers,huang2025non}, which depart from reality. In contrast, our theoretical analysis is grounded in natural language data, realistic architecture, and a standard training strategy. As a result, our theory substantially reduces the gap between formal analysis and practical use, which is further corroborated by our empirical validations.

\paragraph{Understanding feature learning beyond Transformers.} Recent work has also explored how models learn data-dependent features through dynamics for non-transformer models as well~\citep{dandi2023two, ba2022high, mousavi2023gradient}. However, this line of work similarly considers abstractions of language, such as Gaussian data~\citep{ba2022high}, single or multi-index models~\citep{damian2024computational, dandi2023two}, or spiked models~\citep{wang2024nonlinear, mousavi2023gradient}, and considers measures of data complexity with Hermite expansions~\citep{bietti2022learning, damian2024computational, lee2024neural}. On the contrary, we adopt a realistic theoretical setup to analyze features in transformers, which remains the dominant architecture in practice. 

\section{Preliminary} \label{sec:preliminary} 

\subsection{Problem Statement}

Semantic associations are foundational for language models: they enable models to go beyond memorizing sequences and instead generalize across contexts~\citep{hinton1984distributed}, infer latent structure~\citep{wu2018learning}, and generate coherent text. Despite their importance, the mechanisms by which transformers acquire these associations during training remain poorly understood. Towards a \textit{mechanistic and theory-grounded interpretation} of LLMs in a more realistic setup, we pose the question:
\vspace{-0.1mm}
\begin{center}
\fbox{
  \parbox{0.8\textwidth}{
    \centering
    \textit{\textbf{How do semantic associations emerge during the training of attention-based language models on natural language data?}}
  }
}
\end{center}
It is worth noting that we focus here on general natural language data, rather than synthetically structured or abstractive language, which has been considered in previous works~\citep{yang2024training, nichani2024transformers,huang2025non}. This is essential to minimize the gap between our theory and practical use, since real-world text is highly diverse and is not restricted to a specific structure. In addition, prior studies \citep{olsson2022context,elhage2021mathematical,nanda2023progress} have shown that critical semantic and reasoning abilities, such as induction heads and linear semantic relations, can already emerge in the early stage and be preserved through convergence. 
This makes the early stage of training a natural and necessary focus for theoretical analysis, which we now develop.

\subsection{Model Architecture}
Prior works have analyzed the training dynamics of attention-based models under simplifying assumptions, such as restricting attention to low rank~\citep{cui2024phase}, removing causal masking~\citep{tian2023scan, yang2024training}, without positional encodings~\citep{bietti2023birth} or residual streams~\citep{huang2025non}. In line with \cite{nichani2024transformers}, we study an attention-based architecture that retains these components: positional encodings, causal masking, and residual streams. To further align with practice, we employ a relative positional encoding scheme, as in T5~\citep{raffel2020exploring}, rather than augmenting embeddings with absolute position vectors. We begin by introducing the necessary notation before formally defining the transformer computation.

Let $\mathcal{V}=\{\mathbf{e}_1,...,\mathbf{e}_j,...,\mathbf{e}_{|\mathcal{V}|}\}$ denote the set of vocabulary. For an input sequence of length $T$, we represent the input as a matrix $\mathbf{X} \in \mathbb{R}^{T \times |\mathcal{V}|}$, where each row of $\mathbf{X}$ is the one-hot encoding of the $t$-th token in the sequence. In an $L$-layer transformer, the parameters associated with self-attention are given by $\{\mathbf{W}^{(l)}, \mathbf{P}^{(l)}, \mathbf{V}^{(l)}\}_{l=1}^L$ together with $\mathbf{W}_O$, where $\mathbf{W}^{(l)}\in\mathbb{R}^{|\mathcal{V}|\times |\mathcal{V}|}$ is the key–query matrix of layer $l$, $\mathbf{V}^{(l)}\in\mathbb{R}^{|\mathcal{V}|\times |\mathcal{V}|}$ is the value matrix, $\mathbf{P}^{(l)}\in\mathbb{R}^{T}$ is the learned relative positional encoding, and $\mathbf{W}_O\in\mathbb{R}^{|\mathcal{V}|\times |\mathcal{V}|}$ is the output matrix. The model with input $\mathbf{X}$ is defined as follows.

\begin{definition}[Attention-Based Transformer] \label{def:transformer}
    \emph{Given an input matrix $\mathbf{X} \in \R^{T \times |\mathcal{V}|}$, the $L$-layer attention-based transformer with parameters $\Theta={\{\mathbf{W}^{(l)}, \mathbf{P}^{(l)}, \mathbf{V}^{(l)}\}_{l=1}^L}\cup \{\mathbf{W}_O\}$ is defined as
    \begin{equation}
\mathbf{F}_\Theta(\mathbf{X}) = \mathbf{h}^{(L)} \mathbf{W}_O, 
\end{equation}
where
$\mathbf{h}^{L}$ is defined by the recurrence relation, i.e.,
\begin{equation}
\mathbf{h}^{(l)} = \mathbf{h}^{(l-1)} + \S(\text{Mask}(\mathbf{h}^{(l-1)}\mathbf{W}^{(l)}\mathbf{h}^{(l-1)\top} + \text{DM}(\mathbf{P}^{(l)})))\mathbf{h}^{(l-1)} \mathbf{V}^{(l)}~\text{and}~\mathbf{h}^{(0)} = \mathbf{X},
\end{equation}
where
$\mathcal{S}(\cdot)$ represents the softmax function, $\text{DM}(v)$ maps the $i$th element of $v$ to the $(-i+1)$th subdiagonal, and $\text{Mask}(\cdot)$ denotes the operator of attention mask.} 
This architecture is in line with~\cite{nichani2024transformers}, and recent work shows that self-attention–only models can match the performance of architectures with MLP layers~\citep{wang2025attention}.
\end{definition}

\subsection{Training Setup}

\textbf{Learning objective.} To align with standard language modeling practice and ensure comparability with prior works~\citep{huang2025non, nichani2024transformers}, we adopt the standard cross-entropy objective: given $N$ input matrices $\mathbf{X}_1, ..., \mathbf{X}_N$ with sequence length $T$ and corresponding output matrices $\mathbf{Y}_1,..., \mathbf{Y}_N$, where $\mathbf{Y}_i \in \R^{T \times |\mathcal{V}|}$, the objective function is defined as
\begin{equation}
	\mathcal{L}(\Theta) = \frac{-1}{NT} \sum_{i=1}^N \sum_{t=1}^T \log \S(\mathbf{F}_\theta(\mathbf{X}_i)^{[t]})\mathbf{Y}_i^{[t]\top},
\end{equation}
where $\mathbf{M}^{[t]}$ denotes the $t$-th row of a matrix $\mathbf{M}$ and $\mathbf{Y}_i^{[t]}$ corresponds to the one-hot embedding for the $t+1$-th token of the sequence corresponding to $\mathbf{X}_i$. 

\textbf{Gradient descent.} We analyze the evolution of the parameters under full-batch gradient descent with a constant learning rate $\eta$. Under gradient descent, the parameters are updated as follows:
\begin{equation}
    \Theta(t) = \Theta(t-1) - \eta \nabla_{\Theta} \mathcal{L}(\Theta).
\end{equation}
Due to the nonlinear complexities of the gradient, deriving an exact form for even one of the weight matrices after $t$ steps is challenging. We address this challenge by considering a leading-order approximation technique, allowing for a closed-form expression of the gradients and weights while yielding a tight approximation of the full gradient.

\section{Theoretical Analysis} \label{sec:theory}

In Section \ref{Sec::4.1}, we provide theorems demonstrating that the weights of attention-based transformers remain close to their gradient leading terms for $O(1/\eta)$ steps under both zero and Gaussian initializations. Then, Section \ref{Sec::4.2} uncovers how three basis functions, which are crucial to express token associations and language structure, are encapsulated in those gradient leading terms, and how these three functions are compounded to shape the desiderata of the transformers' weight matrices. 
\begin{figure}[!h]
    \centering
    \vspace{-0.5em}
    \includegraphics[width=0.95\linewidth]{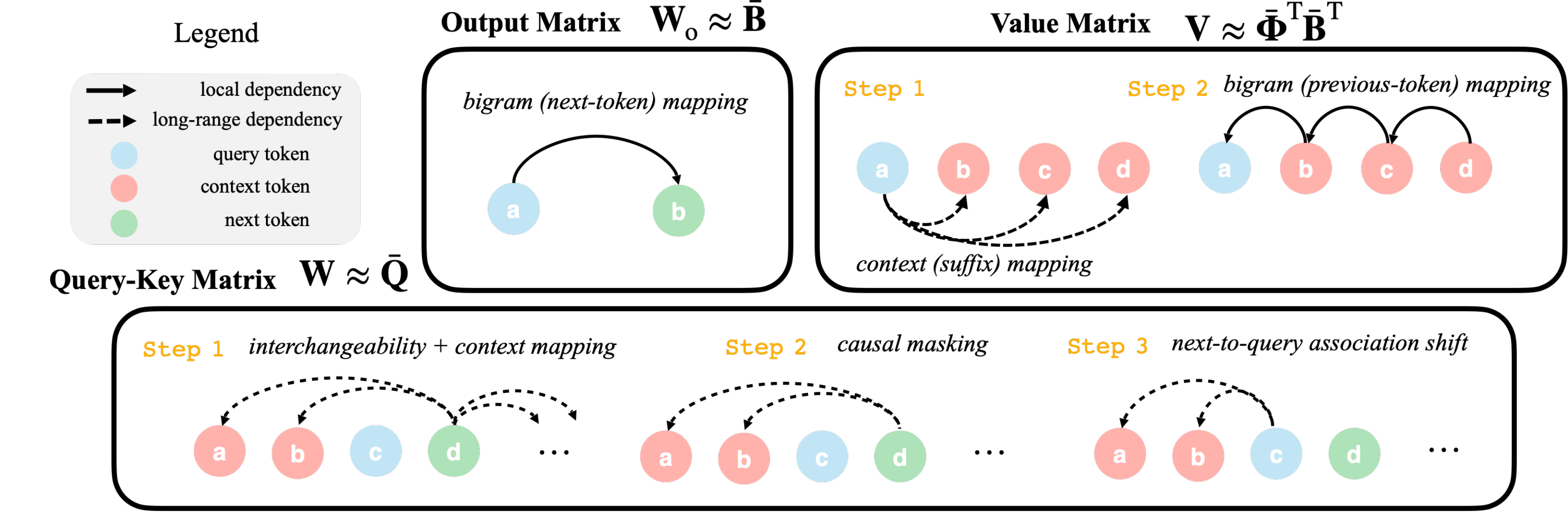}
    \vspace{-0.5em}
    \caption{\textbf{Illustration of theoretical results.} We characterize weight matrices of the attention-only transformer as compositions of three basis functions: bigram mapping, interchangeability mapping, and context mappings. We illustrate how these mappings are composed across weight matrices to learn semantic associations between a given query token and its surrounding text.} 
    \label{fig:thm_illustration}
    \vspace{-1em}
\end{figure} 

\subsection{Main Theorems}\label{Sec::4.1}

Under the setup described in Sec.~\ref{sec:preliminary}, we obtain the following results for attention-based transformers.
\begin{theorem}\label{thm:early-feature}
	(Informal) Given an attention-based transformer (Def.~\ref{def:transformer}) under sufficiently small Gaussian initialization, with $L \leq \sqrt{T}/4$, after $s$ gradient descent steps with learning rate $\eta \geq \frac{1}{T}$, if $s \leq \eta^{-1} \min(\frac{5}{8\sqrt{T}}, \frac{1}{12L})$, then for all layers $l=1,\dots,L$,
	\begin{equation}\label{eq:output_characterization} 
		\norm{\mathbf{W}_O - s\eta \bar{\mathbf{B}}}_F \leq 3 s^2\eta^2,
	\end{equation}
	\begin{equation}\label{eq:value_characterization}
		\norm{\mathbf{V}^{(l)} - \binom{s}{2}\eta^2 \mathbf{\bar{\Phi}}^\top \mathbf{\bar{B}}^\top}_F \leq 12 s^3 \eta^3,
	\end{equation}
	\begin{equation}\label{eq:attention_characterization}
		\norm{\mathbf{W}^{(l)} - \left(3 \binom{s}{4} + 2\binom{s}{3} \right)\eta^4 \mathbf{\bar{Q}}}_F \leq 13 s^5 \eta^5 T,
	\end{equation}
	\begin{equation}\label{eq:pe_characterization}
		\norm{\mathbf{P}^{(l)} - \left(3 \binom{s}{4} + 2\binom{s}{3} \right)\eta^4 \mathbf{\Delta}}_F \leq 13 s^5 \eta^5 T,
	\end{equation}
where $\|\cdot \|_{F}$ is the Frobenius norm, $\mathbf{\bar{B}}$ corresponds to a bigram statistic, $\mathbf{\bar{\Phi}}$ corresponds to a context co-occurrence statistic, $\mathbf{\bar{Q}}$ corresponds to a token-to-token correlation based on a composition of $\mathbf{\bar{B}}$ and $\mathbf{\bar{\Phi}}$, and $\mathbf{\Delta}$ corresponds to a relative position correlation based on the same feature as $\mathbf{\bar{Q}}$. 
\end{theorem}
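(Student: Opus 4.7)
I would proceed by induction on the gradient step $s$, carrying a joint invariant on all four families of weights and tracking the Frobenius-norm deviations from the leading-term expressions. Under a sufficiently small Gaussian initialization, every softmax row is $\varepsilon$-close to the uniform distribution $\tfrac{1}{T}\mathbf{1}$, every attention-head output is approximately the mean of the residual stream, and $\mathbf{h}^{(l)}\approx \mathbf{X}$ at all layers (since $\mathbf{V}^{(l)}\approx \mathbf{0}$ makes the residual branch vanishing). The plan is to expand the loss gradient at each step in powers of the current weight magnitudes, identify the unique lowest-order term in $\eta$ that survives for each weight matrix, and then argue by a discrete Gr\"onwall-type bookkeeping that the accumulated higher-order remainders stay below the stated tolerances for $s=O(\eta^{-1})$.

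\textbf{Base case: the output matrix.} At $t=0$, only $\nabla_{\mathbf{W}_O}\mathcal{L}$ is nontrivial: since $\mathcal{S}(\mathbf{F}_\Theta(\mathbf{X}_i)^{[t]})\approx \tfrac{1}{|\mathcal{V}|}\mathbf{1}$, the softmax derivative produces, up to $O(\|\Theta\|^2)$, a term of the form $-\tfrac{1}{NT}\sum_{i,t}(\mathbf{h}^{(L)[t]})^\top(\mathbf{Y}_i^{[t]}-\tfrac{1}{|\mathcal{V}|}\mathbf{1})$, which with $\mathbf{h}^{(L)}\approx\mathbf{X}$ collapses to the bigram statistic $\mathbf{\bar{B}}$. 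So one step gives $\mathbf{W}_O(1)=\eta \mathbf{\bar{B}}+O(\eta^2)$. For gradients of $\mathbf{V}^{(l)}$, $\mathbf{W}^{(l)}$, and $\mathbf{P}^{(l)}$ at $t=0$, one multiplies by $\mathbf{W}_O\approx \mathbf{0}$ and/or by derivatives of softmax around the uniform row, giving gradients of orders $\eta$, $\eta^3$, and $\eta^3$ respectively at step one (they are zero at step zero). Feeding the freshly updated $\mathbf{W}_O$ back in and iterating reveals the telescoping structure: at step $t$, the dominant contribution to $\nabla_{\mathbf{V}^{(l)}}\mathcal{L}$ scales like $t\eta$ (from $\mathbf{W}_O(t)$), integrating to $\sum_{t=0}^{s-1} t\eta^2 = \binom{s}{2}\eta^2$ times $\mathbf{\bar{\Phi}}^\top\mathbf{\bar{B}}^\top$; whereas $\nabla_{\mathbf{W}^{(l)}}\mathcal{L}$ and $\nabla_{\mathbf{P}^{(l)}}\mathcal{L}$ scale like $t^3\eta^3$ (one factor from the softmax derivative around uniform, one from $\mathbf{V}^{(l)}$, two from composing with $\mathbf{W}_O$ on both sides), integrating to $\sum t^3\eta^4=(3\binom{s}{4}+2\binom{s}{3}+\binom{s}{2})\eta^4$ after reindexing, matching the stated coefficients once the $\binom{s}{2}$ summand is absorbed into the remainder.

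\textbf{Inductive step and composition structure.} The inductive hypothesis at step $s$ asserts \eqref{eq:output_characterization}–\eqref{eq:pe_characterization}. To advance to step $s+1$, I substitute the hypothesis into each gradient and Taylor-expand the softmax around its value at the previous step. Two ingredients are crucial: (i) the fact that $\mathbf{h}^{(l)}-\mathbf{h}^{(l-1)}=O(\eta^2)$ (from \eqref{eq:value_characterization}) makes all layer indices interchangeable up to the stated remainder, which is why the same leading expression appears at every $l$ and why the depth assumption $L\leq \sqrt{T}/4$ enters only through the cumulative depth error $O(Ls^2\eta^2)$; and (ii) the identification that when one composes an attention readout that queries through $\mathbf{\bar{B}}$ with a value stream carrying $\mathbf{\bar{\Phi}}^\top\mathbf{\bar{B}}^\top$, the resulting query–key statistic is precisely $\mathbf{\bar{Q}}$, while the dependence on relative position aggregates into $\mathbf{\Delta}$. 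After plugging the induction hypothesis into the update rule, the change in each weight at step $s+1$ matches the finite difference $\binom{s+1}{k}-\binom{s}{k}$ of the predicted expression, and a triangle inequality closes the induction provided the step-wise remainder is bounded by the claimed cubic/quintic tolerances.

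\textbf{Main obstacle.} The delicate part is the uniform control of the softmax Taylor remainder across $s=O(\eta^{-1})$ steps while the pre-softmax logits are still growing. I expect to need an $\ell^\infty$ bound on the logits $\mathbf{h}^{(l-1)}\mathbf{W}^{(l)}\mathbf{h}^{(l-1)\top}+\mathrm{DM}(\mathbf{P}^{(l)})$ of order $s^4\eta^4 T$, together with the second-derivative estimate $\|\nabla^2\mathcal{S}\|_\infty=O(1/T^2)$ about the uniform row, to certify that the linearization error per step is $O(s^4\eta^4)$; summing over $s$ steps and $L$ layers then gives the final $s^5\eta^5 T$ bounds in \eqref{eq:attention_characterization}–\eqref{eq:pe_characterization}. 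A secondary technical burden is showing that the Gaussian initialization noise—although it seeds all four weight matrices with random $O(\sigma)$ contributions—contributes only $O(\sigma)$ to the Frobenius deviation and is thus absorbed by the leading polynomial remainder provided $\sigma$ is taken sufficiently small in $\eta$, which is what the ``sufficiently small Gaussian initialization'' hypothesis supplies.
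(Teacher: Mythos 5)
Your high-level plan (induction on the step index $s$, identifying the lowest-order surviving term in each gradient, and integrating the telescoping leading terms to get the binomial coefficients) is the same strategy the paper uses. The base case for $\mathbf{W}_O$ and the observation that $\nabla_{\mathbf{V}^{(l)}}\mathcal L$ at step $t$ is $\approx t\eta\,\mathbf{\bar\Phi}^\top\mathbf{\bar B}^\top$, so that the sum $\sum_{t<s} t\eta^2 = \binom{s}{2}\eta^2$, is correct and matches the paper. However, several of your pieces are wrong or missing in ways that would prevent the argument from closing.

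First, your power-counting for the $\mathbf{W}^{(l)}$ and $\mathbf{P}^{(l)}$ gradients is incorrect. The relevant factor inside the einsum is $R_i\,\mathbf{W}_O^\top\,\mathbf{V}^{(l)\top}\,\mathbf{h}^{(l-1)\top}$; there is a \emph{single} $\mathbf{W}_O$ ($\sim t\eta$), and $\mathbf{V}^{(l)}$ already carries its own $\sim\binom{t}{2}\eta^2$, while $R_i$ is $O(1)$. So the per-step gradient scales as $t\binom{t}{2}\eta^3 = (3\binom{t}{3}+2\binom{t}{2})\eta^3$, not ``one softmax factor, one $\mathbf{V}$, two $\mathbf{W}_O$''; and the sum is then \emph{exactly}
\[
\sum_{t=0}^{s-1}\bigl(3\textstyle\binom{t}{3}+2\binom{t}{2}\bigr)
= 3\binom{s}{4}+2\binom{s}{3}
\]
by the hockey-stick identity — there is no leftover $\binom{s}{2}$ to absorb into the remainder. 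Your claimed $\sum t^3 = (3\binom{s}{4}+2\binom{s}{3}+\binom{s}{2})$ has both the wrong summand and the wrong sum, so the coefficient match is accidental rather than derived.

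Second, the multi-layer structure is more than ``$\mathbf{h}^{(l)}-\mathbf{h}^{(l-1)}=O(\eta^2)$ makes layers interchangeable.'' That controls the forward pass, but the theorem also requires the \emph{backward} pass. The paper introduces the layer-wise upstream gradient $G_i^{(l)}$ and proves a recurrence $G_i^{(l-1)} = G_i^{(l)} + A_i^{(l)\top}G_i^{(l)}V^{(l)\top} + S_i^{(l)}h_i^{(l-1)}W^{(l)\top} + S_i^{(l)\top}h_i^{(l-1)}W^{(l)}$, then bounds the deviation $D_{G,l}=\|G_i^{(l)} - s\eta(Y_i-U_O)\bar B^\top\|_F$ by unrolling this recurrence. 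The assumption $L\le\sqrt T/4$ is what keeps $D_{G,l}$ under control across depth; it does not enter only through cumulative forward drift as you suggest. This piece is essential and absent from your proposal.

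Third, the softmax control: the paper does not use a second-derivative estimate $\|\nabla^2\mathcal S\|_\infty = O(1/T^2)$ at uniform. Its key technical lemma (Lemma~\ref{lem:jacob}) is a bound on the softmax row Jacobian norm by $1/\sqrt t$ for $t$ unmasked positions, and the perturbation $\|J_{t,i}-J_t\|$ is then bounded directly in terms of $\|A_i[t,:]-A_0[t,:]\|$ via the explicit formula $J_t = \mathrm{Diag}(A)-A^\top A$. That linear-perturbation route is what produces the $1/\sqrt t$ dimension dependence in the Frobenius bounds; a second-order Taylor bound at uniform would not obviously reproduce those constants, and you would need a separate argument that the logits stay in the Taylor neighborhood.

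Finally, the paper handles Gaussian initialization by proving the \emph{zero}-initialization multi-layer theorem first (Theorem~\ref{thm:early-multi-L}) and then, via concentration bounds on the Gaussian weight matrices (Lemmas~\ref{lem:gauss-init} and~\ref{lem:gauss-init-fro}), showing that after one gradient step the iterates already satisfy the zero-init inductive hypothesis with $v\le\eta^2/T^2$. Your ``$O(\sigma)$ absorbed in the remainder'' statement is the right intuition but skips the actual probabilistic argument and the reduction to the clean base case, which is where the stated constants come from.
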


The above Theorem shows that any finite-depth $L$-layer attention-based transformer (Def.~\ref{def:transformer}) has the same characterization for its weights uniformly across all layers under a zero-initialization (Theorem~\ref{thm:early-multi-L}) and a small Gaussian initialization (Theorem~\ref{thm:early-feature}), suggesting that all layers of the model capture common associative features from natural language as a starting point before evolving differently as training progresses (Figure~\ref{fig:pythia-heatmap}).
As seen in Figure~\ref{fig:thm_illustration}, compositions of these features form the leading terms of the output matrix ($\mathbf{\bar{B}}$), value matrix ($\mathbf{\bar{\Phi}}^\top \mathbf{\bar{B}}^\top$), and query-key matrix ($\mathbf{\bar{Q}}$). We walk through these matrices in Section~\ref{Sec::4.2.1} and how they form the weights of the model in Section~\ref{Sec::4.2.2}. The formal theorem and proofs are in Appendix~\ref{sec:apdx:proof}.

\vspace{-0.1em}
\subsection{Interpretation of Theorems}\label{Sec::4.2}

In the previous section, we showed that the model parameters can be approximated by key corpus statistics $\mathbf{\bar{B}}, \mathbf{\bar{\Phi}}, \mathbf{\bar{Q}}$ and $\mathbf{\Delta}$. Now, we discuss the definitions of these statistics by first introducing \textit{\textbf{three basis functions}} and explaining how \textit{\textbf{their composition characterizes the model's behavior}}.

\definecolor{graphblue}{HTML}{53C0DB}
\definecolor{graphgreen}{HTML}{25CF4A}
\definecolor{graphorange}{HTML}{FCCD12}

\subsubsection{Three basis functions shaping associative features}\label{Sec::4.2.1}

\paragraph{(1) Bigram mapping $\mathbf{\bar{B}}$.}
The $(i,j)$-th element in $\mathbf{\bar{B}}_{ij}$ corresponds to a correlation between token $\mathbf{e}_i$ and token $\mathbf{e}_j$ based on how likely $\mathbf{e}_i$ is to be directly followed by $\mathbf{e}_j$ as a bigram. More precisely,
 \begin{equation}\label{eq:bigram-like}
        \bar{B}_{ij} = \mathcal{P}_t(\mathbf{e}_i) \mathcal{P}_t (\mathbf{e}_j | \mathbf{e}_i) - \mathcal{P}_t(\mathbf{e}_i)/|\mathcal{V}|,
    \end{equation}
 where $\mathcal{P}_t(\mathbf{e}_i)$ is the relative frequency of $\mathbf{e}_i$ over all tokens in the dataset $\mathbf{X}_1, ..., \mathbf{X}_N$ and $\mathcal{P}_t(\mathbf{e}_j | \mathbf{e}_i)$ is the relative frequency of $\mathbf{e}_j$ given that the previous token was $\mathbf{e}_i$. The product between $\mathcal{P}_t(\mathbf{e}_i)$ and $\mathcal{P}_t (\mathbf{e}_j | \mathbf{e}_i)$ forms an estimate of the likelihood of $\mathbf{e}_i$ followed by $\mathbf{e}_j$ appearing as a bigram and the second term $-{\mathcal{P}_t(\mathbf{e}_i)}/{|\mathcal{V}|}$ simply acts as a centering term such that each row sums to $0$.
   
\paragraph{(2) Interchangeability mapping $\mathbf{\Sigma}_{\mathbf{\bar{B}}}$.} We study $\mathbf{\Sigma}_{\mathbf{\bar{B}}}=\mathbf{\bar{B}}^\top \mathbf{\bar{B}}$, the correlation matrix of $\mathbf{\bar{B}}$, which captures correlations between pairs of tokens based on a frequency-weighted similarity of their previous-token distributions. From Eq.~(\ref{eq:bigram-like}), neglecting the centering terms, the $(i,j)$-th element of $\mathbf{\Sigma}_{\mathbf{\bar{B}}}$ can be represented as
    \begin{equation}\label{Eq::11}
        \underbrace{\mathcal{P}_t(\mathbf{e}_i)\mathcal{P}_t(\mathbf{e}_j)}_{\text{Frequency weighting}} \sum_{k=1}^{|\mathcal{V}|} \underbrace{\mathcal{P}_t (\mathbf{e}_k^{\leftarrow} | \mathbf{e}_i) \mathcal{P}_t (\mathbf{e}_k^{\leftarrow} | \mathbf{e}_j)}_{\text{Previous token similarity}}.
    \end{equation}
    In essence, Eq. (\ref{Eq::11}) shows that $\mathbf{\Sigma}_{\mathbf{\bar{B}}}$ captures a symmetric relationship between tokens based on how similar of a function or role they play across different contexts. Specifically, in Eq. (\ref{Eq::11}), we can see that the corresponding row, which acts as a feature for token $\mathbf{e}_i$ captures its associations with \textbf{interchangeable} tokens captured by the previous token similarity factor and \textbf{frequent} tokens captured by the frequency weights. Similarities in previous token distributions are an indicator of functional similarities or interchangeability, as this captures structural patterns such as nouns being preceded by articles or adjectives and objects being preceded by common descriptors.
    This interchangeability map, $\mathbf{\Sigma}_{\mathbf{\bar{B}}}$, acts a building block of characterizations for the weights $\mathbf{W}^{(l)}$ and $\mathbf{P}^{(l)}$ as illustrated in Figure~\ref{fig:thm_illustration}. We depict a simple example of a word-wise correlation captured by $\mathbf{\Sigma}_\mathbf{\bar{B}}$ in Figure~\ref{fig:teasure}. 

\paragraph{(3) Context mapping $\mathbf{\bar{\Phi}}$.}  The $(i,j)$-th element of $\mathbf{\bar{\Phi}}$ corresponds to a correlation between token $\mathbf{e}_i$ and $\mathbf{e}_j$ based on how likely $\mathbf{e}_j$ is to appear as a prefix of $\mathbf{e}_i$.  This can be written as
    \begin{equation}
        \frac{1}{T} \sum_{k=1}^T \frac{1}{k} \sum_{m=1}^k \mathcal{P}_t(\text{the}~k+1~\text{-th token}~\text{is}~\mathbf{e}_i, \text{the}~m~\text{-th token}~\text{is}~\mathbf{e}_j) - \mu_j,
    \end{equation}
    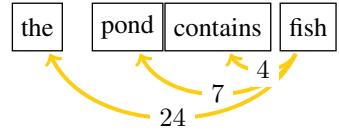
\begin{wrapfigure}{r}{0.35\linewidth}
    \centering
    \vspace{-0.65em}
    \begin{tikzpicture}[x=4mm, y=5mm]
        \SetUpEdge[lw         = 1.5pt,
                color      = graphorange,
                labelcolor = white]
        \GraphInit[vstyle=Normal] 
        \SetGraphUnit{3}
        \tikzset{VertexStyle/.append  style={fill, shape=rectangle}}
        \Vertex{the}
        \EA(the){pond} \EA(pond){contains}\EA(contains){fish}
        \tikzset{EdgeStyle/.style={->}}
        \tikzset{EdgeStyle/.style={->,bend left=65, shorten >=1pt, shorten <=1pt}}
        \Edge[label=$7$](fish)(pond)
        \Edge[label=$24$](fish)(the)
        \Edge[label=$4$](fish)(contains)
        \tikzset{EdgeStyle/.style={->,bend right=45, shorten >=2pt, shorten <=2pt}}
    \end{tikzpicture}
    \vspace{-0.6em}
    \caption{An example of $\mathbf{\bar{\Phi}}$ with arrows pointing to prefix tokens for ``fish'' with context summary scores on edges. Larger values indicate the token appears more frequently in the context of ``fish''.}
    \vspace{-1.15em}
    \label{fig:prefix}
\end{wrapfigure}
    where $\mu_j$ centers the columns of $\mathbf{\bar{\Phi}}$ to be 0. Considering each row as an embedding for a token $\mathbf{e}_i$, which represents an average of the tokens that appear in its context, i.e., smoothed context. 

    More precisely, the strength of the association from token $\mathbf{e}_i$ to $\mathbf{e}_j$ is determined by the average probability that $\mathbf{e}_j$ appears in the context of $\mathbf{e}_i$ over possible positions of $\mathbf{e}_i$ and $\mathbf{e}_j$. This matrix can be interpreted as assigning a representation to a token based on a summary of the possible contexts that token $\mathbf{e}_i$ appears in. This allows for learning associations between words that capture richer semantic relationships than bigram features. 
    For example, we could expect to see correlations between animal and habitat, country and capital, or emotions and facial expressions (See Figure~\ref{fig:prefix}). This context mapping $\mathbf{\bar{\Phi}}$ is a core building block of the gradients for the query-key attention $\mathbf{W}^{(l)}$ and value $\mathbf{V}^{(l)}$ matrices as shown in Figure~\ref{fig:thm_illustration}.

\subsubsection{Composition of basis functions for semantic association}\label{Sec::4.2.2}

We now show how these three basis functions, bigram mapping $\mathbf{\bar{B}}$, interchangeability mapping $\Sigma_{\mathbf{\bar{B}}}$, and context mapping $\mathbf{\bar{\Phi}}$, are compounded to characterize four classes of weight matrices of the transformer. 

\paragraph{(1) Output matrix $\mathbf{W}_O$.} As shown in Eq. (\ref{eq:output_characterization}), $\mathbf{\bar{B}}$ is the leading term of $\mathbf{W}_O$, and thus the mapping from embedding vectors to output predictions can be understood by examining the matrix product $\mathbf{e}_i \mathbf{\bar{B}}$ for a token embedding $\mathbf{e}_i$. The $j$-th element of the resulting output vector is ${\mathbf{\bar{B}}}_{ij}$, and each ${\mathbf{\bar{B}}}_{ij}$ includes a factor of $\mathcal{P}_t(\mathbf{e}_i)$. This implies that tokens are scored according to how frequently they occur in the average next-token distribution of $\mathbf{e}_i$, and explain how models at early stages effectively learn bigram-like patterns.

\paragraph{(2) Value matrix $\mathbf{V}^{(l)}$.} The leading term of the value matrix $\mathbf{V}^{(l)}$ can be expressed as $\mathbf{\bar{\Phi}}^\top \mathbf{\bar{B}}^\top$ as noted in Eq. (\ref{eq:value_characterization}), which acts as a composition of a context summary and bigram mapping.  
Because $\mathbf{\bar{\Phi}}^\top$ captures longer-term dependencies and $\mathbf{\bar{B}}^\top$ captures only bigram statistics, the resulting embedding from $\mathbf{V}^{(1)}$ still endows the original token representations with semantic properties similar to those of $\mathbf{\bar{\Phi}}^\top$ as seen in Figure~\ref{fig:thm_illustration}. 

\paragraph{(3) Attention matrix $\mathbf{W}^{(l)}$.} Theorem~\ref{thm:early-feature} characterizes the attention weight (a shared query-key matrix) as $\mathbf{\bar{Q}}$, which is constructed as a composition of $\mathbf{\Sigma}_{\mathbf{\bar{B}}}$, $\mathbf{\bar{\Phi}}$, the input matrix $\mathbf{X}_i$, the output matrix $\mathbf{Y}_i$, etc. We note that this compound feature captures a token-to-token correlation determined by how predictive one token is of the other's next-token distribution based on the context and interchangeability mappings. 
We walk through an overview of the construction of $\mathbf{\bar{Q}}$ in three steps (See Appendix~\ref{sec:apdx:detail} for details).
\begin{enumerate}
    \item \textit{Input-output matching scoring in context}. As a preliminary step, we first define a composed feature $\mathbf{\Sigma}_{\mathbf{\bar{B}}} \mathbf{\bar{\Phi}}$ by multiplying the interchangeability mapping $\mathbf{\Sigma}_{\mathbf{\bar{B}}}$ with the context mapping $\mathbf{\bar{\Phi}}$. This composition utilizes local interchangeability to map a token to a class of similar tokens and utilizes the context mapping to capture longer-range semantic correlations shared by the set of similar tokens. Using this feature, for each sample, we assign scores between each input and output token.
    \item \textit{Masking and centering.} The auto-regressive constraint is enforced by masking future tokens, keeping only scores from input tokens that precede the output token. Then, the resulting scores for each output token are centered and normalized based on its position. 
    \item \textit{Next-to-query shift and averaging.} The scores between each input and output token are then shifted so that the same score is assigned instead to be between the input token and the token directly preceding the output token. Then, the scores are averaged across all samples. 
\end{enumerate}
    
\paragraph{(4) Positional encoding $\mathbf{P}^{(l)}$.} The closed-form characterization $\mathbf{\Delta}$ of the positional encoding $\mathbf{P}^{(l)}$ follows a very similar composition to $\mathbf{\bar{Q}}$, with the main difference being that the correlations are mapped to \emph{positional differences} rather than to the vocabulary-space differences (See Lemma~\ref{lem:grad}).
    
\subsubsection{How the Weights Cooperate} 

To illustrate how the weights work together and provide further context on the role of each of the weights as functions, we consider the leading-term computation of a single-layer attention-based model. Dropping constant factors to focus on the interactions between features, the leading terms of the entire model computation can be written as 
\begin{equation}
    \left( \mathcal{S}\!\left(\text{Mask}\!\left(\mathbf{X} \mathbf{\bar{Q}} \mathbf{X}^\top + \text{DM}(\mathbf{\Delta})\right)\right) \mathbf{X} \mathbf{\bar{\Phi}} ^\top \mathbf{\bar{B}}^\top + \mathbf{X} \right) \mathbf{\bar{B}}.
\end{equation}
We can further decompose this into $\mathbf{X} \mathbf{W}_O$ and the computation from the self-attention block is:
\begin{equation}
    \mathcal{S}\!\left(\text{Mask}\!\left(\mathbf{X} \mathbf{\bar{Q}} \mathbf{X}^\top + \text{DM}(\mathbf{\Delta})\right)\right) \mathbf{X} \mathbf{\bar{\Phi}}^\top \mathbf{\Sigma}_{\mathbf{\bar{B}}}.
\end{equation}
$\mathbf{\bar{Q}}$ and $\mathbf{\Delta}$ capture correlations between two tokens or two positions based on how predictive the first token/position is of the next-token distribution of the second token/position according to $(\mathbf{\bar{\Phi}}^\top \mathbf{\Sigma}_{\mathbf{\bar{B}}})^\top$. Notice that the attended tokens are mapped to the output space by $\mathbf{\bar{\Phi}}^\top \mathbf{\Sigma}_{\mathbf{\bar{B}}}$, the same feature that determines the correlations for attention.
As a result, the self-attention block effectively attends to tokens that, under the value and output matrix projection, lead to better next-token prediction. 
Thus, we find that while the residual stream $\mathbf{X} \mathbf{W}_O$ provides an average prediction of the next token, $\mathbf{\bar{Q}}$ enables the model to refine this prediction by selectively focusing on tokens most indicative of the next-token given its current parameters, those capturing corpus association statistics. 

\begin{tcolorbox}[width=\linewidth, colback=white!95!black]
\vspace{-0.15cm}
\noindent \textbf{Implication.} By considering an end-to-end analysis of the model under simultaneous training of layers and by decomposing the weights, we obtain a clear interpretation of how different components collaborate to form semantic representations and can rigorously contextualize the function of each component in the full computation of attention-based transformers. While these features only yield small changes in the actual text output, they provide important insight into how the model’s behavior develops during training. For example, if early training already associates \emph{fish} with \emph{pond} (as in Figure~\ref{fig:prefix}), we expect such relationships to be a useful anchor for later training, allowing the model to complete more complex sentences, e.g., ``A pond in the garden was filled with colorful fish that sparkled in the sunlight'', coherently with learned semantic associations.
\vspace{-0.2cm}
\end{tcolorbox}

\section{Experiments} \label{sec:experiment}

\subsection{3-Layer Attention-based Transformer}\label{sec:tiny}

We begin with an experimental setting designed to closely mirror our theory, enabling direct verification of results and analysis of the semantic relationships embedded in the learned weights. For clearer interpretability, we use the TinyStories dataset~\citep{eldan2023tinystories}, truncated to the 3,000 most frequently occurring words, which also defines the model’s vocabulary. A 3-layer self-attention model defined in Definition~\ref{def:transformer} is then trained with sequence length $T = 200$.

\begin{minipage}{\textwidth}
\begin{minipage}[!t]{0.39\textwidth}
    \centering
    \captionof{table}{Minimum cosine similarities between theoretical and actually learned weights across all epochs. Results from a 3-layer attention-based model trained on TinyStories (small $\eta$).}
    \label{fig:tiny-cosine}
    \begin{tabular}{c|c}
         \toprule
         Weights & Min. Cosine \\
         \midrule
         Attention & 0.999496 \\
         Value & 0.999169 \\
         Output & 0.998486 \\
         \bottomrule
    \end{tabular}
\end{minipage}
\hfill
\begin{minipage}[!t]{0.5\textwidth}
  \centering
    \vspace{-0.7em}
    \includegraphics[width=\linewidth]{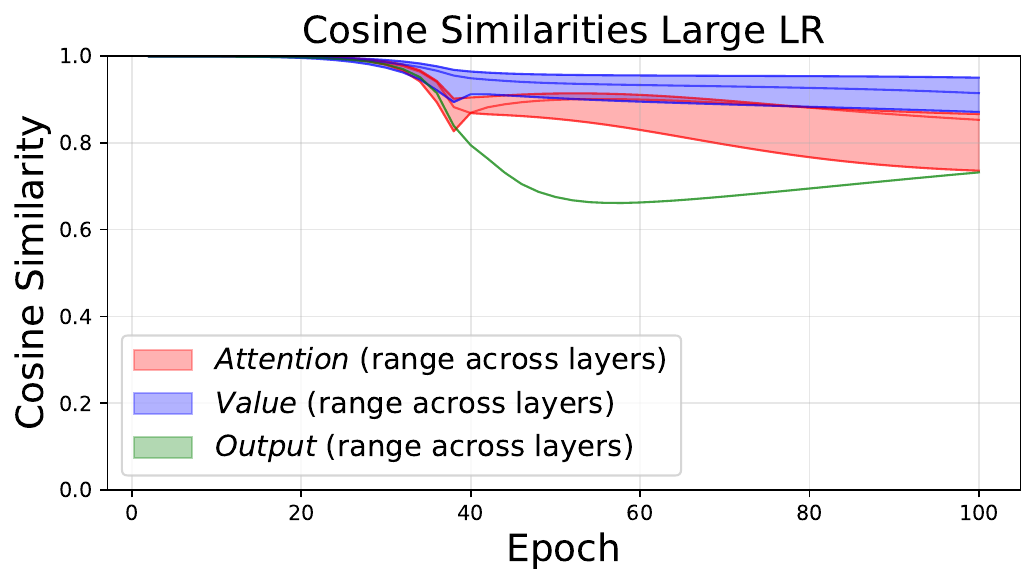}
    \vspace{-1.65em}
    \captionof{figure}{\textbf{Cosine similarity between theoretical and learned weights}. Results from a 3-layer transformer model trained on TinyStories.}
    \label{fig:tiny-cosine-large}
\end{minipage}
\end{minipage}

\textbf{Verification of theory.}
To verify Theorem~\ref{thm:early-feature}, we measure the cosine similarity between the learned weights and their corresponding leading terms at checkpoints over the first 100 epochs of SGD using a batch size of 2048 for computational tractability with a learning rate of $0.005$. We also consider the cosine similarity between the learned weights and their leading terms when using a larger learning rate of $0.05$ to understand how features evolve at later stages with respect to the leading term gradients. 
We provide results for both settings in Table~\ref{fig:tiny-cosine} and Figure~\ref{fig:tiny-cosine-large}. The results show that \emph{the learned weights maintain strong agreement with the theoretical predictions: even after 30 epochs, all weights achieve a cosine similarity of at least 0.9}. Moreover, all parameter matrices have a cosine similarity above 0.7, even after 100 epochs, where the loss had dropped from 8.00 to 5.35. These findings suggest that the features predicted by the theorem not only characterize the model dynamics during the early stage, but also remain informative well beyond it. We provide results for a BPE tokenization and for a causal analysis in Appendix~\ref{appx:exp}, and we elaborate experimental details for the TinyStories experiments in Appendix~\ref{appx:details}.

\begin{figure}[!h]
\subfloat[Examples for $\mathbf{\bar{B}}$]{
\begin{tabular}{c}
     the  \\
     \midrule
     park \\
     little \\
     bird \\
     ball \\
     dog \\
     big \\
     tree \\
     man \\
     box \\
\end{tabular}
\begin{tabular}{c}
     red  \\
     \midrule
     ball \\
     car \\
     dress \\
     balloon \\
     truck \\
     blocks \\
     apples \\
     shirt \\
     hat \\
\end{tabular}
\begin{tabular}{c|}
     to  \\
     \midrule
     the \\
     play \\
     go \\
     be \\
     help \\
     see \\
     her \\
     make \\
     do \\
\end{tabular}
}
\subfloat[Examples for $\mathbf{\Sigma_{\bar{B}}}$]{
\begin{tabular}{|c}
     they  \\
     \midrule
     she \\
     they \\
     he \\
     it \\
     one \\
     lily \\
     timmy \\
     tom \\
     her \\
\end{tabular}
\begin{tabular}{c}
     happy  \\
     \midrule
     happy \\
     sad \\
     excited \\
     scared \\
     proud \\
     angry \\
     nice \\
     curious \\
     surprised \\
\end{tabular}
\begin{tabular}{c|}
     wanted  \\
     \midrule
     saw \\
     had \\
     wanted \\
     asked \\
     went \\
     loved \\
     ran \\
     looked \\
     took \\
\end{tabular}
}
\subfloat[Examples for $\mathbf{\bar{\Phi}}$]{
\begin{tabular}{|c}
     fish  \\
     \midrule
     fish \\
     big \\
     small \\
     pond \\
     lake \\
     water \\
     catch \\
     sea \\
     boat \\
\end{tabular}
\begin{tabular}{c}
     flower  \\
     \midrule
     beautiful \\
     yellow \\
     butterfly \\
     hose \\
     garden \\
     bloom \\
     pretty \\
     daisy \\
     field \\
\end{tabular}
\begin{tabular}{c}
     birds  \\
     \midrule
     bird \\
     tree \\
     up \\
     park \\
     nest \\
     flowers \\
     tweety \\
     sky \\
     flew \\
\end{tabular}
}
\caption{Selected tokens from the top 30 correlated tokens under different basis features from TinyStories. The characterized features actually capture both grammatical and semantic structures.}
\vspace{-1em}
\label{fig:vocab-seman}
\end{figure}

\paragraph{Semantic structure.}
To validate our interpretation of associative features, we collect for each token the top 30 most correlated tokens under each of the basis functions: the bigram mapping ($\mathbf{\bar{B}}$), interchangeability mapping ($\mathbf{\Sigma_{\bar{B}}}$), and context matrix ($\mathbf{\bar{\Phi}}$), constructed from the TinyStories corpus. We provide examples of tokens where the expected semantic relationships can be observed in Figure~\ref{fig:vocab-seman}. Under $\mathbf{\bar{B}}$, we see that the word ``red'' is correlated with common objects such as ``truck'' that would be described by the word ``red''. Under $ \mathbf{\bar{\Phi}}$, we can see that the word ``fish'' is correlated with common settings where fish would appear, such as ``pond'' or ``lake''.  

\subsection{Transformers in Practice}\label{sec:pythia}
\paragraph{Setup.} To evaluate how well our theoretical results extend to practical LLMs, we analyze token relationships learned from OpenWebText~\citep{Gokaslan2019OpenWeb}, a real-world large-scale dataset with text from millions of webpages, in Pythia-1.4B~\citep{biderman2023pythia} and compare them with our theoretical predictions, examining how these relationships evolve across layers on datasets and models reflecting real-world complexities. We choose the Pythia model family, as they are open-sourced and uniquely provide access to intermediate checkpoints, enabling fine-grained analysis of training dynamics and interpretability ~\citep{marks2024sparse,gallego2025hidden}. Unlike our theoretical setting, Pythia includes additional components such as MLP and multi-head attention, making it impossible to directly read off average token correlations from the weights. In order to interpret the layer-wise representations in terms of token-token correlations, we perform the analysis through the following steps:
\begin{enumerate}
    \item We pass in each token $\mathbf{e}_i$ as the input to the transformer.
    \item For each token and from each layer $l$, we collect the following embeddings: the input to layer $l$ $\mathbf{h}_{i,l,pre}$, the output of the $l$-th layer $\mathbf{h}_{i, l, post}$, and the output of the $l$-th layer without the MLP component $\mathbf{h}_{i, l, attn}$.\footnote{We remind the reader of each layer's structure in the Pythia model. The input is normalized and then passed into the attention block and the MLP block in parallel. Then the outputs of each are added to the original input.}
    \item The embeddings $\mathbf{h}_{i,l,pre}$ form the rows of $\mathbf{E}_{l, pre} \in \mathbb{R}^{|\mathcal{V}| \times d}$ which represents a mapping from the input embeddings of layer $l$ to tokens. Similarly, the embeddings $\mathbf{h}_{i,l,post}$ and $\mathbf{h}_{i,l,attn}$ form the rows of $\mathbf{E}_{l, post} \in \mathbb{R}^{|\mathcal{V}| \times d}$ and $\mathbf{E}_{l, attn} \in \mathbb{R}^{|\mathcal{V}| \times d}$ respectively.
\end{enumerate}

\paragraph{Attention correlations.} To analyze the correlations captured by the attention weights at each layer, we compute the product of the key and query mappings for each head and average these products, which we will call $\mathbf{A}_{l,emb}\in\mathbb{R}^{d \times d}$. We then multiply the mapping $\mathbf{E}_{l,pre}$ on both sides of $\mathbf{A}_{l,emb}$ to convert the average attention mapping into a token-basis attention weight matrix $\mathbf{A}_{l,tok}$. Finally, we consider token correlations captured by $\mathbf{A}_{l,tok}$ by using its covariance matrix, which we compare with the covariance matrix of $\mathbf{\bar{Q}}$, the leading-order attention mapping term from our theorem.

\paragraph{Embedding correlations.}  To analyze the correlations captured by the value mapping and the MLP, we consider the token-token correlations captured by the output of each layer. Utilizing the covariance matrix of $\mathbf{E}_{l,post}$ allows for direct comparison with the covariance matrix of the leading value matrix term $\mathbf{\bar{\Phi}}^\top \mathbf{\bar{B}}^\top$, since the matrices themselves have different dimensions. Furthermore, this enables us to control for shifts in the embedding space. 

\paragraph{Comparison methodology.} We compute the leading term matrices using 100K samples from OpenWebText. To control for differences in model architecture, we normalize each row of the leading term weights to have unit norm. Then, we compute cosine similarities between the corresponding covariance matrices across layers and across checkpoints. We perform the same analysis on the FineWeb~\citep{penedo2024the} dataset and provide results in Appendix~\ref{appx:exp}. More details on the experimental setup are in Appendix~\ref{appx:details}.

\begin{figure}[ht]
    \centering
    \vspace{-0.95em}
    \includegraphics[width=0.97\linewidth]{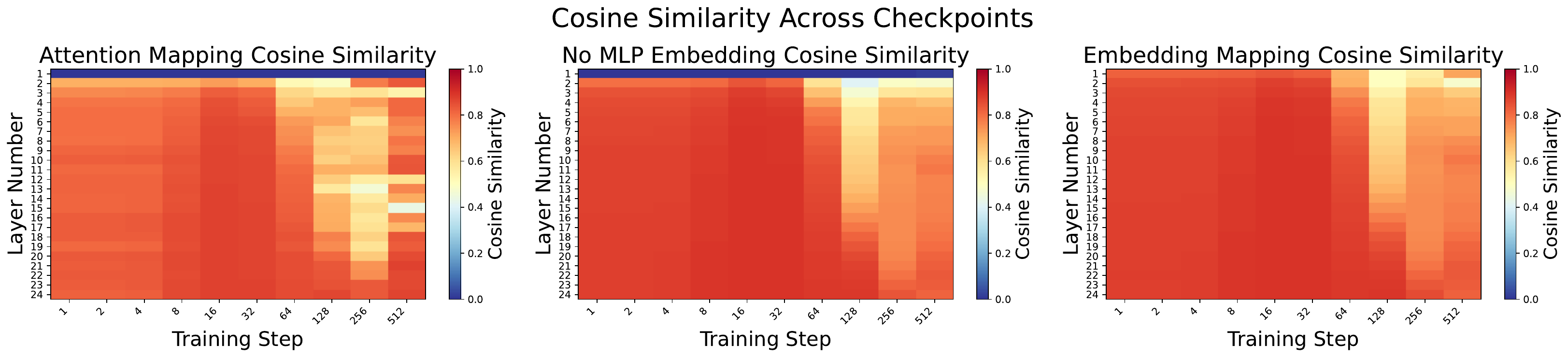}
    \vspace{-0.95em}
    \caption{Cosine similarity between covariance matrices for Pythia-1.4B attention weights and embeddings and the corresponding leading term features based on OpenWebText.}
    \label{fig:pythia-heatmap}
    \vspace{-1em}
\end{figure}
\paragraph{Results.} We provide a visualization of results in Figure~\ref{fig:pythia-heatmap}, where we can see that, at the early stage of training, there is very strong agreement between the Pythia embeddings and our leading-term features. We can see that for the embedding mapping, \emph{the token representations strongly match our theoretical analysis across all layers, and similarly for the attention weights}, excluding only the first layer. We can see that as the model continues training, the weights gradually drift from fixed associative features to represent richer knowledge beyond association, starting with the earlier layers. However, it still maintains these features to a large extent for relatively longer steps. This suggests that our analysis on attention-based models generalizes with the addition of multi-head attention or MLP and acts as a starting point for a finer-grained analysis of full training dynamics. 

\paragraph{MLP ablation.} We perform an ablation at each layer by performing the embedding correlation analysis using $\mathbf{E}_{l, attn}$, which is based on only the output of the attention block and excludes the MLP component. The results for this analysis can be seen in the middle plot of Figure~\ref{fig:pythia-heatmap}. We can see that the correlations captured by embeddings with and without the MLP are similar except at the first layer. This suggests that at the first layer, the MLP maps tokens to embeddings with structures similar to that of the leading-order value matrix term and maintains a similar structure at later layers. Based on these initial results, one possible hypothesis is that the MLP at early stages functions similarly to the leading-term value mapping.

\paragraph{Individual attention heads.} In order to capture a fine-grained understanding of the attention block, we perform the analysis on attention correlations using individual attention heads. We perform this analysis at an early (Layer 2), middle (Layer 13), and late layer (Layer 24) to also understand how heads may evolve differently at different stages of the model. In Figure~\ref{fig:per-head-heatmap}, we find that different layers evolve differently with respect to the gradient leading-term for attention mappings. The earlier layers learn the leading-term features at a slower rate, as seen by the high similarity (red) appearing at later steps, especially for layer 2. We can also see that layer 13 exhibits faster specialization of attention heads than the other layers, as seen by the high variance in each column at later steps for layer 13. This provides insight into the rate of specialization of attention heads and suggests that intermediate layers are where specialization initially occurs. 

\begin{figure}[ht]
    \centering
    \vspace{-0.8em}
\includegraphics[width=0.97\linewidth]{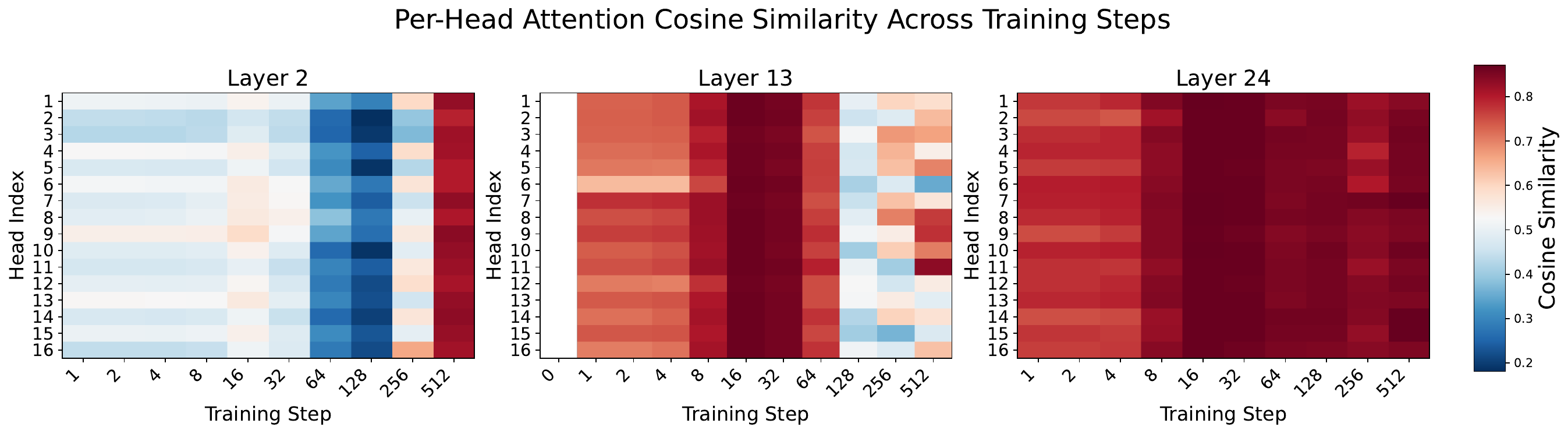}
    \vspace{-0.95em}
    \caption{Cosine similarity between covariance matrices for Pythia-1.4B individual attention head weights and the corresponding leading term features based on OpenWebText.}
    \label{fig:per-head-heatmap}
    \vspace{-1.5em}
\end{figure}
\section{Conclusion} \label{sec:discussion}

We present new theoretical results on the emergence of semantic associations in self-attention models learned from a natural language dataset. Our gradient leading term analysis for each model weight illuminates how the core basis functions that shape the associative features, i.e., bigram mapping, interchangeability mapping, and context mapping, develop from the training corpus. We show that transformer weights have closed-form expressions as compositions of those basis functions to represent semantic associations across natural language tokens. The extensive analyses on the weight matrices' characterizations grounded by empirical supports from toy transformers and real-world LLMs contribute to the theoretical foundations of representation learning in transformers while also opening pathways for interpretability research: discovering common factors that allow weight matrices across components to be decomposed into simple functions of those shared factors; leveraging theory to formulate broad hypotheses about how concepts arise in models, extending beyond individual mechanisms or specific behaviors to complex characteristics.

\section*{Ethics Statement}
We provide a novel theorem that characterizes the roles of weights in the transformer model, which is a de facto standard building block of modern LLMs. We try to uphold high standards of scientific excellence by making minimal assumptions for theoretical analysis while providing practical implications on mechanistic interpretability. The new insights on emerging features we presented contribute to a better understanding and diagnosis of the representation learning of transformers, which makes a big step towards transparent and reliable AI. As our study considers a setup of training from scratch on public datasets, there is no direct privacy issue and harm. The authors also acknowledge and respect the ethics of confidentiality and fairness, and confirmed that there are no identified violations of them.

\section*{Reproducibility Statement}
All the theoretical analyses in this work are accompanied by full proofs with detailed step-by-step explanations (in the Appendix) for verification, reproduction, and reuse. We elaborate on the details of our setup in the main body of the paper for all empirical validations, and the code is available \href{https://github.com/deeplearning-wisc/attn-dynamics-basis}{here}. In addition, our choice of models pursues maximum reproducibility and accessibility, given its simple and fully open-sourced configurations.

\subsubsection*{Acknowledgments}

The authors would like to thank James Oldfield, Wendi Li, and Jimmy Di for their valuable feedback. The work is supported in part by the AFOSR Young Investigator Program under
award number FA9550-23-1-0184, National Science Foundation under awards IIS-2237037 and IIS-2331669,
Office of Naval Research under grant number N00014-23- 1-2643, Schmidt Sciences Foundation, Open
Philanthropy, Alfred P. Sloan Fellowship, and gifts from Google and Amazon. Zhen Fang was funded by the Australian Government through
the Australian Research Council (ARC) under grant number
DE250100363. Shawn Im is also supported by the National Science Foundation Graduate Research Fellowship Program under Grant No. 2137424. Any opinions, findings, and conclusions or recommendations expressed in this material are those of the author(s) and do not necessarily reflect the views of the National Science Foundation. Support was also provided by the Graduate School and the Office of the Vice Chancellor for Research at the University of Wisconsin-Madison with funding from the Wisconsin Alumni Research Foundation. 

\bibliography{iclr2026_conference}
\bibliographystyle{iclr2026_conference}

\newpage
\appendix
\section{Detailed Description on Weight Characterization} \label{sec:apdx:detail}
The token-to-token correlation captured by $\mathbf{\bar{Q}}$ is determined by how strongly correlated one token is with the other's next-token distribution. 
These correlations are captured by $Q_i$ where each element $Q_{i_{jk}}$ of $Q_i$ measures for $X_i$, the correlation between the token at position $j$ and the token at position $k+1$. This correlation between the token at position $j$ and position $k+1$ gets mapped back to a correlation between the tokens at positions $j$ and $k$ through $\mathbf{X}_i^\top \mathbf{Q}_i \mathbf{X}_i$. 
let $\mathbf{Q}_i$ given in Eq. (\ref{Eq::15}) be the per-example correlation matrix computed from input--output token pairs in $i$th input,
\begin{equation}
   \mathbf{\bar{Q}} = \frac{1}{NT} \sum_{i=1}^N \mathbf{X}_i^\top \mathbf{Q}_i \mathbf{X}_i.
\end{equation}
We walk through an overview of the construction of $\mathbf{{Q}}_i$ in four steps and provide the detailed computation in Appendix~\ref{sec:apdx:detail}.

   \textbf{Feature composition.}  
    As a preliminary step, we first define a composed feature $\mathbf{\Sigma}_{\mathbf{\bar{B}}} \mathbf{\bar{\Phi}}$ by multiplying the interchangeability mapping $\mathbf{\Sigma}_{\mathbf{\bar{B}}}$ with the context mapping $\mathbf{\bar{\Phi}}$.
    Each entry corresponds to the average product of path weights from token $\mathbf{e}_i$ to $\mathbf{e}_j$ with one step on $\mathbf{\Sigma}_{\mathbf{\bar{B}}}$ and one step on $\mathbf{\bar{\Phi}}$.
    This composition utilizes local interchangeability to map a token to its more general functional class and utilizes the context-summary to capture longer-range  semantic correlations shared by tokens in the functional class. We will refer to the resulting feature as the composed feature for simplicity in the remaining steps.

    \textbf{Scoring input--output pairs.}  
    For each input $\mathbf{X}_i$ and its corresponding output $\mathbf{Y}_i$, we utilize the composed feature, $\mathbf{\Sigma}_{\mathbf{\bar{B}}} \mathbf{\bar{\Phi}}$, to compute correlation scores between input and output tokens as seen in Figure~\ref{fig:thm_illustration}.
    \begin{equation}
       (\mathbf{Y}_i - \mathbf{U}_O) \mathbf{\Sigma}_{\mathbf{\bar{B}}} \mathbf{\bar{\Phi}} \mathbf{X}_i^\top,
    \end{equation}
    where $\mathbf{U}_O$ is a baseline matrix with all elements set to $1/|\mathcal{V}|$. 
    This assigns a correlation score to each input--output token pair according the composed feature.\\

   \textbf{Masking and centering.}  
    The auto-regressive constraint is enforced by masking future tokens, keeping only scores from input tokens that precede the output token. The resulting scores for each output token are centered and normalized based on its position. The matrix is then centered so that the scores for each output token sum to zero, yielding the per-example matrix $\mathbf{Q}_i$.
    \begin{equation}\label{Eq::15}
       \mathbf{Q}_i = \text{ein}_{tjk, \, tk \rightarrow tj}\!\left(\mathbf{J}_i, \, (\mathbf{Y}_i - \mathbf{U}_O) \mathbf{\Sigma}_{\mathbf{\bar{B}}} \mathbf{\bar{\Phi}} \mathbf{X}_i^\top \right),
    \end{equation}
    where $\mathbf{J}_i$ is the masking operator and $\text{ein}$ denotes an Einstein summation. 

    \textbf{Next to Query Mapping.} Lastly, the scores between each input and output token are then mapped to be the correlation between the input token and the token preceding the output token. In this way, the model learns to attend to the input token when it expects the next token to be the output token. 
    
    \textbf{Aggregation across the dataset.}  
   Finally, we map per-example correlations back to the vocabulary space and average over all $N$ inputs and $T$ tokens per input:
   \begin{equation}
       \mathbf{\bar{Q}} = \frac{1}{NT} \sum_{i=1}^N \mathbf{X}_i^\top \mathbf{Q}_i \mathbf{X}_i.
   \end{equation}
   In this way, each token is associated with the average correlations to other tokens across the dataset.  

\section{Additional Experiments}\label{appx:exp}

\paragraph{BPE tokenization.} We train a 3-layer attention-based model on TinyStories as in Section~\ref{sec:tiny} using a BPE tokenization with vocabulary size of 10,000. We train the model for 10 epochs with a learning rate of 0.005 and measure the cosine similarity between the theoretical and actual weights. We report the minimum over the 10 epochs in Table~\ref{tab:bpe}.

\begin{table}[]
    \centering
    \begin{tabular}{c|c}
         \toprule
         Weights & Min. Cosine \\
         \midrule
         Attention & 0.999914 \\
         Value & 0.998800 \\
         Output & 0.997891 \\
         \bottomrule
    \end{tabular}
    \caption{Minimum cosine similarities between theoretical and actually learned weights across all epochs. Results from a 3-layer attention-based model trained on TinyStories and with a BPE tokenization.}
    \label{tab:bpe}
\end{table}

\paragraph{Causal intervention.} We aim to understand how the model output changes when removing the leading terms from each of the weights. We perform this analysis on the 3-layer attention-based transformers trained on TinyStories with a learning rate of 0.05. Unlike most causal intervention settings, the features considered have a general function rather than a specific function applicable to a narrower setting, and therefore, we expect removing the leading terms to result in performance degradation across the dataset. As a result, we choose to focus on the extent to which the output distribution changes when the leading term component is removed for each weight matrix. For each weight matrix, we remove the projection of the weight matrix onto its corresponding leading term. After removing this projection, we compute the loss of the resulting model on the dataset. We provide the results of this intervention in Table~\ref{tab:causal}. We can see that the output layer has the largest effect on the loss, while the attention weights have the least. This behavior is predicted by the theory as the output layer has the largest order update, while the attention weights have the smallest order updates.

\begin{table}[ht]
    \centering
    \begin{tabular}{c|c}
         \toprule
         Weights & Loss \\
         \midrule
         Original & 5.349\\
         Attention Layer 0 & 5.350 \\
         Attention Layer 1 & 5.352 \\
         Attention Layer 2 & 5.361 \\
         Value Layer 0 & 6.192 \\
         Value Layer 1 & 6.526 \\
         Value Layer 2 & 6.520 \\
         Output & 8.287 \\
         \bottomrule
    \end{tabular}
    \caption{Loss of the attention-based model on TinyStories after the leading term component from each weight matrix is removed. The first row corresponds to the original model. }
    \label{tab:causal}
\end{table}

\paragraph{Validation on additional dataset. } We perform the analysis in Section~\ref{sec:pythia} on the token-token correlations captured by embeddings in Pythia-1.4B except instead of using OpenWebText, we use FineWeb~\citep{penedo2024the}. We provide the results in Figure~\ref{fig:pythia-fine} where we see very similar results as with OpenWebText. 

\begin{figure}[ht]
    \centering
    \vspace{-0.95em}
    \includegraphics[width=0.97\linewidth]{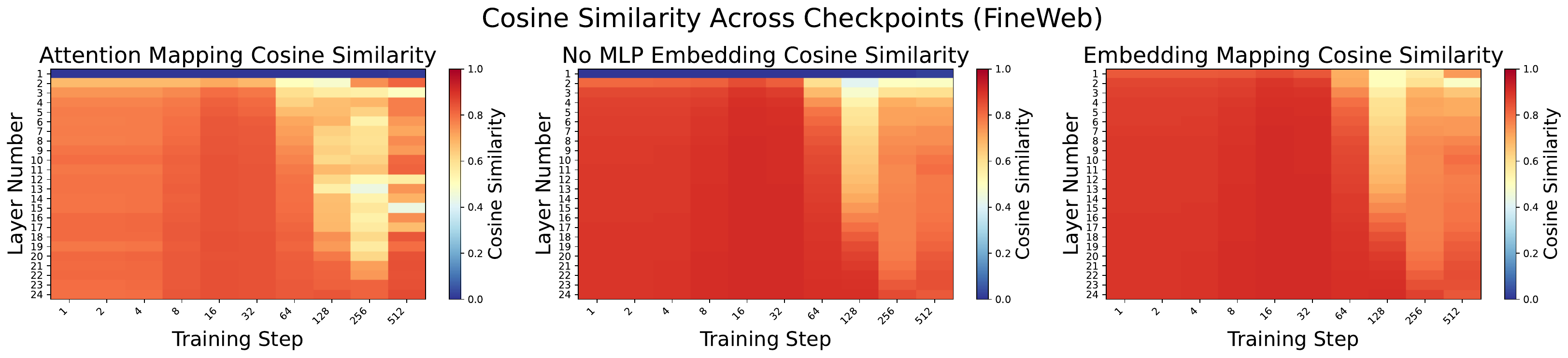}
    \vspace{-0.95em}
    \caption{Cosine similarity between covariance matrices for Pythia-1.4B attention weights and embeddings and the corresponding leading term features based on FineWeb.}
    \label{fig:pythia-fine}
    \vspace{-1em}
\end{figure}

\newpage 

\section{Experimental Details}\label{appx:details}

\paragraph{TinyStories Experiments} We collect the vocabulary from TinyStories treating each word, punctuation mark, or number as a token and use the 3000 most common tokens. We then filter out samples that include tokens outside of the set of 3000. For training, we use 65536 of the filtered samples with sequence length at least 201 and truncate all sequences to 201 tokens for training and computing theoretical leading terms. For the BPE tokenization, we tokenize the dataset using a vocabulary size of 10,000, and for training, we use samples with sequence length at least 201 and truncate all sequences to 201 tokens for training and computing theoretical leading terms. We compute the theoretical matrices using the first batch.

\paragraph{Pythia Experiments} We use the first 100k samples of OpenWebText/FineWeb with length at least 512 characters to perform the analysis.

We utilize 4 A100 GPUs with 80GB of memory. These experiments can be performed with less compute by reducing batch size or sequence length. 

\section{Proofs} \label{sec:apdx:proof}

$\norm{\cdot}$ will be the operator norm unless denoted otherwise. 

\subsection{Proof of 1-Layer Theorem}

\begin{lemma}[General Gradient Form]\label{lem:grad} Under the setting described, we have that
\begin{equation}
	\frac{\partial \mathcal{L}}{\partial W_O} = \frac{-1}{NT} \sum_{i=1}^N h^{(1)\top}_i R_i 
\end{equation}
\begin{equation}
    \frac{\partial \mathcal{L}}{\partial V^{(1)}} =
    \frac{-1}{NT} \sum_{i=1}^N
    X_i^\top A_i^{(1)\top} R_i W_O^\top
\end{equation}
\begin{equation}
\frac{\partial \mathcal{L}}{\partial W^{(1)}} = \frac{-1}{NT} \sum_{i=1}^N X_i^\top \text{ein}_{tjk, tk \rightarrow tj}(J_i, (R_i W_O^\top V^{(1)\top}  X_i^\top)) X_i
\end{equation}
\begin{equation}
\frac{\partial \mathcal{L}}{\partial P^{(1)}} = \frac{-1}{NT} \text{ein}_{tjk, jk \rightarrow t}\left(D, \sum_{i=1}^N \text{ein}_{tjk, tk \rightarrow tj}(J_i, (R_i W_O^\top V^{(1)\top}  X_i^\top)) \right)
\end{equation}
where $A_i^{(1)} = \S(\text{Mask}(X_i W^{(1)} X_i^\top + \mathrm{DM}(P^{(1)})))$, $R_i = Y_i - \S(F_\theta(X_i))$, $J_i \in \R^{T \times T \times T}$ with $J_{i,t} = \text{Diag} (A_i^{(1)[t]}) - A_i^{(1)[t]\top} A_i^{(1)[t]}$
 being the Jacobian of the softmax function for the $t$th token in the sequence, $D \in \R^{T \times T \times T}$ with $D_t$ being a matrix with ones along the $(-t+1)$th sub-diagonal and zeros elsewhere, and $ein$ is used to denote an Einstein summation. 
\end{lemma}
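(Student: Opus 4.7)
The derivation is a careful application of the chain rule, resting on two standard identities: the cross-entropy-with-softmax identity $\nabla_{F_i^{[t]}} \mathcal{L} = (\S(F_i) - Y_i)^{[t]}/(NT) = -R_i^{[t]}/(NT)$, which says that the signal entering the logits is exactly $-R_i/(NT)$; and the row-wise softmax Jacobian, which for row $t$ of the attention matrix equals $\text{Diag}(A_i^{(1)[t]}) - A_i^{(1)[t]\top} A_i^{(1)[t]}$, i.e.\ the $t$th slice of $J_i$. I would give the proof for $L=1$; the stated multi-layer form then follows by applying the same template recursively through the residual stream. For brevity, write $M_i := X_i W^{(1)} X_i^\top + \text{DM}(P^{(1)})$, $A_i^{(1)} = \S(\text{Mask}(M_i))$, and $h_i^{(1)} = X_i + A_i^{(1)} X_i V^{(1)}$, so that $F_i = h_i^{(1)} W_O$. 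Since $F_i$ is a right multiplication of $h_i^{(1)}$ by $W_O$, the matrix chain rule immediately gives $\nabla_{W_O}\mathcal{L} = -\frac{1}{NT}\sum_i h_i^{(1)\top} R_i$. For $V^{(1)}$, the upstream signal first back-propagates through $W_O$ (yielding $-R_i W_O^\top/(NT)$ at $h_i^{(1)}$) and then through the value branch $A_i^{(1)} X_i V^{(1)}$; a second application of the chain rule produces the claimed bilinear expression with $A_i^{(1)\top}$ and $X_i$ flanking the upstream signal.

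\textbf{Second}, for $W^{(1)}$ I must propagate the gradient through the softmax. Let $G_i := (R_i W_O^\top V^{(1)\top}) X_i^\top \in \R^{T \times T}$ denote the gradient arriving at $A_i^{(1)}$, obtained by differentiating $A_i^{(1)} X_i V^{(1)} W_O$ with respect to $A_i^{(1)}$ and using the logit-level signal $-R_i/(NT)$. Applying the row-wise softmax Jacobian maps $G_i$ to the gradient at $M_i$ as $\widetilde G_i := \text{ein}_{tjk,\,tk\to tj}(J_i, G_i)$; the causal mask forces entries outside the lower triangle to have zero effect, which is already baked into $A_i^{(1)}$ and hence into $J_i$, so no additional masking factor appears. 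Differentiating the bilinear form $X_i W^{(1)} X_i^\top$ then yields $\nabla_{W^{(1)}}\mathcal{L} = -\frac{1}{NT}\sum_i X_i^\top \widetilde G_i X_i$, as claimed.

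\textbf{Third}, for $P^{(1)}$ I compose the same chain up to $\widetilde G_i$ with the adjoint of the operator $\text{DM}$. Since $\text{DM}$ places the $t$th coordinate of its input along the $(-t+1)$th subdiagonal of a $T\times T$ matrix, its adjoint is the linear map that sums the $(-t+1)$th subdiagonal of a matrix into coordinate $t$ of a vector; this is precisely the contraction $\text{ein}_{tjk,\,jk\to t}(D,\,\cdot)$ with the tensor $D$ defined in the statement. Pulling the adjoint outside the sum over samples (valid by linearity) gives the claimed formula for $\nabla_{P^{(1)}}\mathcal{L}$.

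\textbf{Main obstacle.} No single step is deep; the difficulty is \emph{bookkeeping}. Correctly tracking which axis of the rank-3 Jacobian tensor $J_i$ contracts with which axis of the downstream gradient $G_i$, and likewise for the subdiagonal tensor $D$, is what requires care, and the index-explicit $\text{ein}$ notation in the statement is chosen precisely to pin these contractions down unambiguously. Once the indices are fixed, the result follows from one application of the softmax Jacobian identity combined with elementary matrix calculus on the residual, value, and bilinear-form blocks.
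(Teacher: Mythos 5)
Your proof is correct and follows essentially the same chain-rule decomposition as the paper: both back-propagate from the logit-level residual $-R_i/(NT)$ through $W_O$, the value branch, the row-wise softmax Jacobian $J_i$, and finally the bilinear form and the adjoint of $\text{DM}$. One note worth recording: the statement of the lemma as printed has a dimensional slip in the $V^{(1)}$ equation — it should read $\frac{-1}{NT}\sum_i X_i^\top A_i^{(1)\top} R_i\, W_O^\top$ (as used in the paper's subsequent lemmas, e.g.\ the second-step computation and the multi-layer Lemma~\ref{lem:grad-multi}), and your matrix-calculus derivation produces exactly that corrected form.
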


\begin{proof} 
	We start by considering the derivative of the loss with respect to $F_\theta(X_i)^{[t]}$ which is
	\begin{equation}
		Y_i^{[t]} - \S(F_\theta(X_i)^{[t]})
	\end{equation}
	and derivative of $F_\theta(X_i)^{[t]}$ with respect to $W_O$ is $h^{(1)[t]}_i$. Then it follows that
	\begin{equation}
		\frac{\partial \mathcal{L}}{\partial W_O} = \frac{-1}{NT} \sum_{i=1}^N h^{(1)\top}_i R_i 
	\end{equation}
	Now, we consider the gradient with respect to $V^{(1)}$ using the chain rule which gives
	\begin{equation}
        \frac{\partial \mathcal{L}}{\partial V^{(1)}} =
        \frac{-1}{NT} \sum_{i=1}^N
        X_i^\top A_i^{(1)\top} R_i W_O^\top
    \end{equation}
	Now, we consider the gradient with respect to $A_i^{(1)}$ as an intermediate step towards the gradient with respect to $W^{(1)}, P^{(1)}$. Using the chain rule as before, we have
	\begin{equation}
        \frac{\partial \mathcal{L}}{\partial A_i^{(1)}} =
        \frac{-1}{NT}
        R_i W_O^\top V^{(1)\top} X_i^\top .
    \end{equation}
	Letting $B_i^{(1)} = X_i W^{(1)} X_i^\top + \mathrm{DM}(P^{(1)})$, we have that the derivative of $A_i^{(1)[t]}$ with respect to $B_i^{(1)[t]}$ is 
	\begin{equation}
		J_{i,t} = \text{Diag} (A_i^{(1)[t]}) - A_i^{(1)[t]\top} A_i^{(1)[t]}
	\end{equation}
	Then, in order to get the gradient of the loss with respect to $B_i^{(1)}$, we need to consider the contribution from each $t$ resulting in the Einstein summation
	\begin{equation}
		\frac{\partial \mathcal{L}}{\partial B_i^{(1)}} = \frac{-1}{NT} ein_{tjk, tk \rightarrow tj}(J_i, R_i W_O^\top V^{(1)\top} X_i^\top)
	\end{equation}
	From the chain rule, we can derive the gradient with respect to both $W^{(1)}$ and $P^{(1)}$.
	\begin{equation}
		\frac{\partial \mathcal{L}}{\partial W^{(1)}} = \frac{-1}{NT} \sum_{i=1}^N X_i^\top ein_{tjk, tk \rightarrow tj}(J_i, R_i W_O^\top V^{(1)\top} X_i^\top) X_i
	\end{equation}
	\begin{equation}
		\frac{\partial \mathcal{L}}{\partial P^{(1)}} = \frac{-1}{NT} ein_{tjk, jk \rightarrow t}\left(D, \sum_{i=1}^N ein_{tjk, tk \rightarrow tj}(J_i, R_i W_O^\top V^{(1)\top} X_i^\top) \right)
	\end{equation}
	where $D_t$ has ones along the $(-t+1)$th sub-diagonal and zeros elsewhere. This completes the proof. 
\end{proof}

\begin{lemma}[Local softmax Jacobian bound]\label{lem:softmax-jacobian}
Let $\mathcal S$ denote the softmax map. Then for every $z\in \mathbb R^d$,
\[ \|\nabla \mathcal S(z)\| \le \max_{1\le j\le d} \mathcal S(z)_j \le \frac{\exp(\operatorname{range}(z))}{d}, \]
where
\[ \operatorname{range}(z):=\max_j z_j-\min_j z_j. \]
For any $z$ such that $\operatorname{range}(z)\le r$ 
\[ \left\|\mathcal S(z)-\frac{1}{d}\mathbf 1\right\| \le \frac{e^r}{d}\|z\|. \]
\end{lemma}

\begin{proof}
Let $p=\mathcal S(z)$. The Jacobian of softmax is
\[
\nabla \mathcal S(z)
=
\operatorname{Diag}(p)-pp^\top.
\]
For any $x\in\mathbb R^d$,
\[
x^\top \nabla \mathcal S(z)x
=
\sum_{j=1}^d p_j x_j^2
-
\left(\sum_{j=1}^d p_j x_j\right)^2.
\]
The second term is nonnegative, so
\[
x^\top \nabla \mathcal S(z)x
\le
\sum_{j=1}^d p_j x_j^2
\le
\left(\max_j p_j\right)\|x\|_2^2.
\]
Since $\nabla\mathcal S(z)$ is symmetric positive semidefinite, this implies
\[
\|\nabla \mathcal S(z)\|_{2\to2}
\le \max_j p_j.
\]

It remains to bound $\max_j p_j$. Let $M=\max_j z_j$ and $m=\min_j z_j$. Then for every $j$,
\[
p_j
=
\frac{e^{z_j}}{\sum_{k=1}^d e^{z_k}}
\le
\frac{e^M}{d e^m}
=
\frac{e^{M-m}}{d}
=
\frac{e^{\operatorname{range}(z)}}{d}.
\]
Therefore
\[\|\nabla \mathcal S(z)\|_{2\to2}
\le
\frac{e^{\operatorname{range}(z)}}{d}.
\]

Now let $\gamma(\tau)=z'+\tau(z-z')$ for $\tau\in[0,1]$. By the fundamental theorem of calculus,
\[
\mathcal S(z)-\mathcal S(z')
=
\int_0^1 \nabla \mathcal S(\gamma(\tau))(z-z')\,d\tau.
\]
If $\operatorname{range}(\gamma(\tau))\le r$ for all $\tau\in[0,1]$, then
\[
\|\mathcal S(z)-\mathcal S(z')\|_2
\le
\int_0^1
\|\nabla \mathcal S(\gamma(\tau))\|_{2\to2}
\,d\tau\,\|z-z'\|_2
\le
\frac{e^r}{d}\|z-z'\|_2.
\]
Taking $z'=0$ gives the final claim, since $\mathcal S(0)=\frac{1}{d}\mathbf 1$ and
\[
\operatorname{range}(\tau z)=\tau \operatorname{range}(z)\le \operatorname{range}(z).
\]
\end{proof}

\begin{corollary}[Local softmax bound from an $\ell_\infty$ bound]\label{cor:softmax-infty}
Let $\mathcal S:\mathbb R^d\to \Delta^{d-1}$ denote the softmax map. If
$\|z\|_\infty\le R$, then
\[
\left\|\mathcal S(z)-\frac{1}{d}\mathbf 1\right\|_2
\le
\frac{e^{2R}}{d}\|z\|_2.
\]
In particular, if $\|z\|_\infty\le \frac12$, then
\[
\left\|\mathcal S(z)-\frac{1}{d}\mathbf 1\right\|_2
\le
\frac{e}{d}\|z\|_2.
\]
\end{corollary}

\begin{proof}
If $\|z\|_\infty\le R$, then
\[
\operatorname{range}(z)
=
\max_j z_j-\min_j z_j
\le
2\|z\|_\infty
\le
2R.
\]
Applying Lemma~\ref{lem:softmax-jacobian} with $r=2R$ gives
\[
\left\|\mathcal S(z)-\frac{1}{d}\mathbf 1\right\|_2
\le
\frac{e^{2R}}{d}\|z\|_2.
\]
The final claim follows by taking $R=\frac12$.
\end{proof}

\begin{lemma}[First Gradient Step]\label{lem:first}
	Under the setting described, after one gradient step, we have that
	\begin{equation}
		W_O = \eta(\bar{B}) 
	\end{equation}
	\begin{equation}
		W^{(1)}, V^{(1)}, P^{(1)} = \mathbf{0}
	\end{equation}
	where $\bar{B}$ a $|V| \times |V|$ matrix where the $j$th row is the average next-token distribution of the $j$th token in the vocabulary weighted by the relative frequency of token $j$ across the dataset and centered to have the row sum be 0. 
\end{lemma}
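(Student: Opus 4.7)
The plan is to substitute the zero initialization directly into the gradient formulas from Lemma~\ref{lem:grad} and then identify the resulting $W_O$ update with the bigram statistic $\bar{B}$ defined in Eq.~(\ref{eq:bigram-like}). I would first observe that at initialization $W_O = V^{(1)} = W^{(1)} = P^{(1)} = \mathbf{0}$, so the pre-softmax attention logits $\mathbf{X}_i W^{(1)} \mathbf{X}_i^\top + P^{(1)}$ are identically zero. After causal masking, the softmax on row $t$ therefore outputs $1/t$ on the first $t$ entries, giving a well-defined lower-triangular $A_i^{(1)}$. Since $V^{(1)}=\mathbf{0}$, the self-attention increment vanishes and the residual connection gives $h_i^{(1)} = h_i^{(0)} = \mathbf{X}_i$. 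Similarly, $h_i^{(1)} W_O = \mathbf{0}$ so $\mathcal{S}(F_\Theta(\mathbf{X}_i)^{[t]}) = \mathbf{1}/|\mathcal{V}|$, and hence each row of $R_i = Y_i - \mathcal{S}(F_\Theta(\mathbf{X}_i))$ equals $Y_i^{[t]} - \mathbf{1}/|\mathcal{V}|$.

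Next, I would plug these into Lemma~\ref{lem:grad}. The gradients for $V^{(1)}$, $W^{(1)}$, $P^{(1)}$ all carry an explicit factor of $W_O$ (and the last two additionally carry a factor of $V^{(1)}$), so at initialization they are identically zero. Thus after one GD step $V^{(1)}, W^{(1)}, P^{(1)}$ remain $\mathbf{0}$, giving the second half of the claim. For the remaining matrix I compute
\begin{equation}
W_O = -\eta \,\frac{\partial \mathcal{L}}{\partial W_O}\Big|_{t=0} = \frac{\eta}{NT}\sum_{i=1}^N \mathbf{X}_i^\top R_i.
\end{equation}

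The only real calculation is to identify the entries of this matrix with $\bar B$. The $(j,k)$ entry of $\mathbf{X}_i^\top R_i$ equals $\sum_{t=1}^T (\mathbf{X}_i)_{t,j}\bigl((Y_i)_{t,k} - 1/|\mathcal{V}|\bigr)$. Because $\mathbf{X}_i$ is one-hot, $\sum_{i,t}(\mathbf{X}_i)_{t,j}$ divided by $NT$ is exactly the corpus frequency $\mathcal{P}_t(\mathbf{e}_j)$, while $\sum_{i,t}(\mathbf{X}_i)_{t,j}(Y_i)_{t,k}$ divided by $NT$ counts the joint frequency of the bigram $(\mathbf{e}_j,\mathbf{e}_k)$, which factorizes as $\mathcal{P}_t(\mathbf{e}_j)\,\mathcal{P}_t(\mathbf{e}_k\mid\mathbf{e}_j)$. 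Subtracting the $1/|\mathcal{V}|$ centering term yields precisely $\bar B_{jk}$ from Eq.~(\ref{eq:bigram-like}), so $W_O = \eta \bar B$ with row sums zero as claimed.

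There is no serious obstacle here: the entire argument is a careful bookkeeping exercise once zero initialization is plugged in. The subtlest point is verifying that $h_i^{(1)}=\mathbf{X}_i$ despite the softmax being nontrivial, which follows from the $V^{(1)}=\mathbf{0}$ value multiplication; and the one step that requires attention is the identification of the empirical one-hot sums with $\mathcal{P}_t(\mathbf{e}_j)$ and $\mathcal{P}_t(\mathbf{e}_k\mid\mathbf{e}_j)$, which is immediate from the definition of these frequencies as dataset averages over positions $t=1,\dots,T$.
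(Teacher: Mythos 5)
Your argument matches the paper's proof essentially step for step: you both observe that the gradients for $V^{(1)}, W^{(1)}, P^{(1)}$ vanish at zero initialization because they carry factors of $W_O$ (and $V^{(1)}$), that $h_i^{(1)} = \mathbf{X}_i$ and the output softmax is uniform, and then identify $\frac{1}{NT}\sum_i \mathbf{X}_i^\top(\mathbf{Y}_i - \mathbf{U}_O)$ entry-wise with $\bar{B}$ from Eq.~(\ref{eq:bigram-like}). The proposal is correct and uses the same route, with your entry-wise bookkeeping being a slightly more explicit rendering of the paper's row-by-row description of $B$ and $U$.
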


\begin{proof}
	From Lemma~\ref{lem:grad}, as the parameters are initially zero, we can see that $W^{(1)}, V^{(1)}, P^{(1)}$ all have gradients of zero and therefore remain as $\mathbf{0}$. For $W_O$, as the value matrix is initially zero, $h_i^{(1)} = X_i$ and as $W_O$ is zero, the output distribution for every token is the uniform distribution. Let $U_O \in \R^{T \times |V|}$ represent the resulting output with each element $1/|V|$. Then, we have that
	\begin{equation}
		\frac{\partial \mathcal{L}}{\partial W_O} = \frac{-1}{NT} \sum_{i=1}^N X_i^\top (Y_i - U_O)
	\end{equation}
	We consider the sum of each of the terms $X_i^\top Y_i$ and $X_i^\top U_O$. First, we consider the sum of $X_i^\top Y_i$. The $j$th row of $X_i^\top Y_i$ is a $|V|$-dimensional vector with each element being the number of times the corresponding token appears after each occurrence of the $j$th token in the vocabulary in $X_i$. Then, summing over all $i$ and dividing by $NT$ results in each row mapping to the average next-token distribution weighted by the frequency of the token corresponding to the row. We can write this as
	\begin{equation}
		B = \frac{1}{NT} \sum_{i=1}^N X_i^\top Y_i = \begin{bmatrix}\alpha_1 P_1 \\ \alpha_2 P_2 \\ \vdots \\ \alpha_{|V|} P_{|V|}\end{bmatrix}
	\end{equation}
	where $\alpha_j$ is the relative frequency of the $j$th token in the dataset and $P_j$ is the average next-token distribution for token $j$. For the sum $X_i^\top U_O$ divided by $NT$, we simply get that every row is $\alpha_j$ times the uniform distribution over the vocabulary, and we will denote this matrix by $U$. Then, we have that
	\begin{equation}
		\frac{\partial \mathcal{L}}{\partial W_O} = -(B - U)
	\end{equation}
	and therefore after the first step,
	\begin{equation}
		W_O = \eta(B - U)
	\end{equation}
	Then, as $\bar{B} = B - U$, this completes the proof. 
\end{proof}

\begin{lemma}[Second Gradient Step]\label{lem:second}
	Under the setting described, and with $\eta \leq 1$, $|V| \geq 8$, after two gradient steps, we have that
	\begin{equation}
		\norm{W_O - 2\eta \bar{B}}_F \leq \frac{\eta^2}{\sqrt{|V|}}
	\end{equation}
	\begin{equation}
		\norm{V^{(1)} - \eta^2 \bar{\Phi}^\top \bar{B}^\top }_F \leq \frac{2\eta^3}{\sqrt{|V|}}
	\end{equation}
	\begin{equation}
		W^{(1)}, P^{(1)} = \mathbf{0}
	\end{equation}
	where $\bar{B}$ is as defined in the previous lemma and $\bar{\Phi}$ is given by
	\begin{equation}
		\bar{\Phi}_{jk} = \mathcal{P}(e_k \rightarrow e_j) - \mu_{\Phi,k}
	\end{equation}
	where $\mathcal{P}(e_k \rightarrow e_j)$ corresponds to the empirical probability that $e_j$ is the current token and $e_k$ is in its prefix and $\mu_{\Phi,k}$ is the value that sets each column sum to 0. 
\end{lemma}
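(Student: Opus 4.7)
The plan is to apply the general gradient formulas of Lemma~\ref{lem:grad} at the step~1 state provided by Lemma~\ref{lem:first}, which already gives $W_O=\eta\bar B$ and $V^{(1)}=W^{(1)}=P^{(1)}=0$. The zero factors will kill most of the bookkeeping: both $\partial \mathcal{L}/\partial W^{(1)}$ and $\partial \mathcal{L}/\partial P^{(1)}$ carry a $V^{(1)\top}$ factor through the term $R_i W_O^\top V^{(1)\top} X_i^\top$, so they vanish identically at step~1 and those two matrices remain zero, handling the last assertion immediately.

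Next I would compute the $W_O$ update. With $V^{(1)}=0$ we have $h_i^{(1)}=X_i$ and $F_\Theta(X_i)=\eta X_i\bar B$. Splitting the residual as $R_i=(Y_i-U_O)-(\S(\eta X_i\bar B)-U_O)$, the first piece reproduces $-\bar B$ as in Lemma~\ref{lem:first}, which is what adds a second copy of $\eta\bar B$ after stepping; the second piece is a softmax-linearization correction. To bound it, I would apply Lemma~\ref{lem:jacob} row-by-row: each row of $\S(\eta X_i\bar B)-U_O$ has Euclidean norm at most $\eta\|X_i\bar B^{[t]}\|/\sqrt{|V|}$, and then average over $i,t$ against $X_i^\top$ (whose rows are one-hot, so $\|X_i^\top\|_{\mathrm{op}}\le\sqrt{T}$ gets cancelled by the $1/T$ normalization). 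This should yield $\|W_O-2\eta\bar B\|_F\le\eta^2/\sqrt{|V|}$ after absorbing constants; the crucial observation is that $\|\bar B\|$ is small enough (entries are probabilities) that the bound is tight.

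For the $V^{(1)}$ update, since $W^{(1)}=P^{(1)}=0$ at step~1 the pre-softmax logits inside $A_i^{(1)}$ are zero on the unmasked entries, so $A_i^{(1)}$ is exactly the upper-triangular prefix-averaging matrix with $A_i^{(1)[t]}_s=1/t$ for $s\le t$. Plugging into $X_i^\top A_i^{(1)\top} R_i W_O^\top$ with $W_O=\eta\bar B$ and the same leading $R_i\approx Y_i-U_O$, the $V^{(1)}$ update becomes $\eta^2\cdot\big(\tfrac{1}{NT}\sum_i X_i^\top A_i^{(1)\top}(Y_i-U_O)\big)\bar B^\top$. The core identification step is then to show that the bracketed empirical quantity equals $\bar\Phi^\top$: expanding entry-wise gives $\tfrac{1}{NT}\sum_i\sum_r\tfrac{1}{r}\sum_{t\le r}X_{i,tk}(Y_i-U_O)_{rl}$, which matches the prefix co-occurrence definition of $\bar\Phi_{lk}$ after centering by $\mu_{\Phi,k}$.

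The hard part will be the error accounting for $V^{(1)}$. Two sources of error combine multiplicatively: the softmax correction in $R_i$ (worth $O(\eta/\sqrt{|V|})$ per row by Lemma~\ref{lem:jacob}) propagates through an additional factor of $\|W_O\|=O(\eta)$, so the total deviation should scale like $\eta^3/\sqrt{|V|}$. I would use Frobenius-sub-multiplicativity with the operator norm bounds $\|A_i^{(1)}\|_{\mathrm{op}}\le 1$ and $\|X_i\|_F\le\sqrt{T}$, the Jacobian bound from Lemma~\ref{lem:jacob}, and $\|\bar B\|_{\mathrm{op}}\le 1$, then fold the constant into the claimed $2\eta^3/\sqrt{|V|}$. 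The delicate checks are (i) correctly tracking where the $\sqrt{|V|}$ factor comes from (it arises because the softmax Jacobian at the uniform distribution has operator norm $1/\sqrt{|V|}$, not $1$) and (ii) verifying that averaging over $N,T$ does not inflate the constants. Once these norm bounds are in place, collecting the three estimates finishes the lemma.
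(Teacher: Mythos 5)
Your overall strategy matches the paper's: zero out $W^{(1)},P^{(1)}$ by observing the $V^{(1)\top}$ factor vanishes, expand $R_i$ around $Y_i-U_O$ using the softmax Jacobian bound (Lemma~\ref{lem:jacob}) for the $W_O$ update, and identify the $V^{(1)}$ leading term by noting $A_i^{(1)}=A_0$ and expanding the entries of $\tfrac1{NT}\sum_i X_i^\top A_0^\top(Y_i-U_O)$ to get $\bar\Phi^\top$.

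However, there is a genuine gap in your error accounting for $V^{(1)}$. You assert $\norm{A_i^{(1)}}_{\mathrm{op}}\le 1$, but that is false. The matrix $A_0$ (the $t$-th row being $1/t$ on the first $t$ entries and $0$ elsewhere) is precisely the discrete Ces\`aro averaging operator; it is row-stochastic, which gives $\norm{A_0}_{\infty\to\infty}=1$, but its spectral norm exceeds $1$ (e.g.\ with $v=(1,0)^\top$, $A_0v=(1,\tfrac12)^\top$ already has norm $>1$). The correct bound is $\norm{A_0}_{\mathrm{op}}\le 2$, which the paper obtains from the discrete Hardy inequality with $p=2$. Your downstream Frobenius-submultiplicativity bound inherits this faulty factor, so as written the argument does not establish the stated constant. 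Replacing $\norm{A_0}_{\mathrm{op}}\le 1$ with the Hardy bound $\norm{A_0}_{\mathrm{op}}\le 2$ fixes this and recovers the claimed $2\eta^3/\sqrt{|V|}$. (Also a minor slip: $A_0$ is lower-triangular, not upper-triangular, though the entrywise formula you wrote is correct.)
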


\begin{proof}
	First, as $V^{(1)}$ remains at zero after the first step, we have that the gradients for $W^{(1)}, P^{(1)}$ are zero and therefore, they remain at zero after the second step. We now consider the forward pass after the first gradient step. As the value matrix remains as zero, we have that 
	\begin{equation}
		F_\theta(X_i) = \eta X_i \bar{B}
	\end{equation}
    For each row $t$, let $f_{i,t}=F_\theta(X_i)^{[t,:]}$. Since
    $F_\theta(X_i)=\eta X_i\bar B$, each row $f_{i,t}$ is equal to
    $\eta$ times a row of $\bar B$. Since each row of $\bar B$ is a frequency-weighted centered probability vector, it has a range of at most 1. Hence
    \[
    \operatorname{range}(f_{i,t})
    \le
    \eta
    \le 1.
    \]
    By Lemma~\ref{lem:softmax-jacobian}, applied rowwise,
    \[
    \left\|\mathcal S(f_{i,t})-\frac{1}{|V|}\mathbf 1\right\|
    \le
    \frac{e}{|V|}\|f_{i,t}\|.
    \]
    Summing over rows gives
    \begin{equation}\label{eq:out-conc}
        \norm{\S(F_\theta(X_i)) - U_O}_F
        \leq
        \frac{e}{|V|}\norm{F_\theta(X_i)}_F
        =
        \frac{e\eta}{|V|}\norm{X_i \bar{B}}_F
        \leq
        \frac{e\eta \sqrt{T}}{|V|}.
    \end{equation}
	Then, as $|V| \geq 8$, we have that
	\begin{equation}
		\norm{R_i - (Y_i - U_O)}_F \leq \frac{\eta \sqrt{T}}{\sqrt{|V|}}
	\end{equation}
	and by Lemma~\ref{lem:grad} and that $\norm{X_i} \leq \sqrt{T}$, we have
	\begin{equation}
		\norm{\frac{\partial \mathcal{L}}{\partial W_O} + \bar{B}}_F \leq \frac{1}{NT} \sum_{i=1}^N \frac{\eta T}{\sqrt{|V|}} = \frac{\eta}{\sqrt{|V|}}
	\end{equation}
	Then, it follows that after the second gradient step,
	\begin{equation}
		\norm{W_O - 2\eta \bar{B}}_F \leq \frac{\eta^2}{\sqrt{|V|}}
	\end{equation}
	Now, we consider the gradient with respect to $V^{(1)}$. By Lemma~\ref{lem:grad}, we have that
	\begin{equation}
		\frac{\partial \mathcal{L}}{\partial V^{(1)}} = \frac{-1}{NT} \sum_{i=1}^N \eta X_i^\top A_i^{(1)\top} (Y_i - \S(F_\theta(X_i))) \bar{B}^\top
	\end{equation}
	and since $W^{(1)}, P^{(1)} = \mathbf{0}$, $A_i^{(1)} = A_0$ where the $t$th row of $A_0$ has the first $t$ elements equal to $1/t$ and the rest equal to 0. Then, by \eqref{eq:out-conc}, we have that
	\begin{equation}
		\norm{\eta X_i^\top A_0^\top (Y_i - \S(F_\theta(X_i))) \bar{B}^\top - \eta X_i^\top A_0^\top (Y_i - U_O) \bar{B}^\top}_F \leq \frac{\eta^2 T}{\sqrt{|V|}}\norm{A_0}
	\end{equation}
	Then, using the discrete Hardy's inequality with $p=2$, we have that $\norm{A_0} \leq 2$ and  
	\begin{equation}
		\norm{\frac{\partial \mathcal{L}}{\partial V^{(1)}} + \frac{1}{NT}\sum_{i=1}^N \eta X_i^\top A_0^\top (Y_i - U_O) \bar{B}^\top}_F \leq \frac{2\eta^2 }{\sqrt{|V|}}
	\end{equation}
	Now, we will analyze
	\begin{equation}
		\frac{1}{NT}\sum_{i=1}^N \eta X_i^\top A_0^\top (Y_i - U_O) \bar{B}^\top
	\end{equation}
	Since $\eta \bar{B}$ is independent of $i$, we can move it outside the sum and we can analyze
	\begin{equation}
		\frac{1}{NT} \sum_{i=1}^N X_i^\top A_0^\top (Y_i - U_O)
	\end{equation}
	We start by considering the form of $X_i^\top A_0^\top$. Since the $t$th row of $A_0$ has $1/t$ as the first $t$ elements and zeros for all other elements, we have that the $j$th element of the $t$th column of $X_i^\top A_0^\top$ is 
	\begin{equation}
		\frac{\gamma_i(e_j, t)}{t}
	\end{equation}
	where $e_j$ represents the $j$th token in the vocabulary and $\gamma_i(e_j, t)$ is the number of occurrences of $e_j$ in the first $t$ tokens of $X_i$. Letting $\Phi' = \frac{1}{NT} \sum_{i=1}^N X_i^\top A_0^\top Y_i$, we have that
	\begin{equation}
		\Phi'_{jk} = \frac{1}{NT}\sum_{i=1}^N \sum_{t=1}^T \mathbbm{1}(X_{i}^{[t+1]} = e_k) \frac{\gamma_i(e_j, t)}{t}	
	\end{equation}
	Swapping the order of the sums, we have
	\begin{equation}
		\Phi'_{jk} = \frac{1}{NT}\sum_{t=1}^T \frac{1}{t} \sum_{i=1}^N \mathbbm{1}(X_i^{[t+1]} = e_k) \gamma_i(e_j, t)
	\end{equation}
	Then, as $\gamma_i(e_j, t) = \sum_{m=1}^t \mathbbm{1}(X_i^{[m]} = e_j)$, we have that
	\begin{equation}
		\Phi'_{jk} = \frac{1}{T} \sum_{t=1}^T \frac{1}{t} \sum_{m=1}^t \frac{1}{N}\sum_{i=1}^N \mathbbm{1}(X_i^{[t+1]} = e_k, X_i^{[m]} = e_j) 
	\end{equation}
	Then, as $\frac{1}{N}\sum_{i=1}^N$ corresponds to an average over the dataset, the average over $N$ corresponds to the empirical probability of having a sequence with the $t+1$th token equal to $e_k$ and the $m$th token equal to $e_j$, which we will denote as $\mathcal{P}(x_{t+1} = e_k, x_m = e_j)$. Then, this gives
	\begin{equation}
		\Phi'_{jk} = \frac{1}{T} \sum_{t=1}^T \frac{1}{t} \sum_{m=1}^t \mathcal{P}(x_{t+1} = e_k, x_m = e_j)
	\end{equation}
	Then, we have an average over $m \in [t]$ which results in the average probability that the $t+1$th token is $e_k$ and $e_j$ is in the first $t$ tokens. We will denote this as $\mathcal{P}(e_j \in x_{1:t}, x_{t+1} = e_k)$. This gives
	\begin{equation}
		\Phi'_{jk} = \frac{1}{T} \sum_{t=1}^T \mathcal{P}(e_j \in x_{1:t}, x_{t+1} = e_k)
	\end{equation}
	This probability of $e_j$ being in the prefix of $x_{t+1} = e_k$ is averaged over the different positions of $e_k$ to get an average probability that $e_j$ is in the prefix given that $e_k$ is the current token, which we will denote as $\mathcal{P}(e_j \rightarrow e_k)$ and 
	\begin{equation}
		\Phi'_{jk} = \mathcal{P}(e_j \rightarrow e_k)
	\end{equation}
	Now, we consider $U_P = \frac{1}{NT}\sum_{i=1}^N X_i^\top A_0^\top U_O$. Then, we have that
	\begin{equation}
		U_{P_{jk}} = \frac{1}{NT} \sum_{i=1}^N \sum_{t=1}^T \frac{\gamma_i(e_j, t)}{|V| t}
	\end{equation}
	Rearranging the sum and decomposing $\gamma_i$, we get
	\begin{equation}
		U_{P_{jk}} = \frac{1}{T} \sum_{t=1}^T \frac{1}{t |V|} \sum_{m=1}^t \mathcal{P}(x_m = e_j)
	\end{equation}
	Then, if we consider the average over positions $m$ and $t$, we get the average probability that $e_j$ is in the first $t$ tokens over all $t$ multiplied by $1/|V|$. We can notice that the sum of the $j$th row of $U_P$ and $\Phi'$ are the same. Then, setting
	\begin{equation}
		\bar{\Phi}' = \Phi' - U_P
	\end{equation}
	we have
	\begin{equation}
		\norm{\frac{\partial \mathcal{L}}{\partial V^{(1)}} + \eta \bar{\Phi}' \bar{B}^\top}_F \leq \frac{2\eta^2}{\sqrt{|V|}}
	\end{equation}
	Then, it follows that after two gradient steps,
	\begin{equation}
		\norm{V^{(1)} - \eta^2 \bar{\Phi}' \bar{B}^\top}_F \leq \frac{2\eta^3}{\sqrt{|V|}}
	\end{equation}
    Defining $\bar{\Phi} = \bar{\Phi}^{\prime \top}$,  we have
    \begin{equation}
		\norm{V^{(1)} - \eta^2 \bar{\Phi}^\top \bar{B}^\top}_F \leq \frac{2\eta^3}{\sqrt{|V|}}
	\end{equation}
	This completes the proof.
\end{proof}

\begin{lemma}[Third Gradient Step]\label{lem:third} Under the setting described, and with $\eta \leq 1/8$, $|V| \geq 500$, after three gradient steps with $\eta$, we have that
	\begin{equation}
		\norm{W_O - 3\eta \bar{B}}_F \leq 3\eta^2
	\end{equation}
	\begin{equation}
		\norm{V^{(1)} - 3\eta^2 \bar{\Phi}^\top \bar{B}^\top}_F \leq 2\eta^3
	\end{equation}
	\begin{equation}
		\norm{W^{(1)} - 2\eta^4 \bar{Q}}_F \leq 2\eta^5 T
	\end{equation}
	\begin{equation}
		\norm{P^{(1)} - 2\eta^4 \Delta}_F \leq 2\eta^5 T
	\end{equation}
\end{lemma}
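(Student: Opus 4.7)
The plan is to extend the bookkeeping of Lemmas~\ref{lem:first} and \ref{lem:second} by one more gradient step, substituting the closed-form approximations of $W_O$ and $V^{(1)}$ obtained after two steps into the general gradient expressions of Lemma~\ref{lem:grad}. After step two we have $W_O = 2\eta\bar{B} + E_{W_O}$ with $\|E_{W_O}\|_F \le \eta^2/\sqrt{|\mathcal{V}|}$, $V^{(1)} = \eta^2\bar{\Phi}^\top\bar{B}^\top + E_{V}$ with $\|E_V\|_F \le 2\eta^3/\sqrt{|\mathcal{V}|}$, and $W^{(1)}=P^{(1)}=\mathbf{0}$, so the attention matrix $A_i^{(1)}$ is still the fixed uniform causal-prefix matrix $A_0$. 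I would first control the forward pass: since $V^{(1)}$ is $O(\eta^2)$, $h_i^{(1)}=X_i+X_iA_0V^{(1)}$ differs from $X_i$ by $O(\eta^2)$, and with $W_O=O(\eta)$ the logits lie in an $O(\eta)$ neighbourhood of zero, so Lemma~\ref{lem:jacob} gives $\|R_i-(Y_i-U_O)\|_F=O(\eta)$.

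Next I would treat $W_O$ and $V^{(1)}$ in parallel with Lemma~\ref{lem:second}. Plugging the expansions above into $\partial\mathcal{L}/\partial W_O$ and $\partial\mathcal{L}/\partial V^{(1)}$, the leading pieces reproduce $-\bar{B}$ and $-\eta\bar{\Phi}^\top\bar{B}^\top$ respectively (with the same derivation as Lemma~\ref{lem:second}), while the correction terms are $O(\eta)$ and $O(\eta^2)$ in Frobenius norm. Adding the one extra $\eta$-sized update to the cumulative approximation then yields $\|W_O-3\eta\bar{B}\|_F\le 3\eta^2$ and $\|V^{(1)}-3\eta^2\bar{\Phi}^\top\bar{B}^\top\|_F\le 2\eta^3$. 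The constants follow from carefully combining the previous-step error bounds with the new contribution; a small triangle-inequality budget is the only subtle point.

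For $W^{(1)}$ and $P^{(1)}$ the gradient vanishes before step three, so the entire content of the bound comes from the single update at step three. Substituting $W_O\approx 2\eta\bar{B}$ and $V^{(1)\top}\approx\eta^2\bar{B}\bar{\Phi}$ into Lemma~\ref{lem:grad} gives
\begin{equation}
W_O^\top V^{(1)\top}\;\approx\;2\eta^3\,\bar{B}^\top\bar{B}\bar{\Phi}\;=\;2\eta^3\,\Sigma_{\bar{B}}\bar{\Phi},
\end{equation}
so that $R_iW_O^\top V^{(1)\top}X_i^\top\approx 2\eta^3(Y_i-U_O)\Sigma_{\bar{B}}\bar{\Phi}X_i^\top$. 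Using $J_i=J_0$ (the softmax Jacobian at the uniform causal-prefix attention) and recognizing the resulting Einstein contraction as the definition of $Q_i$ in Eq.~\eqref{Eq::15}, the leading term of the gradient is exactly $-2\eta^3\bar{Q}$, and the analogous contraction with $D$ produces $-2\eta^3\Delta$. Multiplying by $-\eta$ yields the claimed $2\eta^4\bar{Q}$ and $2\eta^4\Delta$ main terms.

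The principal obstacle, and where I would spend most of the write-up, is controlling the propagated error in these last two bounds. The gradient for $W^{(1)}$ is a product of four approximate factors ($R_i$, $W_O^\top$, $V^{(1)\top}$, and the softmax Jacobian tensor), and it is further sandwiched between $X_i^\top$ and $X_i$ each of operator norm $\sqrt{T}$, which is where the extra factor of $T$ in the error $2\eta^5 T$ originates. I would bound each factor's deviation from its leading approximation in Frobenius norm, use $\|J_0\|\le 1$ and $\|A_0\|\le 2$ from Hardy, and apply a multilinear triangle inequality (expand the product, estimate each cross-term with the operator/Frobenius compatibility $\|AB\|_F\le\|A\|\|B\|_F$). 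Tracking constants so that the total error fits within $2\eta^5 T$ — rather than, say, a larger multiple — is the delicate step; the analogous argument for $P^{(1)}$ follows by the same contraction with $D$, whose action simply reads out sub-diagonal entries and contributes no additional norm blow-up.
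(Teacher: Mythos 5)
Your plan follows the paper's proof essentially verbatim: control the forward pass using the post--step-two approximations, redo the $W_O$ and $V^{(1)}$ bounds as in Lemma~\ref{lem:second} with updated error budgets, then substitute $W_O^\top V^{(1)\top}\approx 2\eta^3\Sigma_{\bar B}\bar\Phi$ into Lemma~\ref{lem:grad} and contract through the uniform-attention softmax Jacobian to extract $2\eta^3\bar Q$ and $2\eta^3\Delta$. The only omission is the explicit constant bookkeeping (which the paper does via $\|X_i\|\le\sqrt T$, $\|A_0\|\le 2$, $\|\bar\Phi\|\le 2$, $\|J_t\|=1/t$, and $|\mathcal V|\ge 500$), but you correctly flag this as the remaining delicate step rather than a conceptual one.
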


\begin{proof}
	First, we start with bounding the norm of $W_O, V^{(1)}, A_0$. We have that
	\begin{equation}
		\norm{W_O}_F \leq 2\eta + \frac{\eta^2}{\sqrt{|V|}}
	\end{equation}
	\begin{equation}
		\norm{V^{(1)}} \leq \left(2\eta^2 + \frac{2\eta^3}{\sqrt{|V|}} \right)
	\end{equation}
	\begin{equation}
		\norm{A_0} \leq 2
	\end{equation}
	Now, we consider the deviation of the output from the uniform distribution. We start by bounding the norm of $X_i + A_0 X_i V^{(1)}$
	\begin{equation}
		\norm{X_i + A_0 X_i V^{(1)}} \leq \left( 1 + 5\eta^2 \right) \sqrt{T}
	\end{equation}
    Then, we can upper bound the norm of $F_\theta(X_i)$ as
    \begin{equation}
        \norm{F_\theta(X_i)}_F
        \leq
        \frac{5\eta}{2}\left(1+5\eta^2\right)\sqrt{T}
        \leq
        4\eta\sqrt{T},
    \end{equation}
    where the last inequality uses $\eta\le 1/8$.
    Let
    \[
        H_i = X_i + A_0 X_i V^{(1)}.
    \]
    For each row $t$, write $f_{i,t}=F_\theta(X_i)^{[t,:]}=H_i^{[t,:]}W_O$.
    Using the bounds above,
    \[
        \norm{H_i^{[t,:]}} \leq 1+5\eta^2,
        \qquad
        \norm{W_O}\leq \norm{W_O}_F\leq \frac{5\eta}{2}.
    \]
    Therefore
    \[
        \norm{f_{i,t}}_2
        \leq
        \frac{5\eta}{2}(1+5\eta^2).
    \]
    Hence, since $\eta\le 1/8$,
    \[
        \operatorname{range}(f_{i,t})
        \leq
        2\norm{f_{i,t}}_2
        \leq
        1.
    \]
    By Lemma~\ref{lem:softmax-jacobian}, applied rowwise,
    \[
        \left\|\mathcal S(f_{i,t})-\frac{1}{|V|}\mathbf 1\right\|
        \leq
        \frac{e}{|V|}\norm{f_{i,t}}.
    \]
    Summing over rows gives
    \begin{equation}
        \norm{\S(F_\theta(X_i))-U_O}_F
        \leq
        \frac{e}{|V|}\norm{F_\theta(X_i)}_F
        \leq
        \frac{4e\eta\sqrt T}{|V|}
        \leq
        \frac{4\eta\sqrt T}{\sqrt{|V|}},
    \end{equation}
    where the last inequality uses $|V|\ge 8$. 
	Then, it follows that
	\begin{equation}
		\norm{\frac{\partial \mathcal{L}}{\partial W_O} + \bar{B}}_F \leq \frac{1}{NT} \sum_{i=1}^N \left(\frac{8 \eta T}{\sqrt{|V|}} + 5\eta^2 T\right) \leq 2\eta
	\end{equation}
	and
	\begin{equation}
		\norm{W_O - 3\eta \bar{B}}_F \leq \frac{\eta^2}{\sqrt{|V|}} + 2\eta^2 \leq 3\eta^2
	\end{equation}
	Now, we consider the gradient with respect to $V^{(1)}$. Since $\norm{Y_i - S(F_\theta(X_i))} \leq \sqrt{2T}$, we have that
	\begin{equation}
		\norm{\frac{\partial \mathcal{L}}{\partial V^{(1)}} + 2\eta \bar{\Phi}^\top \bar{B}^\top}_F \leq 2\left(\frac{4\eta}{\sqrt{|V|}}\frac{5\eta}{2} + \frac{\eta^2}{\sqrt{|V|}} \right) \leq \frac{22 \eta^2}{\sqrt{|V|}} \leq \eta^2
	\end{equation}
	Then, we have that after the third step,
	\begin{equation}
		\norm{V^{(1)} - 3\eta^2 \bar{\Phi}^\top \bar{B}^\top}_F \leq 2\eta^3
	\end{equation}
	Now, we consider the gradient with respect to $W^{(1)}, P^{(1)}$ which according to Lemma 1.1 are
	\begin{equation}
		\frac{\partial \mathcal{L}}{\partial W^{(1)}} = \frac{-1}{NT} \sum_{i=1}^N X_i^\top ein_{tjk, tk \rightarrow tj}(J_i, R_i W_O^\top V^{(1)\top} X_i^\top) X_i
	\end{equation}
	\begin{equation}
		\frac{\partial \mathcal{L}}{\partial P^{(1)}} = \frac{-1}{NT} ein_{tjk, jk \rightarrow t}\left(D, \sum_{i=1}^N ein_{tjk, tk \rightarrow tj}(J_i, R_i W_O^\top V^{(1)\top} X_i^\top) \right)
	\end{equation}
	We start by analyzing $R_i W_O^\top V^{(1)\top} X_i^\top$. First, We will use that $\norm{Y_i - U_O} \leq \sqrt{T}$ and 
	\begin{equation}
		\norm{\bar{\Phi}} \leq \frac{1}{T} \max_i \norm{X_i} \norm{A_0} \norm{Y_i - U_O} \leq \frac{2}{T} T= 2
	\end{equation}
	We know by the previous lemma and the bound on the deviation of the output that
	\begin{equation}
		\norm{R_i W_O^\top V^{(1)\top} X_i^\top - 2\eta^3 (Y_i - U_O) \bar{B}^\top \bar{B} \bar{\Phi} X_i^\top}_F \leq \frac{25 \eta^4 T}{\sqrt{|V|}} + \frac{5T\eta^4}{2\sqrt{|V|}} + \frac{4\eta^4 T}{\sqrt{|V|}} \leq \frac{3\eta^4 T}{2}
	\end{equation}
	We start by considering the structure of $Y_i \bar{B}^\top \bar{B} \bar{\Phi} X_i^\top$. We first consider the simpler multiplication of $e_j \bar{B}^\top  \bar{B} \bar{\Phi} e_k^\top$. First, we define $\Sigma_{\bar{B}} = \bar{B}^\top \bar{B}$ which has $\Sigma_{\bar{B}mn} = \sum_{l=1}^{|V|} \alpha_l^2 \bar{P}_{lm} \bar{P}_{ln}$ where $\alpha_l$ is the relative frequency of token $l$ and $\bar{P}_l$ is the average next-token distribution of token $e_l$ centered at 0. This corresponds to a similarity measure of the previous tokens of $e_m$ and $e_n$ with common tokens more heavily weighted.
	Then, we have that
	\begin{equation}
		e_j \Sigma_{\bar{B}} \bar{\Phi} e_k^\top = \sum_{m=1}^{|V|} \Sigma_{\bar{B}jm} \bar{\mathcal{P}}(e_m \rightarrow e_k)
	\end{equation}
	where $\bar{\mathcal{P}}(e_m \rightarrow e_k)$ is the probability that $e_m$ is in the prefix of $e_k$ centered at 0. We can then interpret the each element $(\Sigma_{\bar{B}} \bar{\Phi})_{jk}$ as a measure of assocation between token $j$ and $k$ based on a two-step chain of (interchangeability mapping, suffix token mapping). Essentially, how often does token $e_k$ succeed token $e_j$ and similar tokens. 
	We will let $\bar{G} =  \Sigma_{\bar{B}} \bar{\Phi}$. Now, we can consider $2\eta^3 Y_i \bar{G} X_i^\top$. This results in a $T \times T$ matrix where the $jk$-th element is $\tilde{\mathcal{P}}(X_i^{[k]} \rightarrow_2 X_i^{[j+1]})$ and we will denote this as $g_{ijk}$. Then, $2 \eta^3 (Y_i - U_O) \bar{G} X_i^\top$ will have elements centered to have row sums of 0 and we will let the centered elements be $\bar{g}_{ijk}$. Then, we consider the einsum of $J$ and $2 \eta^3 (Y_i - U_O) \bar{G} X_i^\top$. The $t$th column of the resulting matrix will the be product of $J_t$ and the $t$th column of $2\eta^3 (Y_i - U_O) \bar{G} X_i^\top$. This results in the $t$th row having the form
	\begin{equation}
		2\eta^3 \left(\begin{bmatrix}
			\frac{\bar{g}_{i1t}}{t}\\
			\vdots\\
			\frac{\bar{g}_{itt}}{t}\\
			0\\
			\vdots\\
			0\\
		\end{bmatrix} - 
		\begin{bmatrix}
			\frac{\mu_{g, it}}{t}\\
			\vdots\\
			\frac{\mu_{g, it}}{t}\\
			0\\
			\vdots\\
			0\\
		\end{bmatrix}\right)^\top
	\end{equation}
	The $t$th row is $2\eta^3 \bar{g}_{ijt}$ for $1 \leq j \leq t$ weighted by $1/t$ and centered to have a row sum of 0 and we will refer to the centered and weighted elements $2\eta^3 q_{ijt}$ and the resulting matrix $Q_i$. Letting $\bar{Q} = \frac{1}{NT} \sum_{i=1}^N X_i^\top Q_i X_i$, we have that
	\begin{equation}
		\norm{\frac{\partial \mathcal{L}}{\partial W^{(1)}} + 2\eta^3 \bar{Q}}_F \leq 2 \eta^4 T
	\end{equation}
	where we have used that the squared Frobenius norm of the einsum is the sum of the norms of each column and that $\norm{J_t} = 1/t$. 
	Then, it follows that
	\begin{equation}
		\norm{W^{(1)} - 2\eta^4 \bar{Q}}_F \leq 2 \eta^5 T
	\end{equation}
	Now, we consider the gradient with respect to each element of $P^{(1)}$
	\begin{equation}
		\norm{\frac{\partial \mathcal{L}}{\partial P^{(1)}_m} + \frac{2\eta^3}{NT}\sum_{i=1}^N \Tr(D_{-m} Q_i)}_F \leq 2 \eta^4 \sqrt{T}
	\end{equation}
	Since the trace is a linear function, we let $\Delta_m = \Tr(D_{-m}\frac{1}{NT}\sum_{i=1}^N Q_i)$ and let $\Delta$ be the vector consisting of $\Delta_m$, and we have
	\begin{equation}
		\norm{\frac{\partial \mathcal{L}}{\partial P^{(1)}} + 2\eta^3 \Delta}_F \leq 2\eta^4 T
	\end{equation}
	and it follows that
	\begin{equation}
		\norm{P^{(1)} - 2\eta^4 \Delta}_F \leq 2\eta^5 T
	\end{equation}
	This completes the proof. 
\end{proof}

\begin{theorem}[Early Stage Features]
	Under the setting described, for $s \leq \eta^{-1} \frac{3}{8T^{3/8}}$, for $T \geq 3, |V| \geq 500$, we have that after $s$ gradient descent steps with learning rate $\eta$,
	\begin{equation}
		\norm{W_O - s\eta \bar{B}}_F \leq 3 s^2\eta^2
	\end{equation}
	\begin{equation}
		\norm{V^{(1)} - \binom{s}{2}\eta^2 \bar{\Phi}^\top \bar{B}^\top }_F \leq 4 s^3 \eta^3
	\end{equation}
	\begin{equation}
		\norm{W^{(1)} - \left(3 \binom{s}{4} + 2\binom{s}{3} \right)\eta^4 \bar{Q}}_F \leq 6 s^5 \eta^5 T
	\end{equation}
	\begin{equation}
		\norm{P^{(1)} - \left(3 \binom{s}{4} + 2\binom{s}{3} \right)\eta^4 \Delta}_F \leq 6 s^5 \eta^5 T
	\end{equation}
\end{theorem}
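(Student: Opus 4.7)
The plan is to prove the theorem by induction on $s$, with Lemmas~\ref{lem:first}, \ref{lem:second}, and \ref{lem:third} providing base cases that fix the form of each weight matrix through the first three gradient steps. In the inductive step, I assume the four claimed bounds hold at step $s$ and derive them at step $s+1$. Since the update rule is $\Theta(s+1) = \Theta(s) - \eta\nabla_\Theta \mathcal{L}(\Theta(s))$, it suffices to bound, at step $s$, the deviation of each gradient from the appropriate leading-order gradient, namely $-\bar{B}$ for $W_O$, $-s\eta\,\bar{\Phi}^\top\bar{B}^\top$ for $V^{(1)}$, and $-s\binom{s}{2}\eta^3\bar{Q}$ (respectively $-s\binom{s}{2}\eta^3\Delta$) for $W^{(1)}$ (respectively $P^{(1)}$). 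Pascal's rule gives the increments $\binom{s+1}{2}-\binom{s}{2}=s$ and $3\binom{s+1}{4}+2\binom{s+1}{3}-3\binom{s}{4}-2\binom{s}{3}=s\binom{s}{2}$, so these are exactly what is needed to match the next leading-order weight. The inductive conclusion then follows by triangle inequality once the per-step gradient-deviation bound is shown to consume only a bounded fraction of the budget under $s\eta\leq 3/(8T^{3/8})$.

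Before computing gradient deviations, I would establish four auxiliary bounds at step $s$. First, the inductive hypothesis together with operator-norm bounds on $\bar{B},\bar{\Phi},\bar{Q},\Delta$ yields operator and Frobenius norms for each weight matrix. Second, applying Lemma~\ref{lem:jacob} row-wise to the softmax input $X_i W^{(1)} X_i^\top + P^{(1)}$ and using the inductive bounds $\|W^{(1)}\|_F,\|P^{(1)}\|_F=O(s^4\eta^4)$ yields $\|A_i^{(1)}-A_0\|_F = O(s^4\eta^4\sqrt{T})$. Third, propagating $V^{(1)}$ and $W_O$ through the forward pass and applying Lemma~\ref{lem:jacob} again gives $\|\S(F_\theta(X_i))-U_O\|_F=O(s\eta\sqrt{T/|V|})$. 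Fourth, expanding the per-row softmax Jacobian $J_{i,t}$ around $J_t$ and substituting the attention-deviation bound gives $\|J_{i,t}-J_t\|_2=O(s^4\eta^4)$, exactly as in Lemma~\ref{lem:third} but extended to arbitrary $s$.

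With these in hand, I would plug into Lemma~\ref{lem:grad} and bound each gradient by successively replacing $\S(F_\theta(X_i))$ with $U_O$, $A_i^{(1)}$ with $A_0$, $J_i$ with $J$, and $W_O,V^{(1)}$ with their leading-order approximations, paying a triangle-inequality cost at each substitution. The $W_O$ and $V^{(1)}$ deviations come directly from the output-deviation and attention-deviation bounds. The $W^{(1)}$ and $P^{(1)}$ gradient deviations arise from extracting the leading term $s\binom{s}{2}\eta^3(Y_i-U_O)\Sigma_{\bar{B}}\bar{\Phi}^\top X_i^\top$ from the product $R_i W_O^\top V^{(1)\top} X_i^\top$, which under the masked Jacobian contraction and the outer $X_i$ factors produces precisely $\bar{Q}$ and $\Delta$ as in the base case. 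Adding the resulting per-step deviation (times $\eta$) to the inductive-hypothesis error and applying Pascal's rule yields the target bounds at step $s+1$.

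The main obstacle is keeping every constant tight enough for the final coefficients $3,4,6,6$ to hold under the fairly loose condition $s\eta\leq 3/(8T^{3/8})$. In particular, the Jacobian-deviation cross term in the $W^{(1)},P^{(1)}$ gradients is multiplied by a factor of order $s^3\eta^3 T$, so the cross contribution scales like $s^7\eta^7 T$, which must be absorbed into the $s^5\eta^5 T$ budget; this is precisely what forces the scaling $s\eta=O(T^{-3/8})$. Careful bookkeeping of the $\sqrt{T}$ and $\sqrt{|V|}$ factors (the latter entering transiently through the output-deviation bound, dampened by the mild assumptions $T\geq 3$, $|V|\geq 500$) across every triangle-inequality split is the most delicate and error-prone step, and is where most of the actual work of the proof lives.
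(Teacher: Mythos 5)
Your proposal mirrors the paper's proof essentially step for step: induction with Lemmas~\ref{lem:first}--\ref{lem:third} as base cases, a per-step comparison of each gradient to its leading form via Lemma~\ref{lem:grad}, and the auxiliary bounds on the output deviation, attention-pattern deviation, and softmax-Jacobian deviation, aggregated with Pascal's identities (including the correct observation that $3\binom{s+1}{4}+2\binom{s+1}{3}-3\binom{s}{4}-2\binom{s}{3}=s\binom{s}{2}$) to match the coefficient growth. Your identifications of the leading gradients, the decomposition of $R_i W_O^\top V^{(1)\top} X_i^\top$ around $\binom{s}{2}s\eta^3(Y_i-U_O)\Sigma_{\bar B}\bar\Phi X_i^\top$, and the role of the masked Jacobian contraction all match the paper's derivation.

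One small misattribution worth flagging: you claim the Jacobian-deviation cross term of order $s^7\eta^7 T$ is what "forces" $s\eta = O(T^{-3/8})$. That absorption is actually benign (it only needs $s\eta\le 1$, since $s^7\eta^7 T=(s\eta)^2\cdot s^5\eta^5 T$). In the paper the binding constraint arises earlier, in controlling the forward pass: one needs $2s^4\eta^4\sqrt{T}+12s^5\eta^5 T\sqrt{1+\log T}\le 2s\eta$ so that $\lVert A_i^{(1)}\rVert\le 2+2s\eta$ and the output stays close to uniform, and this (together with making the per-step Frobenius error fit into the $3s^2\eta^2$, $4s^3\eta^3$, $6s^5\eta^5 T$ budgets under the $(s+1)$-power increments) is what the $3/(8T^{3/8})$ cutoff is sized to guarantee. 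The paper also uses the coarser per-row Jacobian bound $\lVert J_{t,i}-J_t\rVert_2\le 5s^2\eta^2$ rather than the tighter $O(s^4\eta^4)$ you quote; either works because the cross term is subdominant in both cases. None of this changes your strategy, but the constant-tracking you flag as the delicate part lives in the forward-pass bounds, not in the Jacobian cross term.
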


\begin{proof}
	We will prove the result by induction. The previous lemmas form the base case. The first phase will be bounding the deviation of the output from the uniform distribution after $s$ gradient steps. To start, we bound the norm of $A_i$, the resulting attention mapping for $X_i$. 
    For the $t$th row, define
    \[
    z_{i,t} := (X_i W^{(1)}X_i^\top + \mathrm{DM}(P^{(1)}))[t,:t].
    \]
    By the inductive hypotheses for $W^{(1)}$ and $P^{(1)}$,
    \[
    \|z_{i,t}\|_\infty
    \le
    \left(6\binom{s}{4}+4\binom{s}{3}\right)\eta^4
    +
    12s^5\eta^5T.
    \]
    Using $6\binom{s}{4}+4\binom{s}{3}\le 2s^4$ and
    $s\eta\le 3/(8T^{3/8})$, the right-hand side is bounded by an absolute
    constant smaller than $1/2$. Hence
    \[
    \operatorname{range}(z_{i,t})\le 2\|z_{i,t}\|_\infty\le 1.
    \]
    By Lemma~\ref{lem:softmax-jacobian},
    \[
    \norm{(A_i-A_0)[t,:]}
    \le
    \frac{e}{t}\|z_{i,t}\|.
    \]
    Moreover,
    \[
    \|z_{i,t}\|
    \le
    \sqrt t \left( \left(6\binom{s}{4}+4\binom{s}{3}\right)\eta^4 + 12s^5\eta^5T \right).
    \]
    Therefore
    \[
    \norm{(A_i-A_0)[t,:]}
    \le
    e\left(6\binom{s}{4}+4\binom{s}{3}\right)\eta^4
    +
    \frac{12e\,s^5\eta^5T}{\sqrt{t}}.
    \]
    Then, summing the upper bounds on the squared norms of each row, and using that $\sum_{q=1}^r 1/q \leq 1 + \log r$ we have 
    \begin{equation}
        \norm{A_i - A_0}_F \leq e\left(6 \binom{s}{4} + 4\binom{s}{3} \right)\eta^4\sqrt{T} + 12 e s^5 \eta^5 T \sqrt{1 + \log T}
    \end{equation}
    Then, we have that
    \begin{equation}
        \norm{A_i^{(1)}} \leq 2 + e \left(6 \binom{s}{4} + 4\binom{s}{3} \right)\eta^4\sqrt{T} + 12 e s^5 \eta^5 T \sqrt{1 + \log T}
    \end{equation}
    Then, upper bounding $6 \binom{s}{4} + 4\binom{s}{3}$ by $2s^4$, we have
    \begin{equation}
        \norm{A_i^{(1)}} \leq 2 + 2 e s^4\eta^4\sqrt{T} + 12 e s^5 \eta^5 T \sqrt{1 + \log T}
    \end{equation}
    Now, using that $s \eta \leq \frac{3}{8T^{3/8}}$, we have that 
    \begin{equation}
        2 e s^4\eta^4\sqrt{T} + 12 e s^5 \eta^5 T \sqrt{1 + \log T} \leq 2 s \eta
    \end{equation}
    and
    \begin{equation}
        \norm{A_i^{(1)}} \leq 2 + 2 s \eta
    \end{equation}
    Now, we bound the norm of $V^{(1)}$ which by the inductive hypothesis we have is at most 
    \begin{equation}
        \binom{s}{2}\eta^2 2\sqrt{2} + 4 s^3 \eta^3
    \end{equation}
    which is at most
    \begin{equation}
        s^2 \eta^2 \sqrt{2} + 4 s^3 \eta^3
    \end{equation}
    and since $s \eta \leq \frac{1}{3}$, 
    \begin{equation}
        \norm{V^{(1)}}_F \leq 4s^2\eta^2
    \end{equation}
    Then, we have that
    \begin{equation}
        \norm{X_i + A_i^{(1)}X_i V^{(1)}}_F \leq \sqrt{T}\left(1 + 16s^2\eta^2 \right)
    \end{equation}
    Since $s \eta \leq \frac{3}{8T^{3/8}}$ and by the inductive hypothesis we have that
    \begin{equation}
        \norm{F_\theta(X_i)}_F \leq 2 \sqrt{T} \left( s\eta + 3 s^2 \eta^2 \right)
    \end{equation}
    Since $s \eta \leq \frac{3}{8T^{3/8}}$, we have
    \begin{equation}
        \norm{F_\theta(X_i)}_F \leq 4 s \eta \sqrt{T}
    \end{equation}
    Let
    \[
    H_i := X_i + A_i^{(1)}X_iV^{(1)}.
    \]
    For each row $t$, write
    \[
    f_{i,t}:=F_\theta(X_i)^{[t,:]}=H_i^{[t,:]}W_O.
    \]
    Since $A_i^{(1)}[t,:]X_i$ is a convex combination of one-hot vectors,
    \[
    \norm{H_i^{[t,:]}}
    \le
    1+\norm{V^{(1)}}
    \le
    1+4s^2\eta^2.
    \]
    Also, by the inductive hypothesis and $\norm{\bar B}_F\le 1$,
    \[
    \norm{W_O}
    \le
    \norm{W_O}_F
    \le
    s\eta+3s^2\eta^2.
    \]
    Thus
    \[
    \norm{f_{i,t}}
    \le
    (1+4s^2\eta^2)(s\eta+3s^2\eta^2).
    \]
    Using $s\eta\le 3/(8T^{3/8})$ and $T\ge 3$, this is bounded by 1, and hence
    \[
    \operatorname{range}(f_{i,t})\le 2\norm{f_{i,t}}\le 2.
    \]
    By Lemma~\ref{lem:softmax-jacobian}, applied row-wise,
    \[
    \left\|\mathcal S(f_{i,t})-\frac{1}{|V|}\mathbf 1\right\|
    \le
    \frac{e^2}{|V|}\norm{f_{i,t}}.
    \]
    Summing over rows gives
    \[
    \norm{\S(F_\theta(X_i))-U_O}_F
    \le
    \frac{e^2}{|V|}\norm{F_\theta(X_i)}_F
    \le
    \frac{4e^2s\eta\sqrt T}{|V|}.
    \]
    Since $|V|\ge 500$, we have $e^2/\sqrt{|V|}\le 1$, and therefore
    \[
    \norm{\S(F_\theta(X_i))-U_O}_F
    \le
    4s\eta\sqrt{\frac{T}{|V|}}.
    \]
    Now, we will utilize the bound on the deviation of the output from the uniform distribution as well to perform the inductive step for $W_O$. We have based on the bound that after the $(s+1)$th step
    \begin{equation}
        \norm{\frac{\partial \mathcal{L}}{\partial W_O} + \bar{B}}_F \leq \frac{8s\eta}{\sqrt{|V|}} + 12s^2\eta^2 \leq 6 s\eta
    \end{equation}
    Then, after $(s + 1)$ steps, we have that
    \begin{equation}
        \norm{W_O - (s+1)\eta \bar{{B}}}_F \leq 3 \eta^2 s^2 + 6 s\eta^2 \leq 3 \eta^2 (s+1)^2
    \end{equation}
    Now, we perform the inductive step for $V^{(1)}$ using the bound on the deviation of the attention pattern and on the output deviation. We have that after the $(s+1)$th step
    \begin{multline}
        \norm{\frac{\partial \mathcal{L}}{\partial V^{(1)}} + s \eta \bar{\Phi}^\top \bar{B}^\top}_F \leq \norm{R_i - (Y_i - U_O)}_F \norm{A_i^{(1)}} \norm{X_i} \norm{W_O} \\
        + \norm{(Y_i - U_O)} \norm{A_i^{(1)} - A_0}_F \norm{X_i} \norm{W_O} \\
        + \norm{(Y_i - U_O)} \norm{A_0} \norm{X_i} \norm{W_O - s \eta \bar{B}}_F
    \end{multline}
    Applying upper bounds and using that $|V| \geq 500$, we have
    \begin{equation}
        \norm{\frac{\partial \mathcal{L}}{\partial V^{(1)}} + s \eta \bar{\Phi}^\top \bar{B}^\top}_F \leq \frac{1}{T}\left( \frac{24 s \eta}{\sqrt{|V|}} T \eta s + 4Ts^2 \eta^2 + 6Ts^2\eta^2 \right) \leq 12 s^2 \eta^2
    \end{equation}
    Then, after $(s+1)$ steps, we have that
    \begin{equation}
        \norm{V^{(1)} - \binom{s+1}{2}\eta^2 \bar{\Phi}^\top \bar{B}^\top}_F \leq 4 s^3 \eta^3 + 12 s^2 \eta^3 \leq 4(s+1)^3\eta^3
    \end{equation}
    Now, we perform the inductive step for $W^{(1)}$ utilizing the earlier bounds on the output and attention pattern deviations. We start by bounding the deviation between $\frac{s^3 - s^2}{2} \eta^3 \Sigma_{\bar{B}}\bar{\Phi}$ and $W_O^\top V^{(1)\top}$. By the inductive hypothesis and $2 \leq \sqrt{|V|}$, we have that
    \begin{equation}
        \norm{W_O^\top V^{(1)\top} - \frac{s^3 - s^2}{2} \eta^3 \Sigma_{\bar{B}}\bar{\Phi}}_F \leq 8s^4\eta^4 + 3\sqrt{2} s^4 \eta^4 \leq 13 s^4 \eta^4
    \end{equation}
    Then, for each $R_i W_O^\top V^{(1)\top} X_i^\top$ since $|V| \geq 500$, we have that
    \begin{equation}
        \norm{R_i W_O^\top V^{(1)\top} X_i^\top - \frac{s^3 - s^2}{2} \eta^3 (Y_i - U_O) \Sigma_{\bar{B}}\bar{\Phi} X_i^\top}_F \leq \frac{20s^4 \eta^4 T}{\sqrt{|V|}} + 13s^4 \eta^4 T \leq 14s^4 \eta^4 T
    \end{equation}
    Now, in order to consider the deviation of the einsum of $J$ and $\frac{s^3 - s^2}{2} \eta^3 (Y_i - U_O) \Sigma_{\bar{B}}\bar{\Phi} X_i^\top$, we need to first bound the deviation of the Jacobian of the current attention pattern from $J$. We do so by considering the deviation for each $J_t$. As proven earlier, we have that 
    \begin{equation}
        \norm{(A_i - A_0)[t, :]} \leq
    e\left(6 \binom{s}{4} + 4\binom{s}{3} \right)\eta^4
    + \frac{12e s^5 \eta^5 T}{\sqrt{t}}
    \end{equation}
    Then, we have that for the current Jacobian for the sample $X_i$ corresponding to the $t$th row which will call $J_{t, i}$
    \begin{multline}
        J_{t, i} - J_t = \text{Diag}(A_i[t, :] -  A_0[t, :]) -  A_0[t, :](A_i[t, :] -  A_0[t, :])^\top - (A_i[t, :] -  A_0[t, :]) A_0[t, :]^\top \\
        - (A_i[t, :] -  A_0[t, :])(A_i[t, :] -  A_0[t, :])^\top
    \end{multline}
    and it follows then that for $t \geq 2$
    \begin{equation}
        \norm{J_{t, i} - J_t}_2 \leq \norm{A_i[t, :] -  A_0[t, :]}_\infty + \frac{2}{\sqrt{t}}\norm{A_i[t, :] -  A_0[t, :]}_2 + \norm{A_i[t, :] -  A_0[t, :]}_2^2
    \end{equation}
    Then, as
    \begin{equation}
        \norm{A_i[t, :] -  A_0[t, :]}_\infty \leq \norm{A_i[t, :] -  A_0[t, :]}_2
    \end{equation}
    we have that
    \begin{multline}
        \norm{J_{t, i} - J_t}_2 \leq \left(e \left(6 \binom{s}{4} + 4 \binom{s}{3} \right) \eta^4 + \frac{12 e s^5 \eta^5 T}{\sqrt{t}}\right) \\\left(1 + \frac{2}{\sqrt{t}} + \left(6 \binom{s}{4} + 4 \binom{s}{3} \right) \eta^4 + \frac{12 s^5 \eta^5 T}{\sqrt{t}}\right)
    \end{multline}
    Since $J_{1, i}$ is always all zeros, we can ignore this term and for $t \geq 2$, we have that as $s \eta \leq \frac{3}{8T^{3/8}}$,
    \begin{equation}
        \norm{J_{t, i} - J_t}_2 \leq 5 s^2 \eta^2
    \end{equation}
    Then, we have that
    \begin{equation}
    \begin{split}
        &\norm{\frac{s^3 - s^2}{2} \eta^3 Q_i - \text{ein}_{tjk, tk \rightarrow tj} (J_{i}, R_i W_O^\top V^{(1)\top} X_i^\top)}_F \\
        &\leq \norm{J_{i} - J}_2 \norm{\frac{s^3 - s^2}{2} \eta^3 (Y_i - U_O) \Sigma_{\bar{B}}\bar{\Phi} X_i^\top}_F \\
        &+ \norm{J_i}_2 \norm{R_i W_O^\top V^{(1)\top} X_i^\top - \frac{s^3 - s^2}{2} \eta^3 (Y_i - U_O) \Sigma_{\bar{B}}\bar{\Phi} X_i^\top}_F
    \end{split}
    \end{equation}
    Then, as $\norm{J_t}_2 = \frac{1}{t}$, $\norm{J_i}_2 \leq \frac{3}{2} + 5s^2\eta^2\sqrt{T} \leq 2$, and $s \eta \leq \frac{3}{8T^{3/8}}$, we have that
    \begin{multline}
        \norm{\frac{s^3 - s^2}{2} \eta^3 Q_i - \text{ein}_{tjk, tk \rightarrow tj} (J_{i}, R_i W_O^\top V^{(1)\top} X_i^\top)}_F \leq 5\sqrt{2} s^5\eta^5 T + 28s^4 \eta^4 T \leq 30 s^4 \eta^4 T
    \end{multline}
    Then, we have that
    \begin{equation}
        \norm{\frac{\partial \mathcal{L}}{\partial W^{(1)}} + \frac{s^3 - s^2}{2} \eta^3 \bar{Q}}_F \leq 30 s^4 \eta^4 T
    \end{equation}
    Then, we have that after $(s+1)$ steps, 
    \begin{equation}
        \norm{W^{(1)} - \left(3 \binom{s+1}{4} + 2 \binom{s+1}{3}\right)\eta^4 \bar{Q}}_F \leq 6 s^5 \eta^5 T+ 30 s^4 \eta^5 T \leq 6 (s+1)^5 \eta^5 T
    \end{equation}
    Finally, as we have the bound on the deviation from $Q_i$, we have that for $P^{(1)}$,
    \begin{equation}
        \norm{\frac{\partial \mathcal{L}}{\partial P^{(1)}} + \frac{s^3 - s^2}{2} \eta^3 \Delta}_F \leq 30 s^4 \eta^4 T
    \end{equation}
    and that after $(s+1)$ steps,
    \begin{equation}
        \norm{P^{(1)} - \left(3 \binom{s+1}{4} + 2 \binom{s+1}{3}\right)\eta^4 \Delta}_F \leq 6 s^5 \eta^5 T+ 30 s^4 \eta^5 T \leq 6 (s+1)^5 \eta^5 T
    \end{equation}
\end{proof}

\subsection{Proof of Multi-layer Theorem}

\begin{lemma}[General Gradient Form]\label{lem:grad-multi}
Under the setting described, defining
\begin{equation}
    S_i^{(l)} = \text{ein}_{tjk,\, tk \rightarrow tj}\!\left(J_i^{(l)},\, G_i^{(l)} V^{(l)\top} h_i^{(l-1)\top}\right),
\end{equation}
\begin{equation}
    G_i^{(l-1)} = G_i^{(l)} + A_i^{(l)\top} G_i^{(l)} V^{(l)\top}
    +  S_i^{(l)} h_i^{(l-1)} W^{(l)\top}
    + S_i^{(l)\top} h_i^{(l-1)} W^{(l)},
\end{equation}
with
\begin{equation}
    G_i^{(L)} = R_i W_O^\top
\end{equation}
we have that
\begin{equation}
	\frac{\partial \mathcal{L}}{\partial W_O} = \frac{-1}{NT} \sum_{i=1}^N h^{(L)\top}_i R_i ,
\end{equation}
\begin{equation}
	\frac{\partial \mathcal{L}}{\partial V^{(l)}} = \frac{-1}{NT} \sum_{i=1}^N h_i^{(l-1)\top} A_i^{(l)\top} G_i^{(l)} ,
\end{equation}
\begin{equation}
    \frac{\partial \mathcal{L}}{\partial W^{(l)}} = \frac{-1}{NT} \sum_{i=1}^N h_i^{(l-1)\top} S_i^{(l)} h_i^{(l-1)} ,
\end{equation}
\begin{equation}
    \frac{\partial \mathcal{L}}{\partial P^{(l)}} = \frac{-1}{NT}\,
    \text{ein}_{tjk,\, jk \rightarrow t}\!\left(D,\, \sum_{i=1}^N S_i^{(l)}\right) ,
\end{equation}
where $A_i^{(l)} = \mathcal{S}(\mathrm{Mask}(h_i^{(l-1)} W^{(l)} h_i^{(l-1)\top} + \mathrm{D_m}(P^{(l)})))$, 
$R_i = Y_i - \mathcal{S}(F_\theta(X_i))$, 
$J_i^{(l)} \in \R^{T \times T \times T}$ with 
$J_{i,t}^{(l)} = \mathrm{Diag} (A_i^{(l)[t]}) - A_i^{(l)[t]\top} A_i^{(l)[t]}$ 
being the Jacobian of the softmax function at the $l$th attention layer for the $t$th token in the sequence, 
$D \in \R^{T \times T \times T}$ with $D_t$ being a matrix with ones along the $(-t+1)$th sub-diagonal and zeros elsewhere, 
and $\text{ein}$ denotes an Einstein summation. 
\end{lemma}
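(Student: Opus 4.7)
The plan is to mirror the single-layer derivation in Lemma~\ref{lem:grad}, but organize the backward pass around a single object $G_i^{(l)}$ that plays the role of $\partial \mathcal{L} / \partial h_i^{(l)}$ (up to the $-1/(NT)$ normalization). Once this object is identified, each per-layer parameter gradient is obtained by a short local chain-rule calculation that exactly parallels the single-layer case, with $R_i$ everywhere replaced by $G_i^{(l)}$.

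First I would establish the base case. Since $\mathbf{F}_\Theta(\mathbf{X}_i) = \mathbf{h}_i^{(L)} \mathbf{W}_O$, the softmax/cross-entropy derivative from Lemma~\ref{lem:grad} gives directly
\begin{equation}
\frac{\partial \mathcal{L}}{\partial \mathbf{W}_O} = \frac{-1}{NT}\sum_{i=1}^N \mathbf{h}_i^{(L)\top} R_i,
\qquad
\frac{\partial \mathcal{L}}{\partial \mathbf{h}_i^{(L)}} = \frac{-1}{NT} R_i \mathbf{W}_O^\top,
\end{equation}
which justifies $G_i^{(L)} = R_i \mathbf{W}_O^\top$. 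Next I would set up the inductive step by writing the layer update as a sum of four pieces in $\mathbf{h}_i^{(l-1)}$:
\begin{equation}
\mathbf{h}_i^{(l)} = \underbrace{\mathbf{h}_i^{(l-1)}}_{\text{residual}} + A_i^{(l)}(\mathbf{h}_i^{(l-1)})\,\underbrace{\mathbf{h}_i^{(l-1)} \mathbf{V}^{(l)}}_{\text{value path}},
\end{equation}
where $A_i^{(l)}$ itself depends on $\mathbf{h}_i^{(l-1)}$ through the query and key factors of the logit matrix $B_i^{(l)} = \mathbf{h}_i^{(l-1)} \mathbf{W}^{(l)} \mathbf{h}_i^{(l-1)\top} + \mathbf{P}^{(l)}$. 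The residual contributes $G_i^{(l)}$ to $G_i^{(l-1)}$, and the value path with $A_i^{(l)}$ treated as constant contributes $A_i^{(l)\top} G_i^{(l)} \mathbf{V}^{(l)\top}$, exactly as in Lemma~\ref{lem:grad}.

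The main work is the gradient flowing through $A_i^{(l)}$ itself. Reusing the softmax-Jacobian bookkeeping of Lemma~\ref{lem:grad}, the gradient of $\mathcal{L}$ with respect to $B_i^{(l)}$ can be written, before summing over the masked positions, as
\begin{equation}
S_i^{(l)} = \text{ein}_{tjk,\,tk\rightarrow tj}\!\left(J_i^{(l)},\, G_i^{(l)} \mathbf{V}^{(l)\top} \mathbf{h}_i^{(l-1)\top}\right),
\end{equation}
since the upstream gradient into $A_i^{(l)}$ is $G_i^{(l)} \mathbf{V}^{(l)\top} \mathbf{h}_i^{(l-1)\top}$ (from the value-path factor $\mathbf{h}_i^{(l-1)} \mathbf{V}^{(l)}$). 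Given $S_i^{(l)}$, the gradients for the attention parameters come out by exactly the single-layer argument: differentiating $B_i^{(l)}$ with respect to $\mathbf{W}^{(l)}$ gives the sandwich $\mathbf{h}_i^{(l-1)\top} S_i^{(l)} \mathbf{h}_i^{(l-1)}$, and differentiating with respect to $\mathbf{P}^{(l)}$ gives the diagonal-sum over samples of $S_i^{(l)}$ via the tensor $D$. For $\mathbf{V}^{(l)}$, the only $\mathbf{V}^{(l)}$-dependence sits on the value path, so the computation is identical to the $V^{(1)}$ gradient in Lemma~\ref{lem:grad} and yields $\mathbf{h}_i^{(l-1)\top} A_i^{(l)\top} G_i^{(l)}$.

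Finally, to close the recursion I would compute $\partial B_i^{(l)}/\partial \mathbf{h}_i^{(l-1)}$ on its two occurrences: as a query it produces the term $S_i^{(l)} \mathbf{h}_i^{(l-1)} \mathbf{W}^{(l)\top}$, and as a key it produces $S_i^{(l)\top} \mathbf{h}_i^{(l-1)} \mathbf{W}^{(l)}$; adding these to the residual and value-path contributions yields the stated formula for $G_i^{(l-1)}$. The main obstacle, and the only place one has to be careful, is this last step: $\mathbf{h}_i^{(l-1)}$ enters $B_i^{(l)}$ on both sides of $\mathbf{W}^{(l)}$, so one must split the derivative into its query and key halves and keep track of which transpose is which, while also making sure the Jacobian contraction in $S_i^{(l)}$ is exactly the object that gets multiplied by $\mathbf{h}_i^{(l-1)} \mathbf{W}^{(l)\top}$ and $\mathbf{h}_i^{(l-1)} \mathbf{W}^{(l)}$ respectively. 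Everything else is a direct transcription of the single-layer proof with $R_i$ replaced by $G_i^{(l)}$.
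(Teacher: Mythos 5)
Your proposal is correct and takes essentially the same approach as the paper: both organize the backward pass around the adjoint $G_i^{(l)}$ propagated through the residual, value, query, and key contributions, with the per-layer parameter gradients then read off from $G_i^{(l)}$ and $S_i^{(l)}$ exactly as in Lemma~\ref{lem:grad}. The query/key split of $\partial B_i^{(l)}/\partial \mathbf{h}_i^{(l-1)}$ into $S_i^{(l)} \mathbf{h}_i^{(l-1)} \mathbf{W}^{(l)\top}$ and $S_i^{(l)\top} \mathbf{h}_i^{(l-1)} \mathbf{W}^{(l)}$ that you flag as the delicate step is precisely how the paper closes the recurrence.
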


\begin{proof}
Throughout the proof, we factor out the global multiplier $-1/(NT)$ and
define $G_i^{(l)}$ to be the resulting backpropagated residual at layer $l$.
Thus the actual gradient with respect to $h_i^{(l)}$ is
\[
\frac{\partial \mathcal L}{\partial h_i^{(l)}}=-\frac1{NT}G_i^{(l)}.
\]
We begin by considering the derivative of the loss with respect to $F_\theta(X_i)^{[t]}$, which is
\begin{equation}
    \frac{\partial \mathcal{L}}{\partial F_\theta(X_i)^{[t]}}
= \mathcal{S}(F_\theta(X_i)^{[t]}) - Y_i^{[t]}  = - R_i^{[t]} 
\end{equation}
Since
\begin{equation}
    \frac{\partial F_\theta(X_i)^{[t]}}{\partial W_O} = h_i^{(L)[t]}
\end{equation}
it follows that
\begin{equation}
    \frac{\partial \mathcal{L}}{\partial W_O}
= \frac{-1}{NT}\sum_{i=1}^N h_i^{(L)\top} R_i 
\end{equation}
We now consider the gradient through each attention layer in terms of the current and previous layer embeddings $h_i^{(l-1)}, h_i^{(l)}$. Let $h = h_i^{(l-1)}$, $A = A_i^{(l)}$, $G = G_i^{(l)}$, and $V = V^{(l)}$.
The residual contribution to the gradient with respect to $V^{(l)}$ is
\[
(Ah)^\top G = h^\top A^\top G.
\]
Therefore, after restoring the global factor and summing over $i$,
\[
\frac{\partial \mathcal{L}}{\partial V^{(l)}}
=
-\frac{1}{NT}
\sum_{i=1}^N
h_i^{(l-1)\top}A_i^{(l)\top}G_i^{(l)}.
\]
The gradient for $M = Ah$ is $\delta_M = G V^{(l)\top},
\delta_A = \delta_M h^\top,
\delta_h^{(1)} = A^\top \delta_M$.
Next, through the row-wise softmax, each row Jacobian is
\begin{equation}
    J_{i,t}^{(l)}
    =
    \mathrm{Diag}(A_i^{(l)[t]})
    -
    A_i^{(l)[t]\top} A_i^{(l)[t]}
\end{equation}
Stacking these gives a tensor $J_i^{(l)}$. Applying it row-wise to $\delta_A$ gives
\begin{equation}
    S_i^{(l)} = \text{ein}_{tjk,\,tk \to tj}\big(J_i^{(l)},\, G_i^{(l)} V^{(l)\top} h_i^{(l-1)\top}\big)
\end{equation}
Finally, back-propagating through $\tilde A = h W^{(l)} h^\top + \mathrm{DM}(P^{(l)})$, the residual contributions are
\[
    h^\top S_i^{(l)} h
\]
for $W^{(l)}$ and
\[
    \text{ein}_{tjk,\,jk \to t}(D,S_i^{(l)})
\]
for $P^{(l)}$.
Summing over $i$ and normalizing,
\begin{equation}
    \frac{\partial \mathcal{L}}{\partial W^{(l)}}
= \frac{-1}{NT} \sum_{i=1}^N h_i^{(l-1)\top} S_i^{(l)} h_i^{(l-1)}
\end{equation}
\begin{equation}
    \frac{\partial \mathcal{L}}{\partial P^{(l)}}
= \frac{-1}{NT}\,\text{ein}_{tjk,\,jk \to t}\!\left(D,\, \sum_{i=1}^N S_i^{(l)}\right)
\end{equation}
Collecting all contributions to $h_i^{(l-1)}$ gives
\begin{equation}
    G_i^{(l-1)} = G_i^{(l)} + A_i^{(l)\top} G_i^{(l)} V^{(l)\top}
+ S_i^{(l)} h_i^{(l-1)} W^{(l)\top}
+ S_i^{(l)\top} h_i^{(l-1)} W^{(l)}
\end{equation}
Since
\[
\frac{\partial \mathcal L}{\partial F_\theta(X_i)}
=
\mathcal S(F_\theta(X_i))-Y_i
=
-R_i,
\]
after factoring out $-1/(NT)$, the residual at the last layer is
\[
G_i^{(L)}=R_iW_O^\top.
\]
We can inductively apply the recurrence and collecting the per-layer parameter derivatives gives the desired expressions for $\partial \mathcal{L}/\partial W_O$, $\partial \mathcal{L}/\partial V^{(l)}$, $\partial \mathcal{L}/\partial W^{(l)}$, and $\partial \mathcal{L}/\partial P^{(l)}$.
\end{proof}

\begin{lemma}[First Step, Multi-Layer Zero-Initialization] Under the setting described, after one gradient step, we have that
	\begin{equation}
		W_O = \eta(\bar{B}) 
	\end{equation}
	\begin{equation}
		W^{(l)}, V^{(l)}, P^{(l)} = \mathbf{0}
	\end{equation}
	for $1 \leq l \leq L$ where $\bar{B}$ is a $|V| \times |V|$ matrix where the $j$th row is the average next-token distribution of the $j$th token in the vocabulary weighted by the relative frequency of token $j$ across the dataset and centered to have the row sum be 0. 
\end{lemma}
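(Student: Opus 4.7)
The plan is to proceed by direct computation using the gradient expressions from Lemma~\ref{lem:grad-multi}, exploiting the fact that zero initialization makes nearly every factor in the backward pass vanish. First I would unpack the forward pass at zero initialization: since $V^{(l)} = \mathbf{0}$ for all $l$, the attention contribution $\mathcal{S}(\cdots) h_i^{(l-1)} V^{(l)}$ vanishes identically, so the residual recurrence collapses to $h_i^{(l)} = h_i^{(l-1)} = X_i$ at every layer. Consequently $F_\theta(X_i) = h_i^{(L)} W_O = \mathbf{0}$, and applying softmax row-wise yields the uniform distribution at every position, which I will call $U_O \in \mathbb{R}^{T \times |V|}$ with all entries $1/|V|$. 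Hence $R_i = Y_i - U_O$.

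Next I would propagate the backward pass by downward induction on $l$. The base case is $G_i^{(L)} = R_i W_O^\top = \mathbf{0}$, since $W_O = \mathbf{0}$ at initialization. Then
\begin{equation}
S_i^{(L)} = \text{ein}_{tjk,\,tk \to tj}\!\left(J_i^{(L)},\, G_i^{(L)} V^{(L)\top} h_i^{(L-1)\top}\right) = \mathbf{0}.
\end{equation}
Each of the four terms in the recurrence
\begin{equation}
G_i^{(l-1)} = G_i^{(l)} + A_i^{(l)\top} G_i^{(l)} V^{(l)\top} + S_i^{(l)} h_i^{(l-1)} W^{(l)\top} + S_i^{(l)\top} h_i^{(l-1)} W^{(l)}
\end{equation}
contains at least one factor from $\{G_i^{(l)},\, S_i^{(l)},\, V^{(l)},\, W^{(l)}\}$, all of which are zero by the induction hypothesis and initialization. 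So $G_i^{(l-1)} = \mathbf{0}$, and in turn $S_i^{(l-1)} = \mathbf{0}$. Plugging these into the formulas of Lemma~\ref{lem:grad-multi} immediately gives $\partial \mathcal{L}/\partial V^{(l)} = \partial \mathcal{L}/\partial W^{(l)} = \partial \mathcal{L}/\partial P^{(l)} = \mathbf{0}$ for every $l$, so these parameters remain zero after one step.

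Finally I would compute the output-matrix update. Since $h_i^{(L)} = X_i$, Lemma~\ref{lem:grad-multi} gives
\begin{equation}
\frac{\partial \mathcal{L}}{\partial W_O} = \frac{-1}{NT} \sum_{i=1}^N X_i^\top (Y_i - U_O),
\end{equation}
which is \emph{identical} to the gradient computed in the proof of Lemma~\ref{lem:first}. That computation evaluates $\tfrac{1}{NT}\sum_i X_i^\top Y_i$ to the frequency-weighted next-token distribution $B$, $\tfrac{1}{NT}\sum_i X_i^\top U_O$ to the corresponding frequency-weighted uniform matrix $U$, and identifies $\bar{B} = B - U$. Applying one gradient descent step therefore yields $W_O = \eta \bar{B}$, completing the proof.

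There is no substantive obstacle: zero initialization annihilates the backward pass through a single scalar factor ($W_O$) at the top of the chain, so the induction is mechanical. The only care needed is to verify that every term in the $G_i^{(l-1)}$ recurrence indeed contains such a factor, which is immediate from inspection.
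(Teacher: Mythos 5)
Your proposal is correct and follows essentially the same route as the paper: compute the forward pass at zero initialization to get $h_i^{(l)}=X_i$ and uniform output, propagate $G_i^{(L)}=R_i W_O^\top=\mathbf{0}$ downward through the recurrence of Lemma~\ref{lem:grad-multi} to kill every gradient except $\partial\mathcal{L}/\partial W_O$, and reuse the single-layer computation of $\bar{B}$. The only difference is that you spell out the forward-pass collapse ($h_i^{(l)}=X_i$) a bit more explicitly than the paper does, which is a clarity improvement rather than a different argument.
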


\begin{proof}
By Lemma~\ref{lem:grad-multi},
\begin{equation}
    \frac{\partial \mathcal{L}}{\partial W_O}
= \frac{-1}{NT}\sum_{i=1}^N h_i^{(L)\top} R_i
= \frac{-1}{NT}\sum_{i=1}^N X_i^\top (Y_i-U_O)
= -(B-U) \equiv -\bar{B}
\end{equation}
where $B$ and $U$ are defined the same as in the one-layer case. A single gradient step gives
\begin{equation}
    W_O = - \eta \frac{\partial \mathcal{L}}{\partial W_O} =  \eta \bar{B}
\end{equation}
At initialization $W_O=0$, so by Lemma~\ref{lem:grad-multi} the upstream gradient from layer $L$ is
\begin{equation}
    G_i^{(L)} = R_i W_O^\top = 0
\end{equation}
Using the recurrence (Lemma~\ref{lem:grad-multi}),
\begin{equation}
    G_i^{(l-1)} \;=\; G_i^{(l)} + A_i^{(l)\top} G_i^{(l)} V^{(l)\top}
+ S_i^{(l)} h_i^{(l-1)} W^{(l)\top}
+ S_i^{(l)\top} h_i^{(l-1)} W^{(l)}
\end{equation}
Since $G_i^{(L)}=0$ and $W^{(l)},V^{(l)}= 0$ for all $l$, we inductively get $G_i^{(l)}=0$ for every $l$.

Now, the layerwise gradients are (Lemma~\ref{lem:grad-multi})
\begin{equation}
    \frac{\partial \mathcal{L}}{\partial V^{(l)}} = \frac{-1}{NT}\sum_{i=1}^N h_i^{(l-1)\top} A_i^{(l)\top} G_i^{(l)} = 0
\end{equation}
and with $S_i^{(l)}=\text{ein}_{tjk,\,tk\to tj}\left(J_i^{(l)}, G_i^{(l)} V^{(l)\top} h_i^{(l-1)\top}\right)$ we also have $S_i^{(l)}=0$ (because $G_i^{(l)}=0$ or $V^{(l,0)}=0$), hence
\begin{equation}
    \frac{\partial \mathcal{L}}{\partial W^{(l)}} = \frac{-1}{NT}\sum_{i=1}^N h_i^{(l-1)\top} S_i^{(l)} h_i^{(l-1)} = 0
\end{equation}
\begin{equation}
    \frac{\partial \mathcal{L}}{\partial P^{(l)}} = \frac{-1}{NT}\text{ein}_{tjk,jk\to t}\left(D,\sum_{i=1}^N S_i^{(l)}\right)= 0
\end{equation}
Therefore a single gradient step leaves
$W^{(l)}, V^{(l)}, P^{(l)}=0$ for $1 \leq l \leq L$.
\end{proof}

\begin{theorem}[Early Stage Features, Multi-Layer]\label{thm:early-multi-L}
Fix a depth $L \leq \frac{\sqrt{T}}{4}$ and assume zero initialization for all parameters. 
Under the setting described, for $s \leq \eta^{-1}  \min \left( \frac{1}{12L},  \frac{5}{8 \sqrt{T}} \right)$ with $T\geq 60$ and $|V|\geq 500$, after $s$ gradient descent steps with learning rate $\eta$ we have, \emph{uniformly for every layer $1\le l\le L$},
\begin{equation}\label{eq:multiL-WO}
    \norm{W_O - s\eta \bar{B}}_F \leq 3 s^2\eta^2
\end{equation}
\begin{equation}\label{eq:multiL-V}
    \norm{V^{(l)} - \binom{s}{2}\eta^2 \bar{\Phi}^\top \bar{B}^\top }_F \leq 12 s^3\eta^3
\end{equation}
\begin{equation}\label{eq:multiL-W}
    \norm{W^{(l)} - \left(3\binom{s}{4} + 2\binom{s}{3}\right) \eta^4 \bar{Q}}_F \leq 13 s^5\eta^5 T
\end{equation}
\begin{equation}\label{eq:multiL-P}
    \norm{P^{(l)} - \left(3\binom{s}{4} + 2\binom{s}{3}\right) \eta^4 \Delta}_F \leq 13 s^5\eta^5 T
\end{equation}
where $\bar B$, $\bar\Phi$, $\bar Q$, and $\Delta$ are as in the one-layer analysis (row-centered bigram matrix, centered prefix-statistics operator, and the third-step structures, respectively). 
\end{theorem}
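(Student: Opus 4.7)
The plan is to mirror the induction-on-$s$ argument from the single-layer proof, but now carry four inductive bounds simultaneously, one for each family of weights, \emph{uniformly in} $l$. The base case $s=1$ is already supplied by the zero-initialization lemma (\textit{First Step, Multi-Layer Zero-Initialization}): only $W_O$ moves, all other weights remain at zero, so the claimed bounds hold trivially at $s=1$ with large slack. At each inductive step I would invoke the general gradient form in Lemma~\ref{lem:grad-multi} to rewrite $\partial\mathcal{L}/\partial W_O$, $\partial\mathcal{L}/\partial V^{(l)}$, $\partial\mathcal{L}/\partial W^{(l)}$ and $\partial\mathcal{L}/\partial P^{(l)}$ in terms of $h_i^{(l-1)}$, $A_i^{(l)}$, $J_i^{(l)}$ and the backward signal $G_i^{(l)}$, and then show each of these quantities is close to its \emph{single-layer} counterpart: $h_i^{(l-1)} \approx X_i$, $A_i^{(l)} \approx A_0$, $J_i^{(l)} \approx J_0$, $G_i^{(l)} \approx R_i W_O^\top$. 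Because the inductive hypotheses are identical across $l$, every layer receives the same leading-order update in a given step, which is precisely what preserves layer-symmetry as $s$ grows and reduces the rest of the analysis to a near-replica of the single-layer case.

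\textbf{Forward and backward control.} The first technical step is a clean forward-pass bound. From the inductive hypothesis $\|V^{(l)}\|_F \lesssim s^2\eta^2$ and $\|A_i^{(l)}\| \lesssim 2$, I would unroll the recurrence $h_i^{(l)} = h_i^{(l-1)} + A_i^{(l)} h_i^{(l-1)} V^{(l)}$ and show inductively on $l$ that
\begin{equation}
    \|h_i^{(l)} - X_i\|_F \;\le\; c\,L\,s^2\eta^2\,\sqrt{T},
    \qquad \|h_i^{(l)}\|_F \le (1 + c L s^2 \eta^2)\sqrt{T},
\end{equation}
so the constraints $L \le \sqrt{T}/4$ and $s\eta \le \min(1/(12L),\,5/(8\sqrt{T}))$ keep $\|h_i^{(l)}\|_F = O(\sqrt{T})$ and make $h_i^{(l-1)}$ replaceable by $X_i$ inside every gradient up to a higher-order error. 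An entirely parallel argument using $\|W^{(l)}\|_F,\|P^{(l)}\|_F \lesssim s^4\eta^4 T$ bounds $\|A_i^{(l)} - A_0\|_F$ and $\|J_i^{(l)} - J_0\|$ as in Lemma~\ref{lem:third}. For the backward pass, I would iterate the recurrence for $G_i^{(l-1)} - G_i^{(l)}$ from $l=L$ down to $l=1$; each step adds a correction of the form $A_i^{(l)\top} G_i^{(l)} V^{(l)\top} + S_i^{(l)} h_i^{(l-1)} W^{(l)\top} + S_i^{(l)\top} h_i^{(l-1)} W^{(l)}$, and all three terms are higher-order in $s\eta$ because $V^{(l)}, W^{(l)}$ are themselves $O(s^2\eta^2)$ or smaller. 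Summing $L$ such corrections and using $L s\eta \le 1/12$ shows $G_i^{(l)} = R_i W_O^\top + O(\text{higher order})$ uniformly in $l$.

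\textbf{Closing the induction and main obstacle.} With these ingredients, the gradient-by-gradient analysis for $W_O$, $V^{(l)}$, $W^{(l)}$, $P^{(l)}$ reduces to the single-layer computations done in Lemmas~\ref{lem:first}--\ref{lem:third}, with the per-layer gradient differing from the single-layer version by an amount dominated by the forward/backward perturbations bounded above. Adding these per-step increments across $s$ steps and absorbing the higher-order $s\eta$ factors into the slightly larger constants ($12,13$ vs.\ the single-layer $4,6$) closes the induction and delivers \eqref{eq:multiL-WO}--\eqref{eq:multiL-P} uniformly in $l$. The \textbf{main obstacle} is the coupled nature of the multi-layer recursion: the bound on $G_i^{(l)}$ depends on all $V^{(k)}, W^{(k)}, P^{(k)}$ with $k > l$, while the bound on $h_i^{(l)}$ depends on all those with $k \le l$, and the $L$ such corrections accumulate. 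Getting the constants to actually close requires the two joint smallness conditions $L \le \sqrt{T}/4$ and $s\eta \le 1/(12L)$ to be used simultaneously, since either alone leaves either $\|h^{(L)}\|_F$ or the cumulative backward error uncontrolled; verifying that the concrete constants $3, 12, 13$ survive this combined accounting is the most delicate bookkeeping in the proof.
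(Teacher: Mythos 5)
Your proof plan mirrors the paper's argument very closely: an induction on $s$ carrying all four bounds uniformly in $l$, a forward-pass bound showing $h_i^{(l)}\approx X_i$ and $A_i^{(l)}\approx A_0$, a backward recurrence showing $G_i^{(l)}\approx R_i W_O^\top$, and the joint use of $L\le\sqrt{T}/4$ and $s\eta\le 1/(12L)$ to control the $L$-fold accumulation of corrections. You have also correctly identified the coupled forward/backward accumulation across layers as the bookkeeping bottleneck, which is exactly where the paper's proof concentrates its effort.
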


\begin{proof}

We prove the bounds simultaneously for all layers with induction.

By the previous lemma, with zero initialization and one step, $W_O=\eta\bar{B}, W^{(l)}=0, V^{(l)}=0, P^{(l)}=0$ for $1 \leq l \leq L$. This gives the base case.

Now we prove the inductive step.
Assume the four bounds hold after $s$ steps, with $(s+1)\eta \leq \min \left( \frac{1}{12L}, \frac{5}{8\sqrt{T}} \right)$. We derive bounds on the deviations in the attention patterns, activations, and outputs in the forward pass after $s$ steps. 

Now, we start with a bound on the deviations in the activations from $X_i$ at each row. Since, each row of $A_i^{(l)}$ sums to 1, we have that
\begin{equation}
    \norm{h_i^{(l)}[t,:] - X_i[t,:]}
\leq
\norm{h_i^{(l-1)}[t,:] - X_i[t,:]}
+
\max_{q\le T}\norm{h_i^{(l-1)}[q,:]}\norm{V^{(l)}}.
\end{equation}
and
\begin{equation}
    \norm{h_i^{(l)}[t,:]} \leq (1 + \norm{V^{(l)}}) \norm{h_i^{(l-1)}[t,:]}
\end{equation}
Then, by the inductive hypothesis and $s\eta \leq \frac{1}{12L}$, we have that $\norm{V^{(l)}}_F \leq 2s^2\eta^2$. Using this and that $h_i^{(0)} = X_i$ which has unit norm, we have that across all layers and rows
\begin{equation}
    \norm{h_i^{(l)}[t,:]} \leq \left(1 + 2s^2\eta^2\right)^L
\end{equation}
Since $s\eta\le 1/(12L)$, we have
\[
    2Ls^2\eta^2 \leq \frac{s\eta}{6}.
\]
Thus
\[
    \left(1+2s^2\eta^2\right)^L
    \leq
    \exp(2Ls^2\eta^2)
    \leq
    1+\frac{s\eta}{4},
\]
where the last inequality uses $2Ls^2\eta^2\le 1/72$.
Therefore
\[
    \norm{h_i^{(l)}[t,:]} \leq 1+\frac{s\eta}{4}.
\]
Using this and again that $h_i^{(0)}=X_i$, we have that for all rows and layers,
\[
    \norm{h_i^{(l)}[t,:]-X_i[t,:]}
    \leq
    L\left(1+\frac{s\eta}{4}\right)2s^2\eta^2
    \leq
    \frac{s\eta}{4},
\]
again using $s\eta\le 1/(12L)$.

Let $A_0$ be the uniform causal attention with the $t$th row having the first $t$
elements equal to $1/t$ and the remaining elements equal to $0$. For each layer
$l$ and row $t$, define the unmasked attention-logit vector
\[
    z_{i,t}^{(l)}
    :=
    \left(h_i^{(l-1)}[t,:] W^{(l)} h_i^{(l-1)\top}
    + \mathrm{DM}(P^{(l)})[t,:]\right)[:t].
\]
Then
\[
    A_i^{(l)}[t,:t] = \mathcal S(z_{i,t}^{(l)}),
    \qquad
    A_0[t,:t] = \frac1t \mathbf 1.
\]
Decomposing $h_i^{(l-1)}[t,:]$ as $X_i[t,:] + (h_i^{(l-1)}[t,:] - X_i[t,:])$, we get by the inductive hypothesis and that $\max_{km} |\bar{Q}_{km}|, \max_{m} |\Delta_m| \leq 1$ as shown in the one-layer case that
\begin{equation}
\begin{split}
    &\norm{\text{MASK}(h_i^{(l-1)}[t,:] W^{(l)} h_i^{(l-1)\top} + \text{DM}(P^{(l)})[t,:])} \\
        &\leq \left(6\binom{s}{4} + 4\binom{s}{3}\right) \eta^4\sqrt{t} + 26 s^5\eta^5 T \\
    &+2\norm{h_i^{(l-1)}[t,:] - X_i[t,:]} \left(6\binom{s}{4} + 4\binom{s}{3}\right) \eta^4\sqrt{T}
    \\ &+\norm{h_i^{(l-1)}[t,:] - X_i[t,:]}^2 \left(6\binom{s}{4} + 4\binom{s}{3}\right) \eta^4\sqrt{T}
\end{split}
\end{equation}
By our earlier bounds, we have then
\begin{equation}
\begin{split}
    \norm{\text{MASK}(h_i^{(l-1)}[t,:] W^{(l)} h_i^{(l-1)\top} + \text{DM}(P^{(l)})[t,:])} \\
    \leq s^4 \eta^4\sqrt{t} + 26 s^5\eta^5 T + 
\frac{s\eta}{2} s^4 \eta^4\sqrt{T}
+ \frac{s^2\eta^2}{16} s^4 \eta^4\sqrt{T}
\end{split}
\end{equation}
which we can upper bound by
\begin{equation}
    \norm{\text{MASK}(h_i^{(l-1)}[t,:] W^{(l)} h_i^{(l-1)\top} + \text{DM}(P^{(l)})[t,:])} \leq s^4 \eta^4\sqrt{t} + \frac{21}{2} s^3\eta^3
\end{equation}
and we have
\begin{equation}
    \norm{z_{i,t}^{(l)}}
    \leq
    s^4 \eta^4\sqrt{t} + \frac{21}{2} s^3\eta^3.
\end{equation}
Moreover, since $\|z_{i,t}^{(l)}\|_\infty\le \|z_{i,t}^{(l)}\|$, the preceding
bound and the assumptions $s\eta\le 5/(8\sqrt T)$ and $T\ge 60$ imply
\[
    \|z_{i,t}^{(l)}\|_\infty \le \frac12.
\]
Hence
\[
    \operatorname{range}(z_{i,t}^{(l)})
    \le
    2\|z_{i,t}^{(l)}\|_\infty
    \le 1.
\]
By Lemma~\ref{lem:softmax-jacobian}, applied to the softmax on the first $t$
coordinates,
\[
    \norm{(A_i^{(l)}-A_0)[t,:]}
    \leq
    \frac{e}{t}\norm{z_{i,t}^{(l)}}.
\]
Therefore
\[
    \norm{(A_i^{(l)}-A_0)[t,:]}
    \leq
    \frac{e s^4\eta^4}{\sqrt t}
    +
    \frac{21e}{2}\frac{s^3\eta^3}{t}.
\]
For $t=1$, both $A_i^{(l)}[1,:]$ and $A_0[1,:]$ are point masses, so the
left-hand side is zero. For $t\ge 2$, the assumptions
$s\eta\le 5/(8\sqrt T)$ and $T\ge 60$ imply
\[
    \frac{e s^4\eta^4}{\sqrt t}
    +
    \frac{21e}{2}\frac{s^3\eta^3}{t}
    \leq
    \frac{s\eta}{\sqrt T}.
\]
Thus, for all $t$,
\begin{equation}
    \norm{(A_i^{(l)}-A_0)[t,:]}
    \leq
    \frac{s\eta}{\sqrt T}.
\end{equation}
Consequently,
\begin{equation}
    \norm{A_i^{(l)}-A_0}_F
    \leq
    s\eta.
\end{equation}
From the deviation bounds on the activations and the inductive control of $W_O$,
\begin{equation}
    \|F_\theta(X_i)\|_F=\|h_i^{(L)}W_O\|_F
\leq \|h_i^{(L)}\|_F\|W_O\|
\leq (1+\frac{s\eta}{4}) \sqrt{T}\,(s\eta+3s^2\eta^2)
\leq 2 s\eta\sqrt{T}
\end{equation}
For each row $t$, let
\[
    f_{i,t}:=F_\theta(X_i)^{[t,:]}=h_i^{(L)}[t,:]W_O.
\]
Using the activation bound and the inductive hypothesis for $W_O$,
\[
    \norm{f_{i,t}}
    \leq
    \left(1+\frac{s\eta}{4}\right)(s\eta+3s^2\eta^2).
\]
Since $s\eta\le 5/(8\sqrt T)$ and $T\ge 60$, the right-hand side is at most
$1$. Hence
\[
    \operatorname{range}(f_{i,t})
    \leq
    2\norm{f_{i,t}}
    \leq 2.
\]
By Lemma~\ref{lem:softmax-jacobian}, applied row-wise,
\[
    \left\|
    \mathcal S(f_{i,t})-\frac1{|V|}\mathbf 1
    \right\|
    \leq
    \frac{e^2}{|V|}\norm{f_{i,t}}.
\]
Summing over rows gives
\[
    \norm{\mathcal S(F_\theta(X_i))-U_O}_F
    \leq
    \frac{e^2}{|V|}\norm{F_\theta(X_i)}_F
    \leq
    \frac{2e^2s\eta\sqrt T}{|V|}.
\]
Since $|V|\ge 500$, we have $e^2/\sqrt{|V|}\le 1$, and therefore
\begin{equation}\label{eq:softmax-L}
    \norm{\mathcal S(F_\theta(X_i))-U_O}_F
    \leq
    2s\eta\sqrt{\frac{T}{|V|}}.
\end{equation}

Then as in the one-layer case but using \eqref{eq:softmax-L} and accounting for deviations in the hidden state from $X_i$,
\begin{equation}
    \norm{\frac{\partial\mathcal L}{\partial W_O}+\bar{B}}_F
\leq \frac{4s\eta}{\sqrt{|V|}} + 4s\eta \leq 5s\eta
\end{equation}
Then, after $s+1$ steps, we have
\begin{equation}
    \norm{W_O-(s+1)\eta \bar{B}}_F \leq 3 s^2\eta^2 + 5 s\eta^2 \leq 3(s+1)^2\eta^2
\end{equation}

From Lemma~\ref{lem:grad-multi},
\begin{equation}
    \frac{\partial\mathcal L}{\partial V^{(l)}} = -\frac{1}{NT}\sum_i h_i^{(l-1)\top}A_i^{(l)\top}R_i W_O^{\top}
\end{equation}
Considering the deviation from each of the terms, we have
\begin{equation}
    \norm{\frac{\partial \mathcal{L}}{\partial V^{(l)}} + s\eta \bar{\Phi}^\top \bar{B}^\top}_F \leq 36 s^2 \eta^2
\end{equation}
Then, we have that after $s+1$ steps,
\[
\norm{V^{(l)}-\binom{s+1}{2}\eta^2\,\bar\Phi^\top \bar B^{\top}}_F
\leq 12 s^3 \eta^3 + 36 s^2 \eta^3 \leq 12 (s+1)^3 \eta^3
\]

From Lemma~\ref{lem:grad-multi},
\begin{equation}
    \frac{\partial\mathcal L}{\partial W^{(l)}} = -\frac{1}{NT}\sum_i h_i^{(l-1)\top} S_i^{(l)} h_i^{(l-1)}
\end{equation}
\begin{equation}
    \frac{\partial\mathcal L}{\partial P^{(l)}} 
= -\frac{1}{NT}\,\text{ein}_{tjk,\,jk\to t}\!\Bigl(D,\sum_i S_i^{(l)}\Bigr)
\end{equation}
with $S_i^{(l)}=\text{ein}\!\big(J_i^{(l)},\,G_i^{(l)}V^{(l)\top}h_i^{(l-1)\top}\big)$.
As in the one-layer bound, we can use the bound on the attention pattern to control $J_i^{(l)}$. We have that for $t \geq 2$
    \begin{equation}
        \norm{J_{t, i} - J_t}_2 \leq \left(1 + \frac{2}{\sqrt{t}} \right)\norm{A_i[t, :] -  A_0[t, :]}_2 + \norm{A_i[t, :] -  A_0[t, :]}_2^2
    \end{equation}
    Then, as we have that
    \begin{equation}
        \norm{A_i[t, :] -  A_0[t, :]}_2 \leq \frac{s\eta}{\sqrt{T}}
    \end{equation}
    it follows that
    \begin{equation}
        \norm{J_{t, i} - J_t}_2 \leq \frac{10s\eta}{\sqrt{T}}
    \end{equation}
    Since $J_{1, i}$ is always all zeros, we can ignore this term and for $t \geq 2$, we have that as $s \eta \leq \frac{5}{8\sqrt{T}}$,
    \begin{equation}
        \norm{J_{t, i} - J_t}_2 \leq \frac{10s\eta}{\sqrt{T}}
    \end{equation}
    Now, we bound the deviation of $G_i^{(l)}$ from $s \eta (Y_i-U_O) \bar{B}^\top$. Starting from layer $L$, we have
    \begin{equation}
        \norm{G_i^{(L)} - s\eta (Y_i - U_O)\bar{B}^\top}_F \leq 2s\eta \sqrt{\frac{T}{|V|}} (2s\eta) + \sqrt{T} (3s^2 \eta^2) \leq 4 s^2 \eta^2 \sqrt{T}
    \end{equation}
    We will let the bound on the deviation at layer $l$ be $D_{G,l}$. Now, we consider the bound for each layer $l$,
    \begin{equation}
        \norm{G_i^{(l-1)} - s\eta (Y_i - U_O)\bar{B}^\top}_F \leq D_{G, l} + 4 s^2 \eta^2 \norm{G_i^{(l)}} + 2\norm{S_i^{(l)}} (2\sqrt{T}) (2s^4 \eta^4 T)
    \end{equation}
    Since we also need the norm of $S_i^{(l)}$ to iterate through layers, we bound the norm of $S_i^{(l)}$,
    \begin{equation}
    \begin{split}
    \norm{S_i^{(l)}}_F
    &\leq \norm{J_i^{(l)}} \norm{G_i^{(l)}} \norm{V^{(l)}} \norm{h_i^{(l-1)}}_F \\
    &\leq \frac{5}{2} \norm{G_i^{(l)}} (2s^2 \eta^2) (2 \sqrt{T}) \\
    &\leq 10s^2 \eta^2 \norm{G_i^{(l)}} \sqrt{T} \\
    &\leq 7s\eta \norm{G_i^{(l)}},
    \end{split}
    \end{equation}
    Using this upper bound back in the recurrence for $D_{G, l}$, we have
    \begin{equation}
        \norm{G_i^{(l-1)} - s\eta (Y_i - U_O)\bar{B}^\top}_F \leq D_{G, l} + 4s^2 \eta^2 \norm{G_i^{(l)}} + 56s^5\eta^5 T^{3/2} \norm{G_i^{(l)}}
    \end{equation}
    Using $s\eta \leq \frac{5}{8\sqrt{T}}$, we have
    \begin{equation}
        \norm{G_i^{(l-1)} - s\eta (Y_i - U_O)\bar{B}^\top}_F \leq D_{G, l} + 18 s^2 \eta^2 \norm{G_i^{(l)}}
    \end{equation}
    We can now write a recurrence for $\norm{G_i^{(l)}}$ as $\norm{G_i^{(l)}} \leq s\eta \norm{(Y_i - U_O)\bar{B}^\top} + D_{G, l}$, we have
    \begin{equation}
        \norm{G_i^{(l-1)}} \leq (1 + 18s^2 \eta^2) (\norm{s\eta (Y_i - U_O)\bar{B}^\top} + D_{G, l}) \leq (1 + 18s^2\eta^2)(s\eta \sqrt{2T} + D_{G,l})
    \end{equation}
    Utilizing this with the recurrence for $D_{G, l}$, we can then write a recurrence only in terms of $D_{G, l}$ and find that for all $l$, $D_{G, l} \leq 12s^2 \eta^2 \sqrt{T}$ as $L \leq \frac{\sqrt{T}}{4}$ and $s\eta \leq \min\left(\frac{1}{12L}, \frac{5}{8\sqrt{T}} \right)$. Then, we also have that for all $l$, $\norm{G_i^{(l)}} \leq s\eta \sqrt{T} + 12s^2\eta^2\sqrt{T} \leq 2s\eta \sqrt{T}$.
    Then, we have that as $\norm{J_t}_2 = \frac{1}{t}$, $\norm{J_i}_2 \leq \frac{3}{2} + \frac{10s\eta}{\sqrt{T}} \leq 2$, and $s \eta \leq \min\left(\frac{1}{12L}, \frac{5}{8\sqrt{T}}\right)$,
    \begin{equation}
        \norm{\frac{s^3 - s^2}{2} \eta^3 Q_i - S_i^{(l)}}_F \leq 64 s^4 \eta^4 T
    \end{equation}
This produces
\begin{equation}
    \norm{\frac{\partial\mathcal L}{\partial W^{(l)}} 
+ \frac{s^3-s^2}{2} \eta^3 \bar Q}_F 
\leq 64 s^4\eta^4 T
\end{equation}
and similarly,
\begin{equation}
    \norm{\frac{\partial\mathcal L}{\partial P^{(l)}} 
+ \frac{s^3-s^2}{2} \eta^3 \Delta}_F 
\leq 64 s^4\eta^4 T
\end{equation}
and hence after $(s+1)$ steps
\begin{equation}
    \norm{W^{(l)}-\left(3\binom{s+1}{4}+2\binom{s+1}{3}\right)\eta^4\bar Q}_F
\leq 13 (s+1)^5\eta^5 T
\end{equation}
\begin{equation}
    \norm{P^{(l)}-\left(3\binom{s+1}{4}+2\binom{s+1}{3}\right)\eta^4\Delta}_F
\le 13 (s+1)^5\eta^5 T
\end{equation}
\end{proof}

\begin{lemma}[Gaussian Initialization Operator Norm]\label{lem:gauss-init} Under the setting described and with all parameters initialized from $\N(0, \frac{v^2}{|V|^{2 + 2\xi}})$ for $\xi \geq 0$ and $T \leq |V|$, we have that with probability at least $1 - (3L+1)\exp\left( -\frac{|V|^{1 + 2\xi}}{4}\right)$, for all $1 \leq l \leq L$,
\begin{equation}
    \norm{W_O}, \norm{V^{(l)}}, \norm{W^{(l)}}, \norm{P^{(l)}} \leq \frac{3v}{|V|^{1/2}}
\end{equation}
\end{lemma}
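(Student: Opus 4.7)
The statement is a standard Gaussian random matrix concentration claim: each parameter matrix is built from i.i.d.\ $\mathcal{N}(0,\sigma^2)$ entries with $\sigma^2 = v^2/|V|^{2+2\xi}$, so its operator norm is governed by the Davidson--Szarek bound for Gaussian matrices. The plan is (i) invoke that bound for a generic matrix, (ii) calibrate the deviation parameter $t$ so that the desired operator-norm ceiling $3v/|V|^{1/2}$ and the stated tail $\exp(-|V|^{1+2\xi}/4)$ appear simultaneously, and (iii) take a union bound over the $3L+1$ matrices $W_O, \{V^{(l)}\}, \{W^{(l)}\}, \{P^{(l)}\}$.

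\textbf{Main ingredient.} For any $M \in \mathbb{R}^{d_1 \times d_2}$ with i.i.d.\ $\mathcal{N}(0,\sigma^2)$ entries,
\begin{equation}
    \mathbb{P}\!\left(\|M\| \geq \sigma(\sqrt{d_1}+\sqrt{d_2}+t)\right) \leq \exp(-t^2/2)
\end{equation}
for every $t \geq 0$; this is the classical Davidson--Szarek inequality, proven by combining Slepian's comparison lemma (which controls $\mathbb{E}\|M\|$ by $\sigma(\sqrt{d_1}+\sqrt{d_2})$) with Gaussian concentration for the $\sigma$-Lipschitz functional $M \mapsto \|M\|$.

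\textbf{Calibration and union bound.} Each of $W_O, V^{(l)}, W^{(l)}$ is $|V| \times |V|$ and each $P^{(l)}$ is $T \times T$ with $T \leq |V|$, so $\sqrt{d_1}+\sqrt{d_2} \leq 2\sqrt{|V|}$ uniformly. I would choose $t = |V|^{1/2+\xi}/\sqrt{2}$, which gives a per-matrix failure probability of exactly $\exp(-|V|^{1+2\xi}/4)$, while
\begin{equation}
    \sigma\!\left(2\sqrt{|V|}+t\right) \;=\; \frac{2v}{|V|^{1/2+\xi}} + \frac{v}{\sqrt{2}\,|V|^{1/2}} \;\leq\; \frac{(2+1/\sqrt{2})\,v}{|V|^{1/2}} \;\leq\; \frac{3v}{|V|^{1/2}},
\end{equation}
where the first inequality uses $\xi \geq 0$ so that $|V|^{1/2+\xi} \geq |V|^{1/2}$. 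Finally, a union bound over the $3L+1$ random matrices multiplies the failure probability by $3L+1$, yielding the claimed conclusion.

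\textbf{Main obstacle.} There is no real analytic obstacle; the only care needed is in choosing $t$ so that the constant $3$ in the operator-norm bound and the exponent $1+2\xi$ in the tail emerge from the same calibration uniformly across all four matrix types. The choice $t = |V|^{1/2+\xi}/\sqrt{2}$ does this cleanly, with the $T \leq |V|$ hypothesis being exactly what allows $P^{(l)}$ to be absorbed into the same bookkeeping as the $|V| \times |V|$ matrices.
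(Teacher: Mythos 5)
Your proof is correct and follows essentially the same route as the paper: both invoke the Davidson--Szarek operator-norm concentration bound for the rescaled Gaussian matrix, calibrate $t$, and take a union bound over the $3L+1$ matrices, with $T\leq|V|$ absorbing $P^{(l)}$ into the same envelope. The only cosmetic difference is your choice $t=|V|^{1/2+\xi}/\sqrt{2}$, which reproduces the stated tail $\exp(-|V|^{1+2\xi}/4)$ exactly, whereas the paper picks $t=|V|^{1/2+\xi}$ and ends up proving the slightly stronger tail $\exp(-|V|^{1+2\xi}/2)$, which of course implies the lemma as written.
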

\begin{proof}
    We start with $W_O$. Using a concentration bound on Gaussian random matrices, we have that
    \begin{equation}
        \mathbb{P}\left(\norm{\frac{|V|^{1 + \xi}}{v} W_O} \geq 2\sqrt{|V|} + t \right) \leq e^{-t^2/2} 
    \end{equation}
    Then, setting $t = |V|^{1/2 + \xi}$, we have that
    \begin{equation}
        \mathbb{P}\left(\norm{W_O} \geq \frac{3v}{|V|^{1/2}}\right) \leq \exp \left( -\frac{|V|^{1 + 2\xi}}{2}\right)
    \end{equation}
    Then, with probability at least $1 - \exp \left( -\frac{|V|^{1+2\xi}}{2}\right)$,
    \begin{equation}
        \norm{W_O} \leq \frac{3v}{|V|^{1/2}}
    \end{equation}
    We can apply the same argument for each of $V^{(l)}, W^{(l)}$ to derive the same bound. Since $P^{(l)}$ is smaller than the other matrices and has the same initialization, we can also apply the same bound. Applying a union bound on the probability of failures for each of the weights, we have that with probability at least $1 - (3L+1)\exp \left( -\frac{|V|^{1 + 2\xi}}{2}\right)$, all of $W_O, V^{(l)}, W^{(l)}, P^{(l)}$ have operator norm at most $\frac{3v}{|V|^{1/2}}$. 
\end{proof}

\begin{lemma}[Gaussian Initialization Frobenius Norm]\label{lem:gauss-init-fro} Under the setting described and with all parameters initialized from $\N(0, \frac{v^2}{|V|^{2 + 2\xi}})$ for $\xi \geq 0$ and $T \leq |V|$, we have that with probability at least $1 - (3L + 1)\exp\left( -\frac{|V|^{2 + 2\xi}}{4}\right)$, for all $1 \leq l \leq L$,
\begin{equation}
    \norm{W_O}_F, \norm{V^{(l)}}_F, \norm{W^{(l)}}_F, \norm{P^{(l)}}_F \leq 2v
\end{equation}
\end{lemma}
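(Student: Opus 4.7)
The plan is to mirror the proof of Lemma~\ref{lem:gauss-init} almost verbatim (per-matrix Gaussian concentration, followed by a union bound over the $3L+1$ parameter matrices), simply swapping the operator-norm tail inequality for a Frobenius-norm one.

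First, I would express each weight matrix as $\sigma G$ with $\sigma = v/|V|^{1+\xi}$ and $G$ having i.i.d.\ standard Gaussian entries. A routine variance computation gives, for each $|V|\times|V|$ matrix among $W_O$, $V^{(l)}$, $W^{(l)}$,
\[
\mathbb{E}\!\left[\norm{\sigma G}_F^2\right] \,=\, |V|^2\sigma^2 \,=\, \frac{v^2}{|V|^{2\xi}} \,\leq\, v^2,
\]
and Jensen's inequality then yields $\mathbb{E}[\norm{\sigma G}_F] \leq v$. Since $T \leq |V|$, the same mean bound is inherited by the $T\times T$ positional matrix $P^{(l)}$.

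The key inequality is Gaussian Lipschitz concentration: because $X \mapsto \norm{X}_F$ is $1$-Lipschitz in the Euclidean norm of the vectorization, $\norm{\sigma G}_F$ is $\sigma$-Lipschitz as a function of the underlying standard Gaussian vector, so
\[
\mathbb{P}\!\left(\norm{\sigma G}_F \geq \mathbb{E}[\norm{\sigma G}_F] + t\right) \,\leq\, \exp\!\left(-\frac{t^2}{2\sigma^2}\right).
\]
Taking $t = v$ and combining with the mean bound gives $\mathbb{P}(\norm{\sigma G}_F \geq 2v) \leq \exp(-|V|^{2+2\xi}/2) \leq \exp(-|V|^{2+2\xi}/4)$ for each of the four matrix types. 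A union bound over the $3L+1$ weight matrices then delivers the claimed simultaneous Frobenius bound with the stated failure probability.

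There is no substantive obstacle; the argument is essentially bookkeeping parallel to Lemma~\ref{lem:gauss-init}. The only point requiring attention is verifying that the constant $2$ in ``$2v$'' is large enough to both absorb the expected Frobenius norm (at most $v$) and leave an $\Omega(v)$ gap that the concentration inequality can convert into an $\exp(-|V|^{2+2\xi}/4)$ tail; under the prescribed variance scaling $v^2/|V|^{2+2\xi}$ this balance works out cleanly. An equivalent route would be a direct Laurent--Massart chi-square tail bound on $\norm{\sigma G}_F^2/\sigma^2$, which gives the same exponent up to constants and would be a useful sanity check that no dimension factor has been lost.
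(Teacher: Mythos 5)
Your proof is correct, but it takes a genuinely different route from the paper's. The paper invokes the Laurent--Massart chi-square tail bound directly on $\norm{\sigma G}_F^2/\sigma^2 \sim \chi^2_{D}$: with $t=|V|^{2+2\xi}/4$ the quantity $\frac{v^2}{|V|^{2+2\xi}}(D+2\sqrt{Dt}+2t)$ is at most $3v^2$ (using $\xi\ge 0$), so $\P(\norm{\cdot}_F^2\ge 3v^2)\le e^{-|V|^{2+2\xi}/4}$ and $\sqrt{3}v\le 2v$. You instead pass through the Borell--TIS (Gaussian Lipschitz) inequality: bound $\E\norm{\sigma G}_F\le v$ by Jensen, observe that $G\mapsto\norm{\sigma G}_F$ is $\sigma$-Lipschitz in $\mathrm{vec}(G)$, and conclude $\P(\norm{\sigma G}_F\ge 2v)\le e^{-v^2/(2\sigma^2)}=e^{-|V|^{2+2\xi}/2}\le e^{-|V|^{2+2\xi}/4}$. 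Both yield the claimed per-matrix tail and then the identical union bound over the $3L+1$ matrices. Your route is slightly cleaner in that it needs only a general Gaussian concentration principle rather than the particular chi-square lemma, and it actually produces the sharper exponent $|V|^{2+2\xi}/2$ before relaxing it to match the statement; the paper's route, by contrast, avoids computing a mean explicitly and handles the degree-of-freedom count $D$ directly (which is why, for $P^{(l)}$, the paper uses $D=T$ consistent with treating the positional encoding as a $T$-vector in its gradient computations, whereas you conservatively use $T^2$---both are dominated by $|V|^2$ under $T\le|V|$, so the numerics still close). You already noted the Laurent--Massart alternative at the end of your write-up; it is indeed exactly what the paper does.
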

\begin{proof}
    We start with $W_O$. Using Lemma 1 from~\cite{laurent2000adaptive}, we have that
    \begin{equation}
        \mathbb{P}\left(\norm{W_O}^2_F \geq \frac{v^2}{|V|^{2 + 2\xi}}(|V|^2 + 2|V|\sqrt{t} + 2t) \right) \leq e^{-t} 
    \end{equation}
    Then, setting $t = \frac{|V|^{2+2\xi}}{4}$, we have that
    \begin{equation}
        \mathbb{P}\left(\norm{W_O}^2_F \geq 3v^2\right) \leq \exp \left( -\frac{|V|^{2 + 2\xi}}{4}\right)
    \end{equation}
    Then, with probability at least $1 - \exp\left( -\frac{|V|^{2 + 2\xi}}{4}\right)$,
    \begin{equation}
        \norm{W_O}_F \leq 2v
    \end{equation}
    We can apply the same argument for each of $V^{(l)}, W^{(l)}$ to derive the same bound. For $P^{(l)}$, we have
    \begin{equation}
        \mathbb{P}\left(\norm{P^{(1)}}^2_F \geq \frac{v^2}{|V|^{2 + 2\xi}}(T + 2\sqrt{Tt} + 2t) \right) \leq e^{-t} 
    \end{equation}
    Then, setting $t = \frac{|V|^{2+2\xi}}{4}$ and using that $T \leq |V|$, we have that
    \begin{equation}
        \mathbb{P}\left(\norm{P^{(1)}}^2_F \geq 3v^2\right) \leq \exp \left( -\frac{|V|^{2 + 2\xi}}{4}\right)
    \end{equation}
    Then, with probability at least $1 - \exp\left( -\frac{|V|^{2 + 2\xi}}{4}\right)$,
    \begin{equation}
        \norm{P^{(1)}}_F \leq 2v
    \end{equation}
    Applying a union bound on the probability of failures for each of the weights, we have that with probability at least $1 - (3L+1)\exp\left( -\frac{|V|^{2 + 2\xi}}{4}\right)$, all of $W_O, V^{(l)}, W^{(l)}, P^{(l)}$ have Frobenius norm at most $2v$. 
\end{proof}

\begin{theorem}[Gaussian Initialization (Multi-Layer)]\label{thm:gauss-multi}
Assume the setting of \ref{thm:early-multi-L} with depth $L \leq \frac{\sqrt{T}}{4}$, all parameters initialized i.i.d.\ from $\mathcal N\!\big(0,\frac{v^2}{|V|^{2+2\xi}}\big)$ with $v \leq \frac{\eta^2}{T^2}$, $T\le |V|$, and learning rate $\eta \geq T^{-1}$. Then, with probability at least
\[
1 - (3L+1)\left[\exp \Big(-\frac{|V|^{1+2\xi}}{4}\Big) + \exp \Big(-\frac{|V|^{2+2\xi}}{4}\Big)\right]
\]
for $s \leq \eta^{-1}  \min \left( \frac{1}{12L},  \frac{5}{8 \sqrt{T}} \right)$ with $T\geq 60$ and $|V|\geq 500$, after $s$ gradient descent steps with learning rate $\eta$ we have, \emph{uniformly for every layer $1\le l\le L$},
\begin{equation}\label{eq:g-multiL-WO}
    \norm{W_O - s\eta \bar{B}}_F \leq 3 s^2\eta^2
\end{equation}
\begin{equation}\label{eq:g-multiL-V}
    \norm{V^{(l)} - \binom{s}{2}\eta^2 \bar{\Phi}^\top \bar{B}^\top }_F \leq 12 s^3\eta^3
\end{equation}
\begin{equation}\label{eq:g-multiL-W}
    \norm{W^{(l)} - \left(3\binom{s}{4} + 2\binom{s}{3}\right) \eta^4 \bar{Q}}_F \leq 13 s^5\eta^5 T
\end{equation}
\begin{equation}\label{eq:g-multiL-P}
    \norm{P^{(l)} - \left(3\binom{s}{4} + 2\binom{s}{3}\right) \eta^4 \Delta}_F \leq 13 s^5\eta^5 T
\end{equation}
where $\bar B$, $\bar\Phi$, $\bar Q$, and $\Delta$ are as in the one-layer analysis (row-centered bigram matrix, centered prefix-statistics operator, and the third-step structures, respectively). 
\end{theorem}

\begin{proof}

We start by noting that as long the proof holds when $v = \frac{\eta^2}{T^2}$ and we show that the first gradient step satisfies the inductive hypothesis used in Theorem~\ref{thm:early-multi-L}, then the proof will be complete. We will prove that the first gradient step satisfies the inductive hypothesis with $v = \frac{\eta^2}{T^2}$.

We will condition on the event that the results of Lemmas~\ref{lem:gauss-init} and~\ref{lem:gauss-init-fro} holds. Then, our results will hold with probability at least \[
1 - (3L+1)\left[\exp \Big(-\frac{|V|^{1+2\xi}}{4}\Big) + \exp \Big(-\frac{|V|^{2+2\xi}}{4}\Big)\right]
\]
and we have that at initialization for all $1 \leq l \leq L$
\begin{equation}
    \norm{W_O}, \norm{V^{(l)}}, \norm{W^{(l)}}, \norm{P^{(l)}} \leq \frac{3 \eta^2}{T^2 |V|^{1/2}}
\end{equation}
and
\begin{equation}
    \norm{W_O}_F, \norm{V^{(l)}}_F, \norm{W^{(l)}}_F, \norm{P^{(l)}}_F \leq \frac{2 \eta^2}{T^2}
\end{equation}

Now, we start with a bound on the deviations in the activations from $X_i$ at each row. Since, each row of $A_i^{(l)}$ sums to 1, we have that
\begin{equation}
    \norm{h_i^{(l)}[t,:] - X_i[t,:]}
\leq
\norm{h_i^{(l-1)}[t,:] - X_i[t,:]}
+
\max_{q\le T}\norm{h_i^{(l-1)}[q,:]}\norm{V^{(l)}}.
\end{equation}
and
\begin{equation}
    \norm{h_i^{(l)}[t,:]} \leq (1 + \norm{V^{(l)}}) \norm{h_i^{(l-1)}[t,:]}
\end{equation}
Using that $\norm{V^{(l)}} \leq \frac{3 \eta^2}{T^2 |V|^{1/2}}$ and that $h_i^{(0)} = X_i$ which has unit norm, we have that across all layers and rows
\begin{equation}
    \norm{h_i^{(l)}[t,:]} \leq \left(1 + \frac{3 \eta^2}{T^2 |V|^{1/2}}\right)^L
\end{equation}
and as $\eta \leq \frac{1}{12L}$ and as $(1 + c/L)^L \leq 1 + 2c$ for $c \leq 1$, we have that
\begin{equation}
    \norm{h_i^{(l)}[t,:]} \leq 1 + \frac{\eta^{7/2}}{2}
\end{equation}
Using this and again that $h_i^{(0)} = X_i$, we have that for all rows and layers,
\begin{equation}
    \norm{h_i^{(l)}[t,:] - X_i[t,:]} \leq  L \left(1 + \frac{\eta^{7/2}}{2}\right) \frac{3 \eta^2}{T^2 |V|^{1/2}} \leq \frac{\eta^{7/2}}{3}
\end{equation}
again using that $s\eta \leq \frac{1}{12L}$. 

Let $A_0$ be the uniform causal attention with the $t$-th row having the first $t$ elements equal to $1/t$ and the remaining elements being 0. 
By our earlier bounds, we have then
\begin{equation}
\begin{split}
    \norm{\text{MASK}(h_i^{(l-1)}[t,:] W^{(l)} h_i^{(l-1)\top} + \text{DM}(P^{(l)})[t,:])} \\
    \leq \left(1 + \frac{\eta^{7/2}}{2}\right)^2 \frac{3 \eta^2}{T^2 |V|^{1/2}} \sqrt{T} + \frac{3\eta^2}{T^2 |V|^{1/2}} \leq \frac{6\eta^{7/2}}{\sqrt{T}}
\end{split}
\end{equation}
where we have use $\frac{1}{T} \leq \eta$ and $\eta \leq \frac{1}{12L}$. 
For the $t$th row, define
\[
    z_{i,t}^{(l)}
    :=
    \left(
    h_i^{(l-1)}[t,:]W^{(l)}h_i^{(l-1)\top}
    +
    \mathrm{DM}(P^{(l)})[t,:]
    \right)[:t].
\]
The preceding bound gives
\[
    \norm{z_{i,t}^{(l)}} \leq \frac{6\eta^{7/2}}{\sqrt T}.
\]
In particular,
\[
    \norm{z_{i,t}^{(l)}}_\infty
    \leq
    \frac{6\eta^{7/2}}{\sqrt T}
    \leq \frac12,
\]
under the assumptions on $\eta$ and $T$. Hence
\[
    \operatorname{range}(z_{i,t}^{(l)})\le 1.
\]
By Lemma~\ref{lem:softmax-jacobian}, applied to the softmax on the first $t$
coordinates,
\[
    \norm{(A_i^{(l)}-A_0)[t,:]}
    \leq
    \frac{e}{t}\norm{z_{i,t}^{(l)}}
    \leq
    \frac{6e\eta^{7/2}}{\sqrt T}.
\]
Consequently,
\[
    \norm{A_i^{(l)}-A_0}_F
    \leq
    6e\eta^{7/2}.
\]

From the deviation bounds on the activations and the initial bound on $W_O$,
\begin{equation}
    \|F_\theta(X_i)\|_F=\|h_i^{(L)}W_O\|_F
\leq \|h_i^{(L)}\|_F\|W_O\|
\leq (1+\frac{\eta^{7/2}}{2}) \sqrt{T} \frac{3\eta^2}{T^2 |V|^{1/2}} \leq 4\eta^4
\end{equation}

For each row $t$, let
\[
    f_{i,t}=F_\theta(X_i)^{[t,:]}.
\]
From the preceding bound,
\[
    \norm{f_{i,t}}\le \norm{F_\theta(X_i)}_F\le 4\eta^4.
\]
Thus, under the smallness assumptions on $\eta$,
\[
    \operatorname{range}(f_{i,t})\le 2\norm{f_{i,t}}\le 1.
\]
By Lemma~\ref{lem:softmax-jacobian}, applied row-wise,
\[
    \left\|
    \mathcal S(f_{i,t})-\frac1{|V|}\mathbf 1
    \right\|
    \leq
    \frac{e}{|V|}\norm{f_{i,t}}.
\]
Summing over rows gives
\[
    \norm{\mathcal S(F_\theta(X_i))-U_O}_F
    \leq
    \frac{e}{|V|}\norm{F_\theta(X_i)}_F
    \leq
    \frac{4e\eta^4}{|V|}
    \leq
    \frac{4\eta^4}{\sqrt{|V|}},
\]
where the last inequality uses $|V|\ge 500$.

Then following the argument in the zero-initialization case,
\begin{equation}
    \norm{\frac{\partial\mathcal L}{\partial W_O}+\bar{B}}_F
\leq \frac{4\eta^4}{\sqrt{|V|}} + \frac{\eta^{5/2}}{\sqrt{T}} \leq 2\eta^3
\end{equation}
Then, after the first step,
\begin{equation}
    \norm{W_O - \eta\bar{B}}_F \leq \frac{3\eta^2}{T^2 |V|^{1/2}} + 2\eta^4 \leq 3\eta^4 \leq 3\eta^2 
\end{equation}

From Lemma~\ref{lem:grad-multi},
\begin{equation}
    \frac{\partial\mathcal L}{\partial V^{(l)}} = -\frac{1}{NT}\sum_i h_i^{(l-1)\top}A_i^{(l)\top}R_i W_O^{\top}
\end{equation}
Considering the deviation from each of the terms, we have
\begin{equation}
    \norm{\frac{\partial \mathcal{L}}{\partial V^{(l)}}}_F \leq  \frac{15\eta^2}{T^2 |V|^{1/2}} \leq 15\eta^{9/2}
\end{equation}
Then, we have that after the first step
\[
\bigl\|V^{(l)}\bigr\|_F
\leq \frac{2\eta^2}{T^2} + 15\eta^{11/2} \leq 3\eta^4 \leq 12\eta^3
\]

From Lemma~\ref{lem:grad-multi},
\begin{equation}
    \frac{\partial\mathcal L}{\partial W^{(l)}} = -\frac{1}{NT}\sum_i h_i^{(l-1)\top} S_i^{(l)} h_i^{(l-1)}
\end{equation}
\begin{equation}
    \frac{\partial\mathcal L}{\partial P^{(l)}} 
= -\frac{1}{NT} \text{ein}_{tjk, jk\to t} \Bigl(D,\sum_i S_i^{(l)}\Bigr)
\end{equation}
with $S_i^{(l)}=\text{ein}\!\big(J_i^{(l)},\,G_i^{(l)}V^{(l)\top}h_i^{(l-1)\top}\big)$.
As in the zero-initialization case, we can use the bound on the attention pattern to control $J_i^{(l)}$. We have that for $t \geq 2$
    \begin{equation}
        \norm{J_{t, i} - J_t}_2 \leq \left(1 + \frac{2}{\sqrt{t}} \right)\norm{A_i[t, :] -  A_0[t, :]}_2 + \norm{A_i[t, :] -  A_0[t, :]}_2^2
    \end{equation}
    Then, as we have that
    \begin{equation}
        \norm{A_i[t, :] -  A_0[t, :]}_2 \leq \frac{6e\eta^{7/2}}{\sqrt{T}}
    \end{equation}
    it follows that
    \begin{equation}
        \norm{J_{t, i} - J_t}_2 \leq \frac{50\eta^{7/2}}{\sqrt{T}}
    \end{equation}
    Since $J_{1, i}$ is always all zeros, we can ignore this term and for $t \geq 2$, we have that,
    \begin{equation}
        \norm{J_{i} - J}_2 \leq 50\eta^{7/2}
    \end{equation}
    Now, we bound the norm of $G_i^{(l)}$. Starting from layer $L$, we have
    \begin{equation}
        \norm{G_i^{(L)}}_F \leq \sqrt{2T} \frac{3\eta^2}{T^2 |V|^{1/2}} \leq 5\eta^{4}
    \end{equation}
    Let $D_{G,l}$ denote a uniform upper bound on $\norm{G_i^{(l)}}_F$. Now, we consider the bound for each layer $l$,
    \begin{equation}
        \norm{G_i^{(l-1)}}_F \leq D_{G, l} + \frac{5}{2} D_{G, l} \frac{3\eta^2}{T^2 |V|^{1/2}} + 2\norm{S_i^{(l)}} \sqrt{2T} \frac{3\eta^2}{T^2 |V|^{1/2}} \leq (1 + 8\eta^{9/2})D_{G,l} + 9\eta^{4} \norm{S_i^{(l)}}
    \end{equation}
    Since we also need the norm of $S_i^{(l)}$ to iterate through layers, we bound the norm of $S_i^{(l)}$,
    \begin{equation}
        \norm{S_i^{(l)}}_F \leq \norm{J_i^{(l)}} \norm{G_i^{(l)}} \norm{V^{(l)}} \norm{h_i^{(l-1)}}_F \leq \frac{5}{2} D_{G,l} \frac{3\eta^2}{T^2 |V|^{1/2}} \sqrt{2T} \leq 8\eta^4 D_{G,l}
    \end{equation}
    Using this upper bound back in the recurrence for $D_{G, l}$, we have
    \begin{equation}
        \norm{G_i^{(l-1)}}_F \leq (1 + 8\eta^{9/2} + 72\eta^{8})D_{G,l}
    \end{equation}
    Then for all $l$, $D_{G, l} \leq 6\eta^4$ as $L \leq \frac{\sqrt{T}}{4}$ and $\eta \leq \min\left(\frac{1}{12L}, \frac{5}{8\sqrt{T}} \right)$. Then, we also have that for all $l$, $\norm{G_i^{(l)}} \leq 6\eta^4$.
    Then, we have that as $\norm{J_t}_2 = \frac{1}{t}$, $\norm{J_i^{(l)}}_2 \leq \frac{3}{2} + 50\eta^{7/2} \leq 2$, and $s \eta \leq \min\left(\frac{1}{12L}, \frac{5}{8\sqrt{T}}\right)$,
    \begin{equation}
        \norm{S_i^{(l)}}_F \leq 48\eta^8
    \end{equation}
This produces
\begin{equation}
    \norm{\frac{\partial\mathcal L}{\partial W^{(l)}} }_F 
\leq 48\eta^8
\end{equation}
and similarly,
\begin{equation}
    \norm{\frac{\partial\mathcal L}{\partial P^{(l)}} }_F 
\leq 48\eta^8
\end{equation}
and hence after the first step
\[
\bigl\|W^{(l)}\bigr\|_F
\leq 48\eta^9 + \frac{3\eta^2}{T^2} \leq 4\eta^5 T
\]
\[
\bigl\|P^{(l)}\bigr\|_F
\leq 48\eta^9 + \frac{3\eta^2}{T^2} \leq 4\eta^5 T
\]

\end{proof}

\end{document}